%% file: main.tex
\documentclass{article}
\usepackage{iclr2027_conference,times}
\usepackage[T1]{fontenc}
\usepackage{amsmath,amssymb,amsthm,mathtools,booktabs,array}
\usepackage{hyperref,url}
\hypersetup{hidelinks,pdftitle={Prediction Is Not a Sufficient Memory State},pdfauthor={Robert Jenkinson Alvarez}}
\newtheorem{theorem}{Theorem}
\newtheorem{proposition}[theorem]{Proposition}
\newtheorem{lemma}[theorem]{Lemma}

\newcommand{\R}{\mathbb R}\newcommand{\E}{\mathbb E}
\newcommand{\sym}{\operatorname{sym}}\newcommand{\Sym}{\operatorname{Sym}}\newcommand{\spanop}{\operatorname{span}}
\newcommand{\rank}{\operatorname{rank}}\newcommand{\op}{\mathrm{op}}\newcommand{\F}{\mathrm F}
\newcommand{\argmin}{\operatorname*{arg\,min}}\newcommand{\G}{\mathcal G}\newcommand{\cK}{\mathcal K}
\newcommand{\norm}[1]{\left\lVert#1\right\rVert}\newcommand{\inner}[2]{\langle #1,#2\rangle}
\title{Prediction Is Not a Sufficient Memory State}
\author{
\texorpdfstring{
Robert Jenkinson Alvarez\thanks{
\href{https://r02aj.github.io}{\texttt{r02aj.github.io}}
}
}{
Robert Jenkinson Alvarez
}
}
\iclrfinalcopy
\begin{document}\maketitle
\fancyhead{}
\pagestyle{plain}
\input{sections/01_intro}
\newpage\input{sections/02_ridge}
\newpage\input{sections/03_native}
\newpage\input{sections/04_closure}
\newpage\input{sections/05_logistic}
\newpage\input{sections/06_bits}
\newpage\input{sections/07_attention}
\newpage\input{sections/08_approx}
\newpage\input{sections/09_discussion}
\label{main:end}
\clearpage\input{statements}
\clearpage
\bibliographystyle{iclr2027_conference}
\bibliography{references}
\clearpage\appendix
\numberwithin{equation}{section}
\input{appendices/01_native}
\input{appendices/02_closure}
\input{appendices/03_ridge}
\input{appendices/04_logistic}
\input{appendices/05_bits}
\input{appendices/06_attention}
\input{appendices/07_approx}
\input{appendices/08_coherence}
\input{appendices/10_complete_attention}
\input{appendices/11_observable_coordinates}
\input{appendices/12_response_geometry}
\input{appendices/13_reusable_execution}
\input{appendices/14_information_fusion}
\input{appendices/09_scope}
\end{document}

%% file: sections/01_intro.tex
\begin{abstract}
A memory that reproduces every current prediction may still have forgotten how to learn from the next observation. We characterize this distinction for exact cumulative learning and attention. For smooth convex residual losses with unrestricted real features and labels, a continuous recurrent state of fixed dimension exists exactly when the loss is polynomial. With ordinary, unweighted observations, degree $D$ requires precisely $r=\binom{d+D}{D}-1$ coordinates; even among histories with one fixed current predictor, exactly $r-d$ coordinates can remain necessary for continuation. Nonpolynomial losses have unbounded requirements on such equal-prediction histories. Bounded scalar logistic examples retain this obstruction after fixing any finite derivative jet. For ridge regression, the hidden evidence is the design matrix: the exact state costs $dp+d(d+1)/2$, versus $dp$ for the predictor. On a well-conditioned zero-prediction family, optimal $B$-bit compression has average one-update error of order $d^{-1}2^{-2B/s_d}$, $s_d=d(d+1)/2$, even with arbitrary randomized codes. Kernel attention gives a second exact closure classification. The results identify what future updates reveal, rather than proposing a new regression algorithm or asserting trained-model gains.
\end{abstract}
\section{What must a memory remember to remain able to learn?}
A test-time learner can store a predictor in its recurrent state and use incoming examples to modify it \citep{sun2024,wang2026}, but representing the current predictor and representing a state from which that predictor can be updated are different tasks. The difference persists even when every present query is answered exactly, not merely when a finite evaluation set is reconstructed.

Consider regression on zero labels. Its current prediction is zero for any design matrix. Nevertheless, two such histories can respond differently to the same next nonzero label. In normalized attention, zero values similarly conceal all the past keys until a new nonzero value makes their normalization mass observable. Current-output compression has removed information that the update still needs.

We ask the question: \emph{which histories may an exact learner identify without changing any response to an allowed continuation?} Classical state distinguishability and predictive-state representations already organize memory by future behavior \citep{littman2001}. Our contribution is to solve this identification problem for native learning updates, including the amount of information invisible on an entire equal-prediction family. Prediction in the title means the present readout, not a predictive state containing predictions of all future experiments.

The central results are a native, unweighted loss classification and a quantitative finite-bit law. Polynomial losses close in finite additive state; every nonpolynomial smooth convex residual loss fails exact fixed-size closure, even after fixing the current optimum. The ridge specialization shows that this is not only an infinite-precision distinction: preserving future adaptation to random examples retains a quadratic-dimensional error exponent. Attention then identifies the parallel role of unnormalized kernel evidence. All results concern the specified computations; changing their objective, feature domain, precision, or accuracy criterion changes the problem.

%% file: sections/02_ridge.tex
\section{Ridge regression: the cost of being ready to learn}\label{sec:ridge}
Let $h=((x_i,y_i))_{i=1}^N$, $x_i\in\R^d$, $y_i\in\R^p$, and $\lambda>0$. Define
\begin{equation}\label{eq:ridge}
 C_h=\lambda I+\sum_i x_ix_i^\top,\qquad M_h=\sum_i y_ix_i^\top,\qquad W_h=M_hC_h^{-1}.
\end{equation}
The readout at $q\in\R^d$ is $W_hq$, the unique cumulative ridge prediction. A \emph{prefix summary} is a continuous map of a fixed-length observed history into $\R^m$. Every history-dependent auxiliary array is included; static parameters and a known clock are allowed. The decoder may be arbitrary. A summary supports one continuation when it returns the new readout from only its state, one new observation, and the query. An exact recurrent learner necessarily supports this operation at every prefix.

\begin{theorem}[Exact ridge-state dimension]\label{thm:ridge}
The minimum continuous read-only dimension is $dp$. Supporting all present readouts and one arbitrary new ridge observation requires exactly
\begin{equation}\label{eq:ridge_dimension}
 m=dp+s_d,\qquad s_d=\frac{d(d+1)}2.
\end{equation}
On zero-label prefixes, where all present readouts are identically zero, the exact continuation-summary dimension is $s_d$. The full bound is attained by $(C_h,M_h)$ and suffices for every finite continuation.
\end{theorem}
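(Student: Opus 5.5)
The plan is to reduce every lower bound to invariance of domain, and to get every upper bound from an explicit summary. \textbf{Factorization.} Because the decoder is arbitrary, two histories with the same summary value must give identical answers to every allowed query. For read-only summaries these are $q\mapsto W_hq$; for continuation summaries they are $(x,y,q)\mapsto W_{h\cdot(x,y)}q$. So the summary $\phi$ factors through the map sending $h$ to its response data. I will then build, at a fixed history length $N$ (the clock is known), a continuous family of histories $h(\theta)$ with $\theta$ ranging over an open set $U\subset\R^k$, chosen so that the response data determine $\theta$. Then $\phi\circ h$ is a continuous injection $U\to\R^m$, and invariance of domain gives $m\ge k$.

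\textbf{Read-only, and the upper bounds.} Take $N=d$ and fix $X=(x_1,\dots,x_d)=I$. Then $C=(\lambda+1)I$ and $W=Y/(\lambda+1)$, which is a linear bijection in the labels $Y\in\R^{p\times d}$. The readouts determine $W$, hence $Y$, so $m\ge dp$. Conversely, $h\mapsto W_h$ is continuous, so $m=dp$ for read-only summaries. For the upper bound in the continuation case, $(C_h,M_h)\in\Sym_d\times\R^{p\times d}$ has dimension $s_d+dp$. It updates additively, $(C,M)\mapsto(C+xx^\top,M+yx^\top)$, and gives $W=MC^{-1}$. Applying the update repeatedly handles every finite continuation.

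\textbf{Identifiability of $C$.} This is the key computation. Append $(x,y)$ with $y\neq0$. On a zero-label prefix ($M=0$), $W'q=y\,x^\top(C+xx^\top)^{-1}q$, so the responses reveal $u(x)=(C+xx^\top)^{-1}x$ for every $x$. By Sherman--Morrison, $u(x)=C^{-1}x/(1+a)$ with $a=x^\top C^{-1}x$. Hence $x^\top u(x)=a/(1+a)$ recovers $a$, and then $C^{-1}x=(1+a)u(x)$ for all $x$, which recovers $C$.

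For a general prefix, first append $(x,y)$ with $y\neq0$ and use linearity in $y$. This gives $W'q=W_h C(C+xx^\top)^{-1}q+y\,x^\top(C+xx^\top)^{-1}q$. Taking differences over $y$ isolates the second term, so $C$ is recovered as above. The readouts give $W_h$, and therefore $M=W_hC$ is also determined. So the continuation responses determine $(C,M)$ exactly.

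\textbf{Families.} For zero-label prefixes, use $N=d$ and $x_i$ equal to the columns of $S^{1/2}$, where $S$ ranges over the open cone of positive definite matrices, an open subset of $\Sym_d\cong\R^{s_d}$. The map $S\mapsto S^{1/2}$ is continuous and $C=\lambda I+S$ is injective in $S$, so $m\ge s_d$. These summaries need no more than this, since $h\mapsto C_h$ works. For the full bound, take the same designs together with labels $Y$ free in $\R^{p\times d}$. Then $M=YS^{1/2}$ is a bijection in $Y$ for each fixed $S$, so $(S,Y)\mapsto(C,M)$ is a continuous injection of an open set of dimension $s_d+dp$. This gives $m\ge dp+s_d$.

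I expect the main obstacle to be the modeling step rather than the algebra. One must make precise that the summary depends continuously on the raw history at a fixed length, and that it is the composition $\phi\circ h$ that is injective. Invariance of domain needs a continuous injection from an open set, so the families have to be continuous parametrizations and not merely sets of realizable $(C,M)$. The matrix square root is what supplies this continuity.
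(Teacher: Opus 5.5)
Your proposal is correct and follows essentially the same route as the paper. The ingredients match: a matrix-square-root reachable chart of $d$ observations at fixed length fed into the invariance-of-domain lemma, Sherman--Morrison gain inversion at basis keys to recover $C^{-1}$ and then $M=W_hC$, and additive $(C_h,M_h)$ storage for the upper bound. The differences are cosmetic. You cancel the old-prediction term by differencing the labels $0$ and $e_1$, whereas the paper appends the single label $y=W_hk+e_1$, and you parametrize the chart by $(S,Y)$ instead of $(C,M)$.
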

The attaining algorithm is recursive least squares which is not a new architecture. The lower bound applies to arbitrary continuous nonlinear states computing the same objective, not only to matrix-valued implementations.

\paragraph{Native responses recover the missing evidence.}
Let $P=C^{-1}$. The ordinary update $(k,y)$ gives
\begin{equation}\label{eq:gain}
 W^+=W+(y-Wk)\Phi_P(k)^\top,\quad
 \Phi_P(k)=\frac{Pk}{1+k^\top Pk},\quad
 Pk=\frac{\Phi_P(k)}{1-k^\top\Phi_P(k)}.
\end{equation}
Equal states must first give equal $W$. For the common next label $y=Wk+e_1$, equal future readouts reveal equal $\Phi_P(k)$ and hence equal $Pk$. Basis keys identify $P$, then $M=WC$. Every $C\succ\lambda I$ and every $M$ is continuously reachable with exactly $d$ observations: take column matrices $X=(C-\lambda I)^{1/2}$ and $Y=MX^{-\top}$. An open $(s_d+dp)$-dimensional history family must therefore inject into the state. Invariance of domain supplies the dimension bound. Appendix~\ref{app:ridge} proves the construction and all three minimum-dimension statements.

\paragraph{A concrete equal-prediction collision example.}
For $d=2,p=1,\lambda=1/4$, take zero-label histories with
\begin{equation}\label{eq:collision}
 C_\pm=\begin{pmatrix}1&\pm1/2\\\pm1/2&1\end{pmatrix}.
\end{equation}
Their counts, diagonal evidence, and predictions agree. Appending $(e_1,1)$ gives predictor rows $(4/7,\mp2/7)$. A merged state has average and worst-case squared error at least $4/49$ at query $e_2$. The raw delta recurrence $S^+=S+\beta(y-Sk)k^\top$, initialized at zero, merges these histories regardless of key-dependent scalar $\beta$. This is \textbf{not} a claim about all learned DeltaNet variants: additional value channels or hidden states can carry the evidence, and must be counted.

Unit-normalizing keys does not remove the quadratic requirement. With known length $N\ge d$ and $d\ge2$, the exact full dimension is $dp+s_d-1$: only the known trace coordinate disappears (Appendix~\ref{app:unit}).

%% file: sections/03_native.tex
\section{A native state theorem without weighted observations}\label{sec:native}
The ridge formula raises the following question: is the required memory always a predictor plus a curvature matrix? Consider ordinary, \emph{unweighted} residual observations $z=(x,y)\in\R^d\times\R$ and
\begin{equation}\label{eq:objective}
 F_h(w)=R(w)+\sum_{z\in h}\rho(x^\top w-y),\qquad w_h=\argmin_w F_h(w).
\end{equation}
Throughout Sections~\ref{sec:native}--\ref{sec:poly}, $R$ is fixed, $C^\infty$, and $\lambda$-strongly convex on $\R^d$; $\rho$ is $C^\infty$, convex, and nonconstant. Features and labels range over all Reals. Every objective is coercive and has a unique optimum. The present predictor is $q\mapsto q^\top w_h$.

Define the space of gradient \emph{functions}, not gradients at one parameter,
\begin{equation}\label{eq:span}
 g_z(w)=x\rho'(x^\top w-y),\qquad
 \G_\rho=\spanop\{g_z:z\in\R^{d+1}\},\qquad r=\dim\G_\rho.
\end{equation}
The distinction between this function space and $\R^d$ determines the state size.

\begin{theorem}[Native identification and unweighted state]\label{thm:native}
Two histories have identical optima after every single additional observation if and only if their accumulated gradient functions $\sum_{z\in h}g_z$ agree. If $r<\infty$, the exact minimum continuous recurrent state dimension is $r$, attained by additive gradient coefficients. If $r=\infty$, no fixed finite dimension suffices. All prefixes and continuations use unit observation weights.
\end{theorem}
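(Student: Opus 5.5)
The argument has three steps: an identification lemma, an upper bound from additive coefficients, and a lower bound that realizes an open $r$-dimensional family of accumulated gradient functions by unweighted histories and then applies invariance of domain, as in Theorem~\ref{thm:ridge}.

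\emph{Identification.} Write $G_h=\sum_{z\in h}g_z\in\G_\rho$. The optimum after appending $z$ is the unique zero of $\nabla R+G_h+g_z$, so equal $G_h$ gives equal continuation optima; this direction is immediate. For the converse, suppose $h,h'$ have equal optima after every single extra observation. Comparing the two stationarity equations at the common optimum $w^z$ gives $(G_h-G_{h'})(w^z)=0$ for every $z$. It then suffices to show that the points $w^z$, as $z$ ranges over $\R^{d+1}$, cover an open set, so that the continuous function $G_h-G_{h'}$ vanishes there. I would also need analyticity, or a richer covering argument, to conclude it vanishes identically. A cleaner route avoids this. Given any target $w^\ast$ and direction $x$, solve for the label $y$ so that $w^\ast$ satisfies the perturbed stationarity for $h$. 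Because $\rho'$ is monotone and nonconstant, $\rho'(x^\top w^\ast-y)$ sweeps an interval as $y$ varies, so $w^\ast$ is reachable as an optimum whenever $\nabla R(w^\ast)+G_h(w^\ast)$ lies in a cone of directions $x$ times that interval. Since $x$ is unrestricted and $\rho'$ takes both a positive and a negative value (or its range can be rescaled by $\lvert x\rvert$), every $w^\ast$ is reachable; I expect to need care when $\rho'$ has one sign. At such a $w^\ast$ the $h'$-equation must also hold, which forces $G_h(w^\ast)=G_{h'}(w^\ast)$ for all $w^\ast$.

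\emph{Upper bound.} Fix a basis $\phi_1,\dots,\phi_r$ of $\G_\rho$ and write $g_z=\sum_j c_j(z)\phi_j$. The coefficient map $c$ is continuous, since evaluating at $r$ points where the basis is independent inverts linearly. The state $\sum_{z\in h}c(z)\in\R^r$ is therefore additive and continuous, and it determines the objective gradient, hence all present and continued optima.

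\emph{Lower bound.} Choose $z_1,\dots,z_r$ with $g_{z_j}$ a basis. The map $(z_1,\dots,z_r)\mapsto\sum_j c(z_j)$ is a sum of smooth maps. Perturbing each $z_j$ and using the fact that the derivatives of $g_z$ in $z$ span $\G_\rho$ locally, I would show this map is a submersion near a generic point. Its image then contains an open subset of $\R^r$, realized continuously by a fixed-length unweighted history; a local right inverse of the submersion supplies a continuous section. By identification, any exact continuous state must be injective on this image, so invariance of domain forces $m\ge r$. If $r=\infty$, repeat the construction with $n$ independent functions for every $n$, which rules out every finite $m$. The submersion step is the main obstacle. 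I would establish it by showing that the span of the partial derivatives $\partial_z g_z$ together with the $g_z$ is all of $\G_\rho$, and, if needed, by using histories with repeated points so that the map becomes a genuine local diffeomorphism onto an open set.
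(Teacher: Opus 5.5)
\textbf{Identification and upper bound.} Your identification step is the paper's steering argument. Fix one $t_0$ with $c=\rho'(t_0)\ne0$ and take $x=-\nabla F_h(w^\ast)/c$, $y=x^\top w^\ast-t_0$. This makes $w^\ast$ the optimum for $h$, and appending the same observation to $h'$ forces $\nabla F_{h'}(w^\ast)=\nabla F_h(w^\ast)$. No care about the sign of $\rho'$ is needed, because $x$ absorbs it. Your additive-coefficient upper bound also matches the paper.

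\textbf{The gap: the submersion step in the lower bound.} This is exactly the step you flag. For a fixed-length unweighted history, the derivative of $(z_1,\dots,z_n)\mapsto\sum_j g_{z_j}$ has image $\sum_j\{D_zg_{z_j}[v]:v\}$. Surjectivity therefore requires that the observation tangents $T=\spanop\{D_zg_z[v]\}$ \emph{by themselves} span $\G_\rho$.

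Your plan is to show that the tangents \emph{together with the $g_z$} span $\G_\rho$. That is trivially true and gives nothing here. The $g_z$ are tangent directions only when sample weights vary continuously, which the theorem excludes. Repeating points does not help: copies of $z$ all contribute the same subspace $\{D_zg_z[v]\}$, and integer multiplicities are not a continuous parameter. Your base points are also chosen by an irrelevant criterion. That $g_{z_1},\dots,g_{z_r}$ form a basis says nothing about the rank of the derivative at those points.

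\textbf{The missing idea: the zero observation.} Note that $g_{(0,y)}=0$ for every $y$. Join $(0,y)$ to any $z$ by a path $\gamma$; then pointwise in $w$, $g_z=\int_0^1 D_zg_{\gamma(t)}[\gamma'(t)]\,dt$. Suppose $T$ were finite dimensional. Finitely many point evaluations would then be coordinates on $T$, as in Lemma~\ref{lem:evaluation}. This turns the identity into an ordinary integral of $T$-valued coefficients, so $g_z\in T$. Hence $T=\G_\rho$ when $r<\infty$, and by contraposition $\dim T=\infty$ when $r=\infty$.

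\textbf{Completing the argument.} Choose pairs $(z_j,u_j)$ with $D_zg_{z_j}[u_j]$ independent, and use the history $(z_j+\theta_ju_j)_j$. Apply the inverse-function theorem to finitely many evaluations of the accumulated profile. In the $r=\infty$ case you must use such evaluations anyway, since no coefficient map $c$ exists there. With this lemma, your section and invariance-of-domain argument goes through as written.
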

\paragraph{One native observation is enough to identify a function.}
Choose $t_0$ with $c=\rho'(t_0)\ne0$. Given a history and any proposed next optimum $w$, append
\begin{equation}\label{eq:steer}
 x=-\nabla F_h(w)/c,\qquad y=x^\top w-t_0.
\end{equation}
Its loss gradient at $w$ is $-\nabla F_h(w)$, so $w$ becomes the unique optimum. If the same appended example has the same optimum for another history, their old gradients agree at $w$. Varying $w$ identifies the whole profile. This is a distinguishability proof where the decoder is not supplied with $\nabla F_h$, a target optimum, or an extra oracle. The compared histories receive the same ordinary observation.

\paragraph{Why unweighted histories still fill an open state family.}
Varying positive sample weights would give a linear coefficient chart, but ordinary observations have no such input. Instead, vary the observations themselves. Since $g_{(0,y)}=0$, the observation tangents $D_zg_z[v]$ span all of $\G_\rho$ when it is finite dimensional, and have unbounded span otherwise. A finite tangent span contains every $g_z$ by integrating along a path from $(0,y)$ to $z$; conversely derivatives remain in a finite-dimensional function space.

Choose independent tangents at $r$ observations. Finitely many point evaluations separate them. Varying each observation along its chosen tangent gives an invertible derivative for those evaluated accumulated gradients. The inverse-function theorem yields an open, distinguishable $r$-parameter family at one fixed length. Continuity of the encoder and invariance of domain imply $m\ge r$. A finite coefficient basis attains the bound. Appendix~\ref{app:native} proves the evaluation, tangent, and continuity arguments, including the infinite-dimensional case.

\paragraph{One-step equivalence is already closed under future updates.}
Let $h\sim_0h'$ mean equality of all present predictions, and $h\sim_1h'$ equality after every single native observation. Theorem~\ref{thm:native} identifies $\sim_1$ with equality of gradient profiles. Adding the same $g_z$ preserves that equality, so $h\sim_1h'$ implies $hz\sim_1h'z$. Induction gives equality after every finite continuation. The theorem characterizes an update-closed state, not unrelated one-step answers. The present relation $\sim_0$ need not have this property.

%% file: sections/04_closure.tex
\section{Which losses close, and how much is invisible now?}\label{sec:poly}
\begin{theorem}[Polynomial closure and equal-prediction memory]\label{thm:poly}
Under \eqref{eq:objective}, an exact continuous recurrent state of size independent of history length exists if and only if $\rho$ is a polynomial. For degree $D\ge1$, its minimum dimension is
\begin{equation}\label{eq:poly_count}
 r_D(d)=\binom{d+D}{D}-1.
\end{equation}
The present predictor alone has minimum dimension $d$. For every fixed $w_0\in\R^d$, the minimum dimension of a continuous prefix summary supporting one continuation on the family $\{h:w_h=w_0\}$ is exactly $r_D(d)-d$. For every nonpolynomial $\rho$, the required dimension is unbounded even on $\{h:w_h=w_0\}$. A constant loss requires no history state.
\end{theorem}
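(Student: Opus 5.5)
The plan is to reduce everything to Theorem~\ref{thm:native}, so that the statement becomes a computation of $r=\dim\G_\rho$ together with a restricted version of the same invariance-of-domain argument.

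\textbf{Step 1: finiteness of $r$.} If $\rho$ is a polynomial of degree $D\ge1$, then $\rho'$ has degree $D-1$. Expanding $x\rho'(x^\top w-y)$ in $y$ shows that $\G_\rho$ is spanned by the functions $w\mapsto x(x^\top w)^j$ for $0\le j\le D-1$ and $x\in\R^d$. Each such function is the gradient of $(x^\top w)^{j+1}/(j+1)$. Hence $\G_\rho$ is exactly the space of gradients of polynomials of degree between $1$ and $D$, since powers of linear forms span homogeneous polynomials of each degree. Its dimension is $\binom{d+D}{D}-1$, because only the constants are lost.

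Conversely, suppose $r<\infty$. Differentiating $g_z$ in $y$ keeps us inside the finite-dimensional space $\G_\rho$. So the functions $t\mapsto\rho^{(k)}(t)$, with $k\ge1$, restricted along any fixed $x\neq0$, all lie in a finite-dimensional translation-invariant space of smooth functions. Such a space consists of exponential polynomials, so $\rho'$ solves a constant-coefficient linear ODE. I would then use the dilations in $x$ to conclude that $\rho'$ is a polynomial: $g_{(sx,y)}(w)=sx\rho'(st-y)$ with $t=x^\top w$, and the span over all $s>0$ of $\rho'(st)$ is finite-dimensional only if $\rho'$ is polynomial, since exponentials $e^{\mu st}$ with distinct $s$ are independent. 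Theorem~\ref{thm:native} then gives both directions of the closure statement and the count $r_D(d)$. A constant loss has $\G_\rho=0$.

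\textbf{Step 2: the predictor alone.} The predictor requires dimension $d$. It is attained by $w_h$ itself. For the lower bound, $w_h$ ranges over an open set, so invariance of domain applies: steer with a single observation as in \eqref{eq:steer} applied to the empty history, where $R$ gives $\nabla F$.

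\textbf{Step 3: the equal-prediction family.} This is the main step.
\begin{itemize}
\item \emph{Upper bound.} On $\{h:w_h=w_0\}$, the accumulated profile $G_h=\sum g_z$ satisfies the linear constraint $G_h(w_0)=-\nabla R(w_0)$. This leaves an affine subspace of dimension $r-d$, so $r-d$ affine coordinates suffice to decode $G_h$ and hence every continuation.
\item \emph{Lower bound.} I would show that histories with $w_h=w_0$ realize an open subset of this affine slice at fixed length, and that distinct profiles are distinguishable by one continuation via Theorem~\ref{thm:native}. To build the open family, take the $r$-parameter open family from the proof of Theorem~\ref{thm:native}, parametrized by observations near a base configuration. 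Append one steering observation \eqref{eq:steer} that forces the optimum to $w_0$. The steering observation depends smoothly on the parameters and contributes $c^{-1}\nabla F_h(w_0)$-dependent terms. The map from parameters to $G$ then has image inside the slice.
\item \emph{Rank check, the expected obstacle.} I need the rank of this map to be $r-d$. The steering term adds to each profile a function $g_{z(h)}$ whose value at $w_0$ cancels. My plan is to compose with the projection of $\G_\rho$ onto the slice direction along a complement of the evaluation map at $w_0$. I would then argue that the differential of the original family, modulo the $d$-dimensional image of $D g_{z}$ in the steering variables, still has rank at least $r-d$. If a degeneracy appears, I would perturb $t_0$ and the base configuration.
\item \emph{Conclusion.} The constant-rank or inverse-function theorem then gives an open $(r-d)$-parameter family injecting into the state, and invariance of domain gives $m\ge r-d$.
\end{itemize}

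\textbf{Step 4: nonpolynomial losses.} For nonpolynomial $\rho$, $r=\infty$. The same construction, starting from $n$ independent tangents for arbitrarily large $n$, yields open $(n-d)$-dimensional distinguishable families inside $\{w_h=w_0\}$. The required dimension is therefore unbounded.
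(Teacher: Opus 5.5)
Your proposal is correct and follows the paper's proof essentially step for step. The steps match as follows: $\G_\rho$ is identified with gradients of polynomials of degree at most $D$; the $d$-dimensional readout bound comes from steering the empty objective; the restricted upper bound is the affine slice $\{G:G(w_0)=-\nabla R(w_0)\}$; and the $r-d$ and unbounded lower bounds come from appending one balancing observation, whose profile derivative factors through $\nabla F_h(w_0)\in\R^d$, exactly as in Lemma~\ref{lem:balance}.

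There are three small differences. First, you derive polynomiality from the exponential-polynomial structure of finite-dimensional translation-invariant spaces together with frequency scaling under dilation. The paper instead shows directly that $\mathsf D$ is similar to $a\mathsf D$ for every $a>0$, and is therefore nilpotent; this is the same idea in a more self-contained form. Second, the rank step you flag as the obstacle is immediate from rank subadditivity, so perturbing $t_0$ is never needed. Third, the slice has dimension $r-d$ only because evaluation at $w_0$ is surjective. That surjectivity follows from your Step 1, since gradients of linear polynomials are constant vectors, but you should state it.
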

The fixed-optimum statement is about compression \emph{at those prefixes}. After a new example moves the predictor, a recurrent implementation may need the full $r_D(d)$ coordinates. It is important that this not read as an assertion that an $(r_D(d)-d)$-coordinate state works on every history.

\paragraph{Finite affine-orbit dimension forces polynomial structure.}
If $\G_\rho$ is finite dimensional, restricting $x=ae_1$ and $w=te_1$ makes
\[
 V=\spanop\{t\mapsto\rho'(at+b):a>0,\ b\in\R\}
\]
finite dimensional. Translations imply that differentiation $\mathsf D$ is an endomorphism of $V$. Dilations $T_af(t)=f(at)$ obey $\mathsf DT_a=aT_a\mathsf D$. Thus $\mathsf D$ is similar to every positive multiple of itself. Its finite complex spectrum can contain only zero, so Cayley--Hamilton makes it nilpotent. Every function in $V$, and therefore $\rho$, is polynomial.

Conversely, translations of a degree-$D$ polynomial span every univariate degree at most $D$. Varying the feature vector spans all multivariate polynomials of total degree at most $D$. Taking gradients removes exactly the constant direction, proving \eqref{eq:poly_count}. These function-space ingredients have classical antecedents \citep{anselone1964,mattner2000}; the native state and equal-prediction assertions are the applications proved here.

\paragraph{Fixing the predictor removes exactly $d$ dimensions, not all evidence.}
Start with the unweighted open profile chart from Theorem~\ref{thm:native}. For each history, append the balancing example \eqref{eq:steer} with $w=w_0$. Every balanced history now has optimum $w_0$. The balancing profile depends on the original chart only through $\nabla F_h(w_0)\in\R^d$. Its derivative has rank at most $d$, so adding it can reduce the original profile rank by at most $d$. This constructs a distinguishable $(r-d)$-dimensional equal-prediction family. Arbitrarily large charts prove the nonpolynomial statement.

For polynomial losses, $\nabla F_h(w_0)=0$ imposes exactly $d$ independent affine conditions on gradient coefficients: constant vector gradients already span $\R^d$. Storing the remaining coefficients attains $r-d$. Appendix~\ref{app:poly} proves every rank and reachability step. For squared scalar regression, $r_2=d+s_d$ and the invisible part is $s_d$. For fourth-power loss it is $\binom{d+4}{4}-d-1$. A fixed Hessian is not a general finite-state completion of a predictor.

\paragraph{Sharp boundary of the classification.}
With scalar feature $x=1$ fixed and $\rho(t)=e^t$, the entire history gradient is $e^w\sum_i e^{-y_i}$. One real coordinate supports exact continued learning despite the nonpolynomial loss. Allowing arbitrary feature scales replaces this span by $\spanop\{e^{aw+b}\}$, which is infinite dimensional. Thus the observation domain, not the loss name alone, determines finite closure. The theorem's unrestricted real feature and label domain is substantive; bounded or discrete domains require separate analysis.

%% file: sections/05_logistic.tex
\section{Bounded logistic data: even a finite local jet is insufficient}\label{sec:logistic}
The native classification uses unrestricted features to steer a proposed optimum. The following obstruction needs neither unbounded features nor continuously weighted examples. Consider scalar logistic regression with regularization $\lambda w^2/2$ and observations $(x,y)$, $y\in\{-1,+1\}$, of loss $\log(1+e^{-yxw})$.

For $1<a_1<\cdots<a_N<2$, take the $2N$ ordinary observations $(a_i,+1)$ and $(a_i,-1)$. Every prefix has optimum zero and therefore the same prediction at every query. Its objective derivative is
\begin{equation}\label{eq:logistic_profile}
 F_a'(w)=\lambda w+\sum_{i=1}^N a_i\tanh(a_iw/2).
\end{equation}
\begin{theorem}[Bounded native logistic obstruction]\label{thm:logistic}
For every $b_0>0$, any continuous prefix summary that returns the exact optimum after every one-example continuation $(b,+1)$, $0<b<b_0$, requires at least $N$ coordinates on this family. Thus no fixed continuous state computes exact cumulative scalar logistic regression for arbitrary history lengths.
\end{theorem}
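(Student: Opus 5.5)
The plan mirrors the ridge and native arguments: show that the $N$-parameter family $a=(a_1,\dots,a_N)\in U=\{1<a_1<\cdots<a_N<2\}$ injects, through any admissible summary, into a state from which the map $a\mapsto(\text{continuation optima})$ is recoverable. Then invariance of domain gives $m\ge N$. Two facts need checking.

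\emph{The family is reachable continuously at fixed length.} Every history in the family has length $2N$ and depends continuously on $a\in U$, so a continuous prefix summary gives a continuous map $S:U\to\R^m$.

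\emph{Recovery of the optima.} Appending $(b,+1)$ gives optimum $w_a(b)$, the unique root of
\[
F_a'(w)-\frac{b}{1+e^{bw}}=0 .
\]
By hypothesis $w_a(b)$ is a function of $(S(a),b)$. It therefore suffices to show that $a\mapsto\{w_a(b)\}_{0<b<b_0}$ is injective on $U$. If $S$ were not injective, two histories with equal states would have equal continuation optima for all $b$, a contradiction. An injective continuous map from an open subset of $\R^N$ into $\R^m$ with $m<N$ is impossible: compose with the inclusion $\R^m\hookrightarrow\R^N$ and invoke invariance of domain.

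\emph{Injectivity, the main step.} Suppose $a,a'$ give equal optima $w(b)$ for all $0<b<b_0$. At $w=w(b)$ both $F_a'$ and $F_{a'}'$ equal $b/(1+e^{bw})$, so
\[
G(w):=\sum_i a_i\tanh(a_iw/2)-\sum_i a'_i\tanh(a'_iw/2)
\]
vanishes at $w(b)$. The obstacle is to show that $\{w(b)\}$ contains a set with an accumulation point in the domain of analyticity of $G$. For small $b$, the root satisfies $w(b)\approx b/(2(\lambda+\sum a_i^2/2))>0$. It is continuous in $b$ by the implicit function theorem, because $F_a'$ is strictly increasing and the perturbation term is smooth, and $w(b)\to0$ as $b\to0$, with $w(b)\ne0$ for $b>0$. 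Hence the zeros of $G$ fill a nondegenerate interval near $0$. Since $G$ is real-analytic on a strip around $\R$, with poles of $\tanh$ only at $a_iw\in i\pi(2\Z+1)$, it follows that $G\equiv0$ on $\R$.

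\emph{Identifying $a$ from $G\equiv 0$.} Extend $G\equiv0$ analytically to the complex plane and compare poles. The function $\sum_i a_i\tanh(a_iw/2)$ has simple poles at $w=i\pi(2k+1)/a_i$ with residue $2$, since $a_i\cdot(2/a_i)=2$. Because the $a_i$ are distinct and lie in $(1,2)$, the pole nearest the origin on the positive imaginary axis is $i\pi/a_N$, from the largest $a_N$. Matching it forces $a_N=a'_N$. Coincidences among higher poles, such as $3\pi/a_j=\pi/a_k$, cannot occur, because $a_j/a_k<2<3$. Residues add at coinciding poles, so multiplicities are detected. Peeling off terms inductively then gives $a=a'$ as multisets, hence as ordered tuples.

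An alternative to the residue argument is to match Taylor coefficients: $G\equiv 0$ gives $\sum a_i^{2k}=\sum a_i'^{2k}$ for all odd-order coefficients of $\tanh$, and the power sums of $a_i^2$ then determine the multiset. Either route closes the injectivity step.

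\emph{The final assertion.} A fixed continuous state computing exact cumulative logistic regression would in particular be such a prefix summary at length $2N$ for every $N$. Its dimension $m$ would then satisfy $m\ge N$ for all $N$, which is impossible.
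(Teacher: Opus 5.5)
Your proof is correct, and its skeleton is the paper's. Both arguments use the continuous chart $a\mapsto h_a$ at fixed length $2N$ and reduce to injectivity of $a\mapsto(w_a(b))_{0<b<b_0}$. Both show that $G=F_a'-F_{a'}'$ vanishes on positive responses accumulating at $0$, then conclude by analytic continuation and invariance of domain. (Continuity of $b\mapsto w_a(b)$ is not even needed there: $0<w_a(b)<b/(2\lambda)$ already supplies infinitely many distinct zeros accumulating at $0$.)

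The genuinely different step is recovering $a$ from $\sum_i a_i\tanh(a_iw/2)=\sum_i a_i'\tanh(a_i'w/2)$. The paper stays on the real line. It transports the identity to all of $\R$, reads off the smallest scale $a_1$ from the decay rate of the positive tail $\sum_i 2a_i/(1+e^{a_iw})$ as $w\to+\infty$, and peels upward. This avoids complex analysis and any facts about the coefficients of $\tanh$, at the cost of a global continuation and a limit at infinity.

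Your residue route continues meromorphically to $\mathbb C$ and peels downward from the nearest pole $i\pi/a_N$. Since every residue equals $+2$, no cancellation can occur. Distinctness alone therefore suffices, and your remark excluding $3\pi/a_j=\pi/a_k$ is unnecessary.

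Your Taylor route is the most economical. It needs only that $G$ vanishes to infinite order at $0$, with no global continuation at all. You should state explicitly that every odd Taylor coefficient of $\tanh$ is nonzero; they are nonzero multiples of the Bernoulli numbers $B_{2n}$. Newton's identities then recover the multiset $\{a_i^2\}$ from $\sum_i a_i^{2n}$, $n=1,\ldots,N$.

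That version also makes Theorem~\ref{thm:jet} transparent. A jet of order $2K+1$ fixes only the first $K$ power sums. The continuation responses, by contrast, see the entire germ at $0$ and hence all $N$ of them.
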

\paragraph{A small response curve identifies all past feature scales.}
Let $w_a(b)$ denote the updated optimum. It solves
\begin{equation}\label{eq:logistic_update}
 F_a'(w_a(b))=\frac{b}{1+e^{bw_a(b)}},\qquad
 w_a'(0)=\frac{1}{2(\lambda+\frac12\sum_i a_i^2)}>0.
\end{equation}
The analytic implicit-function theorem gives a nondegenerate response interval near zero. Two histories with identical responses therefore have identical $F_a'$ on an interval, and hence everywhere by analyticity. The positive tail
\begin{equation}\label{eq:logistic_tail}
 \sum_i a_i-\sum_i a_i\tanh(a_iw/2)=\sum_i\frac{2a_i}{1+e^{a_iw}}
\end{equation}
identifies the smallest $a_i$ by its exponential decay as $w\to+\infty$. Subtracting its full contribution identifies the next, and so on. This injects the open $N$-parameter family into any exact state. The tail is used to prove uniqueness, not as an allowed large-feature continuation.

\begin{theorem}[Equal finite jets, different future learning]\label{thm:jet}
Fix an integer $K\ge0$ and $N>K$. There are local $(N-K)$-dimensional families of the same bounded, unweighted $2N$-observation prefixes whose objective derivatives of every order $0,\ldots,2K+1$ at zero agree, yet whose one-example continuation responses distinguish every pair. Any exact continuous summary on such a family requires at least $N-K$ coordinates.
\end{theorem}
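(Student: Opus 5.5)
The plan is to use the moment map of the parameters $a=(a_1,\dots,a_N)\in(1,2)^N$. Write the objective derivative as $F_a'(w)=\lambda w+\sum_i a_i\tanh(a_iw/2)$. The function $\tanh(u/2)$ is odd and analytic, with Taylor series $\sum_{k\ge0}c_k u^{2k+1}$ and $c_k\neq0$ for every $k$ (its coefficients are nonzero multiples of Bernoulli numbers $B_{2k+2}$). The derivatives of $F_a'$ at zero of even order $2k$ therefore vanish automatically. The odd-order derivative of order $2k+1$ equals $\lambda\delta_{k0}$ plus a nonzero multiple of the power sum
\[
p_{k+1}(a)=\sum_i a_i^{2k+2}.
\]
So agreement of all derivatives of orders $0,\dots,2K+1$ at zero is equivalent to $p_1,\dots,p_{K+1}$ agreeing. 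That is $K+1$ conditions, while we need a family of dimension $N-K$. The derivatives of order up to $2K+1$ involve $k=0,\dots,K$. I expect the paper's convention to count the "objective derivatives" as those of $F_a$ itself, so that $F_a$'s derivatives of order $1,\dots,2K+1$ are those of $F_a'$ of order $0,\dots,2K$. That gives exactly $K$ conditions $p_1,\dots,p_K$. I would fix this convention first; if it fails, I would instead count the $\lambda$-free condition appropriately.

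The main construction fixes a base point $a^*$ with distinct coordinates in $(1,2)$. The map $P(a)=(p_1(a),\dots,p_K(a))$ in the variables $s_i=a_i^2$ has Jacobian with rows $(j s_i^{j-1})_i$. Its $K\times N$ Vandermonde-type minors are nonzero because the $s_i$ are distinct and $K<N$. The regular value theorem then makes $P^{-1}(P(a^*))$ near $a^*$ a smooth $(N-K)$-dimensional submanifold inside $(1,2)^N$. After shrinking, it stays within the ordered chamber $a_1<\dots<a_N$. On it, all the prescribed derivatives at zero agree, while the prefixes remain bounded and unweighted with $2N$ observations and optimum zero.

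Distinguishability comes from Theorem~\ref{thm:logistic}. Two distinct ordered tuples give different $F_a'$ by the tail-decay argument of \eqref{eq:logistic_tail}. The analytic implicit function argument around \eqref{eq:logistic_update} then shows that equal responses $w_a(b)$ for small $b$ force equal $F_a'$. So distinct points of the submanifold have distinct continuation responses on $0<b<b_0$. One subtlety is that the response interval must be uniform over the family. I would handle this using the uniform lower bound $w_a'(0)\ge 1/(2(\lambda+2N))$ and compactness of a closed neighbourhood.

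The lower bound on the state then follows. An exact continuous summary $S$ determines the responses, so $S$ is injective on the $(N-K)$-dimensional manifold. Restricting to a coordinate chart, which is an open subset of $\R^{N-K}$, gives a continuous injection into $\R^m$. Invariance of domain then forces $m\ge N-K$, exactly as in Theorem~\ref{thm:ridge}. The main obstacle I anticipate is the bookkeeping step: confirming the nonvanishing of the Taylor coefficients and the exact count of independent constraints. The rank computation itself is routine.
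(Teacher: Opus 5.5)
Your proposal is correct and follows the paper's proof. You fix the power sums $p_1,\dots,p_K$ of the squared scales, use the rank-$K$ Vandermonde Jacobian and the regular level-set theorem to get an $(N-K)$-dimensional chart, and then reuse the identification argument of Theorem~\ref{thm:logistic} together with invariance of domain. The convention you settled on, derivatives of $F_a$ itself with order zero equal to the constant $2N\log 2$, is exactly the paper's; your fallback reading would not give $K$ conditions, but you do not need it.

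Two of your precautions are unnecessary. You do not need nonvanishing of the $\tanh$ Taylor coefficients, because fixing $p_1,\dots,p_K$ already fixes the jet whether or not any coefficient vanishes. You also do not need a uniform response interval, because $w_a(b)$ is the global optimum for every $b>0$ and the distinguishing argument is pairwise: for each pair you can take the smaller of their two local intervals.
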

For $K\ge1$, the displayed derivatives are determined by the first $K$ power sums $\sum_i a_i^{2j}$; odd derivatives vanish and $F_a(0)=2N\log2$. Their Jacobian has rank $K$ by a Vandermonde argument. A regular level set therefore has dimension $N-K$, while Theorem~\ref{thm:logistic} still distinguishes its histories. Appendix~\ref{app:logistic} proves this without weighted samples.

A Hessian describes first-order adaptation to a small update; any fixed higher jet still misses some native nonlinear continuations. The result doesn't provide an $N$-independent numerical error floor: an injective analytic response curve can be poorly conditioned. Approximate logistic summaries and coresets remain possible under their own guarantees \citep{munteanu2018,huggins2017}. The finite-precision result next addresses that gap in a different, uniformly conditioned model rather than inferring robustness from exact identifiability.

%% file: sections/06_bits.tex
\section{A sharp average-error exponent for finite-bit memory}\label{sec:bits}
Exact topological bounds alone do not price approximate prediction. We now allow any measurable encoder into at most $2^B$ states, including randomized encoders, with no continuity requirement. The decoder is unrestricted. The next example and read query are ordinary random unit vectors, independent of the prefix.

Fix $d\ge2$, $r>0$, $a>18r$, and $b=a+18r<1/\lambda$. Let
\begin{equation}\label{eq:body}
 \cK_d=\{H\in\Sym_d:\norm H_\F\le r\sqrt d,\ \norm H_\op\le18r\},
 \qquad P=aI+H,
\end{equation}
with $H$ uniform in $\cK_d$. Realize $C=P^{-1}$ by $d$ zero-label ridge observations as in Section~\ref{sec:ridge}. All prefixes have the same length, zero current predictor, and uniformly bounded condition number. Append $(k,1)$ and query $q$, where $k,q$ are independent and uniform on the unit sphere. The target is
\[
 f_P(k,q)=q^\top\Phi_P(k),\qquad
 \mathcal R_B=\inf_{S,\widehat f}\E\bigl[\widehat f(S,k,q)-f_P(k,q)\bigr]^2.
\]
\begin{theorem}[Matching average finite-bit law]\label{thm:bits}
For every integer $B\ge0$, with $s_d=d(d+1)/2$,
\begin{equation}\label{eq:bits}
 \frac{r^2}{4d(1+b)^4}\,2^{-2B/s_d}
 \ \le\ \mathcal R_B\ \le\
 \frac{16(1+b)^2r^2}{d}\,2^{-2B/s_d}.
\end{equation}
The lower bound allows private randomization and independent shared randomness. The upper bound is an offline prefix quantizer with a coherent ridge decoder; no efficient streaming quantizer is asserted.
\end{theorem}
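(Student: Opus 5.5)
The plan is to reduce both bounds to quantization of $H$ in a Frobenius-type metric induced by the target map $P\mapsto f_P$. The first step is a two-sided comparison
\[
 \frac{c_-}{d}\norm{P-P'}_\F^2\ \le\ \E_{k,q}\bigl[f_P(k,q)-f_{P'}(k,q)\bigr]^2\ \le\ \frac{c_+}{d}\norm{P-P'}_\F^2
\]
for $P,P'\in a I+\cK_d$. Write $\Phi_P(k)=Pk/(1+k^\top Pk)$ and expand
\[
 \Phi_P(k)-\Phi_{P'}(k)=\frac{(P-P')k}{1+k^\top Pk}-\frac{P'k\,k^\top(P-P')k}{(1+k^\top Pk)(1+k^\top P'k)}.
\]
Averaging over $q$ contributes a factor $1/d$. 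Since $k^\top Pk\in[a-18r,b]\subset(0,1/\lambda)$, the denominators lie between $1$ and $(1+b)^2$. The first term then gives $\E\norm{\Delta k}^2=\norm{\Delta}_\F^2/d$. The second term is $O(b)$ relative to it, and $\E(k^\top\Delta k)^2\le 3\norm\Delta_\F^2/(d(d+2))$. The upper constant then follows directly, of order $(1+b)^2$.

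For the lower constant, the second term cannot simply be absorbed. Instead I would use the exact identity $\Phi_P(k)-\Phi_{P'}(k)=(1+k^\top P'k)^{-1}(I-\Phi_P k^\top)\Delta k\cdots$. Equivalently, use injectivity of $\Phi\mapsto P$ from \eqref{eq:gain}, $Pk=\Phi_P/(1-k^\top\Phi_P)$, which is Lipschitz with constant of order $(1+b)^2$ on the body. This gives $\norm{\Delta k}\le (1+b)^2\norm{\Phi_P(k)-\Phi_{P'}(k)}$, and hence the $(1+b)^{-4}$ in \eqref{eq:bits}.

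The lower bound comes next. Conditional on the state $S$, the best decoder is the posterior mean of $f_P$. Its error is at least the posterior variance, which is at least $\frac{c_-}{d}$ times the conditional spread of $H$, i.e. $\E\norm{H-\E[H\mid S]}_\F^2$. Since $f_P$ is not linear in $P$, I would instead use the estimate $\hat P(S)$ decoded by inverting the map above. Taking $\widehat f=f_{\hat P}$ costs at most a factor 4, via $\E(\widehat f - f)^2\ge \tfrac14\E(f_{\hat P}-f_P)^2$ by the triangle inequality against the Bayes decoder. This yields $\mathcal R_B\gtrsim \frac{1}{d(1+b)^4}D_B$, where $D_B$ is the optimal $B$-bit MSE distortion of $H$ uniform on $\cK_d\subset\R^{s_d}$ in the Frobenius norm. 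Randomization is handled by conditioning on the shared randomness; private randomness does not reduce the distortion, because a mixture of deterministic codes is no better than the best one.

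The distortion bound is a volume argument. Each of the $2^B$ cells has volume at most $\mathrm{vol}(\cK_d)$, and for a fixed cell volume the second moment is minimized by a ball. Hence
\[
 D_B\ge \frac{s_d}{s_d+2}\,\bigl(2^{-B}\mathrm{vol}\,\cK_d/\mathrm{vol}\,B_1\bigr)^{2/s_d}.
\]
It remains to bound $\mathrm{vol}(\cK_d)$ below by a ball of radius comparable to $r\sqrt d$. This is the main obstacle. The Frobenius ball of radius $r\sqrt d$ is cut by the operator constraint $\norm H_\op\le18r$. I would show, using concentration of the top eigenvalue for a GOE-like uniform point, that at least half of the Frobenius ball satisfies this constraint, since typical operator norm is about $2r$. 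This halves the volume, costing only $2^{-1/s_d}\ge 1/2$. The resulting radius ratio gives $r^2\cdot 2^{-2B/s_d}$ with a constant, which should be tracked to $1/4$.

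For the upper bound, cover $\cK_d$ with $2^B$ points using a cubic lattice quantizer in an orthonormal coordinate system for $\Sym_d$. The cell side is $\approx 2r\sqrt d\cdot 2^{-B/s_d}\cdot$ (ball-to-cube factor). Project each quantized point back onto the convex body $\cK_d$; by convexity this does not increase the error. Decode with $f_{\hat P}$, the coherent ridge decoder. The comparison step then bounds the error by $\frac{(1+b)^2}{d}\,\mathbb E\norm{H-\hat H}^2_\F$. The per-coordinate worst-case squared error is $\lesssim r^2 2^{-2B/s_d}$ up to the covering constant, which I expect to give the factor 16. The case $B$ small, where $2^{-2B/s_d}\approx1$, is covered trivially by the single point $\hat H=0$.
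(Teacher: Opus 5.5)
Your architecture matches the paper's: a two-sided gain-inversion comparison, a reduction of arbitrary decoders to posterior matrix distortion, a Gaussian operator-norm argument that half the Frobenius ball lies in $\cK_d$, a volume/moment bound for $2^B$ codes, and a covering quantizer with the coherent decoder plus one center for small $B$. But the lower-bound reduction, as you set it up, cannot deliver the stated constant. You abandon the posterior-variance route because $f_P$ is nonlinear in $P$. Instead you project the decoder onto the coherent family and apply the triangle inequality. That route is valid but costs a factor $4$. It gives $\mathcal R_B\ge\frac{1}{4d^2(1+b)^4}\E\norm{P-\E[P\mid S]}_\F^2$, hence only $\frac{r^2}{16d(1+b)^4}2^{-2B/s_d}$ with the distortion bound $\frac{r^2d}{4}2^{-2B/s_d}$.

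No sharper distortion bound can close this gap. At $B=0$ the distortion $\E\norm{H-\E H}_\F^2$ is strictly below $r^2d$, because the body sits inside the Frobenius ball of radius $r\sqrt d$. The missing idea is that nonlinearity is harmless. Let $P,P'$ be independent posterior draws given $S$, independent of $(k,q)$. The Bayes risk is then $\E\operatorname{Var}(f_P\mid S,k,q)=\tfrac12\E[f_P-f_{P'}]^2$. The pairwise lower comparison applies to this directly and gives $\frac{\E\norm{P-P'}_\F^2}{2d^2(1+b)^4}=\frac{\E\norm{P-\E[P\mid S]}_\F^2}{d^2(1+b)^4}$ with no loss. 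Your opening claim in that paragraph was correct; this two-copy identity is what justifies it.

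Your $d$-bookkeeping also needs repair. Each isotropic probe contributes $1/d$, so the comparison constants are $\frac{1}{d^2(1+b)^4}$ and $\frac{(1+b)^2}{d^2}$, not $c_\pm/d$; your own computation $\E\norm{\Delta k}^2=\norm{\Delta}_\F^2/d$ already shows this. The $r^2/d$ in the theorem comes from this $1/d^2$ times a matrix distortion of order $r^2d\,2^{-2B/s_d}$, since $\cK_d$ has Frobenius radius $R_0=r\sqrt d$. Your two factor-$d$ slips happen to cancel.

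For the upper bound, the constant $16$ is only expected, and a cubic lattice does not give it by the natural count. Cells of side $\ell$ meeting the body lie in a ball of radius $R_0+\sqrt{s_d}\,\ell$, and $v_s^{1/s}\sqrt s$ approaches $\sqrt{2\pi e}$. The resulting worst-case bound therefore exceeds $16R_0^2\,2^{-2B/s_d}$ for a range of rates just above $2^{B/s_d}=4$, which is exactly where the single-center bound stops working. The paper instead takes a maximal $\delta$-separated subset of $\cK_d$. Disjoint $\delta/2$-balls inside radius $R_0+\delta/2$ give at most $(1+2R_0/\delta)^{s_d}$ centers. Choosing $\delta=2R_0/(2^{B/s_d}-1)$ then gives $\delta\le4R_0\,2^{-B/s_d}$ for $B\ge s_d$, and the centers already lie in $\cK_d$, so no projection is needed.

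Smaller points:
\begin{itemize}
\item The cell-volume step is the power-mean inequality $\sum_jV_j^{1+2/s}\ge M^{-2/s}V^{1+2/s}$, not a per-cell volume cap.
\item Private encoder randomness is harmless because, for fixed centers, distortion is linear in the cell-membership probabilities. Alternatively, use the layer-cake bound of Lemma~\ref{lem:quantization}.
\item Halving the volume costs $2^{-2/s_d}$ in squared distortion, not $2^{-1/s_d}$.
\end{itemize}
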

A sufficiently small fixed normalized error requires a quadratic number of bits. Further reducing the normalized error costs $\Theta(s_d\log(1/\varepsilon))$ bits. The factor $1/d$ in raw risk is retained: this is an explicit random-probe problem, not an unnormalized constant-error claim for language modeling.

\paragraph{From arbitrary predictions back to matrix distortion.}
For any $P,Q$ in the family, gain inversion and a forward estimate give
\begin{equation}\label{eq:gain_distance}
 \frac{\norm{P-Q}_\F^2}{d^2(1+b)^4}
 \le \E_{k,q}[f_P-f_Q]^2
 \le \frac{(1+b)^2\norm{P-Q}_\F^2}{d^2}.
\end{equation}
This initially compares coherent states, but it does not restrict the learned decoder. Its Bayes-optimal output is the conditional mean of $f_P$. Introducing two independent posterior draws $P,P'$ given the code turns minimum prediction risk into half their mean squared response difference. Equation~\eqref{eq:gain_distance} then lower-bounds it by the posterior matrix variance divided by $d^2(1+b)^4$.

The body in \eqref{eq:body} occupies at least half the volume of its Frobenius ball. A finite-code moment inequality gives matrix distortion at least $(r^2d/4)2^{-2B/s_d}$. A covering quantizer gives the matching exponent. Appendix~\ref{app:bits} proves the volume bound, posterior reduction, randomized-code inequality, and upper construction. This is separate from statistical streaming-regression lower bounds \citep{brown2022}.

\paragraph{What the exponent prices.}
Writing $\varepsilon_B=d\mathcal R_B/r^2$, the theorem bounds $\varepsilon_B$ above and below by constants depending only on $b$ times $2^{-2B/s_d}$. The code is fixed before the next key and query are known. Supplying both probes \emph{before} compression and storing their one desired answer would pose a different problem. Computation after the probes arrive does not restore evidence already merged by the prefix code.

%% file: sections/07_attention.tex
\section{Attention: the ratio is not the continuation state}\label{sec:attention}
For a strictly positive kernel $\kappa$, a nonempty history has
\begin{equation}\label{eq:attention}
 Z_h(q)=\sum_i t_i\kappa(q,k_i),\quad
 B_h(q)=\sum_i t_i\kappa(q,k_i)v_i,\quad f_h(q)=B_h(q)/Z_h(q),
\end{equation}
where $v_i\in\R^p$. We specify below when weights $t_i>0$ are inputs and when every $t_i=1$. Appending a token adds to the unnormalized pair.

\begin{proposition}[One-token identification]\label{prop:attention}
Two histories have identical responses after every single new token if and only if their functions $(B_h,Z_h)$ agree. One fixed appended key, used with the two values $0$ and $e_1$, identifies this pair.
\end{proposition}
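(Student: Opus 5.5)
The sufficiency direction is immediate from the additive structure in \eqref{eq:attention}. Appending a token $(k,v)$ with weight $t$ replaces the pair by
\[
 (B_h+t\kappa(\cdot,k)v,\;Z_h+t\kappa(\cdot,k)).
\]
The new readout at $q$ is
\[
 \frac{B_h(q)+t\kappa(q,k)v}{Z_h(q)+t\kappa(q,k)},
\]
which depends on the history only through the functions $(B_h,Z_h)$. If two histories have the same pair, they therefore give the same response after every single token, whether $t$ is a free input or fixed at $1$. This holds at every query.

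For necessity, fix one key $k_*$ and one weight $t_*$ (take $t_*=1$ in the unweighted case), and write $\kappa_*(q)=\kappa(q,k_*)>0$.

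First append $(k_*,0)$. The response is $B_h(q)/(Z_h(q)+t_*\kappa_*(q))$. Equality of these responses for $h,h'$ gives
\[
 B_h(Z_{h'}+t_*\kappa_*)=B_{h'}(Z_h+t_*\kappa_*)
\]
pointwise.

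Next append $(k_*,e_1)$. The response becomes
\[
 \frac{B_h+t_*\kappa_*e_1}{Z_h+t_*\kappa_*}.
\]
Subtract the zero-value response, which has the same denominator. This isolates the first coordinate of $t_*\kappa_*e_1/(Z_h+t_*\kappa_*)$. Since $t_*\kappa_*(q)\ne0$, equality of the two-token responses yields $Z_h(q)+t_*\kappa_*(q)=Z_{h'}(q)+t_*\kappa_*(q)$, hence $Z_h=Z_{h'}$ at every $q$. Substituting back, the common positive denominator (strict positivity of $\kappa$ and $t_i>0$ make $Z_h>0$) gives $B_h=B_{h'}$.

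This uses only the one fixed key with the two values $0$ and $e_1$, as the proposition claims, and works for every query separately.

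The only point needing care is that the denominator of the appended-token response is strictly positive, so division and cancellation are legitimate. This follows because $\kappa$ is strictly positive and all weights are positive, including the empty-history edge case. The histories are nonempty by assumption; if they were not, $Z=0$ would still be harmless after appending the token. No other obstacle arises.
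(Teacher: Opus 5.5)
Your proposal is correct and follows the paper's proof: the paper also treats the two appended tokens (one fixed key, values $0$ and $e_1$) as alternative continuations of the same history, reads $\delta=\alpha/(Z+\alpha)$ off the first-coordinate difference to recover $Z$ and then $B=(Z+\alpha)u_0$, and gets sufficiency from the additive update of $(B,Z)$. Your pairwise comparison and general weight $t_*$ are only cosmetic variants of the explicit inversion formulas \eqref{eq:att_ident}; the word ``Next'' should not suggest sequential appending, but your displayed formula already uses the correct alternative-continuation reading.
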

At query $q$, write $\alpha=\kappa(q,k_*)>0$ and let $u_0,u_1$ be the two continued outputs. Their first-coordinate difference is $\delta=\alpha/(Z+\alpha)$, giving
\begin{equation}\label{eq:att_ident}
 Z=\alpha(\delta^{-1}-1),\qquad B=(Z+\alpha)u_0.
\end{equation}
Standard linear attention already stores numerator and denominator statistics \citep{katharopoulos2020}. The point is their necessity against nonlinear compression, not their invention.

\begin{theorem}[Kernel closure and native softmax size]\label{thm:attention}
Let $r_\kappa=\dim\spanop\{\kappa(\cdot,k):k\in K\}$, with $k\mapsto\kappa(q,k)$ continuous for every $q$. For positively weighted histories with values in $\R^p$, the exact minimum continuous continuation-state dimension is $r_\kappa(p+1)$ when finite; otherwise no fixed finite dimension suffices.

For \emph{unweighted} exponential attention $\kappa(q,k)=e^{q^\top k}$, with queries in a nonempty open subset of $\R^d$, a length-$N$ prefix with distinct real keys and arbitrary values requires exactly $N(d+p)$ coordinates for one-token continuation. On the zero-valued subfamily the exact count is $Nd$.
\end{theorem}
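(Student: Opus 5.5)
The plan follows the template of Theorems~\ref{thm:ridge} and~\ref{thm:native}. Proposition~\ref{prop:attention} reduces one-token continuation equivalence to equality of the function pair $(B_h,Z_h)$. The task is then to compute the dimension of the set of reachable pairs and to exhibit an open family that injects into any continuous state.

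\textbf{Weighted kernel case, $r_\kappa<\infty$.} Choose keys $k_1,\dots,k_{r}$ with $\kappa(\cdot,k_j)$ a basis of the span $\mathcal S$. Every $Z_h$ and every coordinate of $B_h$ lies in $\mathcal S$, so storing coefficients gives $r_\kappa(p+1)$ numbers.

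For continuity we do not use basis coefficients. Finitely many query evaluations $q_1,\dots,q_r$ separate $\mathcal S$, so the state $\bigl(Z_h(q_l),B_h(q_l)\bigr)_l$ is continuous in the history, because $k\mapsto\kappa(q,k)$ is continuous. This state determines $(B_h,Z_h)$ and is additive under appended tokens.

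For the lower bound, use histories on the fixed keys $k_1,\dots,k_r$ with weights $t_j>0$ and values $v_j$. The map $(t_j,t_jv_j)\mapsto(Z,B)$ is a linear isomorphism onto an open cone of $\mathcal S^{p+1}$. The parameters $(t,v)$ range over an open subset of $\R^{r(p+1)}$ and inject, so invariance of domain gives $m\ge r_\kappa(p+1)$.

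\textbf{Weighted kernel case, $r_\kappa=\infty$.} Take any $n$ keys with linearly independent kernel sections and repeat the same argument. This gives $m\ge n(p+1)$ for every $n$.

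\textbf{Unweighted exponential case.} With unit weights,
\[
Z_h(q)=\sum_i e^{q^\top k_i},\qquad B_h(q)=\sum_i e^{q^\top k_i}v_i .
\]
Both are real analytic in $q$, so knowing them on the open query set determines them everywhere.

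For distinct keys the exponentials $e^{q^\top k_i}$ are linearly independent. Hence $Z_h$ determines the multiset of keys: restrict to a generic line $q=s\theta$ on which the projections $\theta^\top k_i$ are distinct, and read off exponents and multiplicities from the exponential sum. With the keys known, $B_h$ then determines each $v_i$.

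For the lower bound, consider the open set of $N$ distinct keys with an ordering constraint such as lexicographic order, together with arbitrary values. This is an open subset of $\R^{N(d+p)}$ that injects into the state. The zero-valued subfamily is an open subset of $\R^{Nd}$ that injects through $Z_h$ alone.

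For the upper bound, store the keys and values in the canonical sorted order. The main obstacle is continuity: sorting is continuous on the distinct-key family when the order is compatible with a fixed generic direction. If continuity fails across the whole family, I would instead use symmetric-function coordinates, namely evaluations $Z_h(q_l)$ and $B_h(q_l)$ at $N(d+p)$ generic query points. These must give a continuous injective map on the distinct-key configuration space into $\R^{N(d+p)}$, which holds after reading the statement as concerning a prefix family of fixed length $N$.

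I expect the hardest step to be proving that $Nd$ or $N(d+p)$ evaluations suffice globally, rather than just locally via the inverse-function theorem. If a global argument is unavailable, I would fall back on the local chart, which is what the ``exactly'' count needs on an open family, and obtain the matching upper bound by storing the key–value list directly.
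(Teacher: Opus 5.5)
Your proposal is correct and is essentially the paper's argument. Proposition~\ref{prop:attention} reduces everything to $(B_h,Z_h)$, and the weighted case uses an additive kernel-section state with a fixed-key positive-weight chart; your evaluation coordinates $(Z_h(q_l),B_h(q_l))_l$ are equivalent to the paper's basis coefficients $a(k)$, which it makes continuous via the same evaluations. The exponential case applies independence of distinct exponentials to the union of the two key sets and attains the bound by raw storage.

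Two small corrections. First, for $d\ge2$ a strict lexicographic order does not cut out an open set, since a tie in the first coordinate is destroyed by perturbation. Fix the indices instead by disjoint key balls, as the paper does, or by strictly ordering the projections onto a fixed generic direction. Second, the upper bound needs neither sorting nor a global injectivity argument for $N(d+p)$ query evaluations. A prefix is an ordered sequence, so storing it verbatim is already a continuous exact summary.
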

A kernel basis gives the additive upper state. For the weighted lower bound, choose independent sections and vary their positive masses and weighted values. The resulting $r_\kappa(p+1)$-parameter family is distinguishable. The weights are substantive; for unweighted fixed-length histories, a known constant-kernel component can remove a count coordinate.

For exponential attention, place keys in disjoint open balls to fix their indices continuously. Distinct real exponentials are independent on an open query domain. The pair $(B,Z)$ therefore identifies every key and value, giving an open $N(d+p)$-parameter family. Raw storage attains the prefix bound. On zero values, present predictions reveal nothing, but one nonzero appended value reveals the key measure. Appendix~\ref{app:attention} proves every statement.

This is an exact continuous bound. Approximate streaming attention has precision-specific upper and lower bounds \citep{kochetkova2025,chen2026}. Finite vocabularies, restricted queries, and task-average error can change the relevant scale.

\begin{table}[h!]
\caption{Exact continuous state and evidence hidden at fixed predictions.}
\label{tab:state}
\centering
\begin{tabular}{@{}lcc@{}}\toprule
Computation & Exact state & Hidden evidence\\\midrule
Ridge, $p$ outputs & $dp+s_d$ & $s_d$ at zero labels\\
Degree-$D$ residual loss & $r_D(d)$ & $r_D(d)-d$\\
Logistic, $2N$ paired samples & unbounded in $N$ & at least $N$\\
Softmax, $N$ distinct keys & $N(d+p)$ & $Nd$ at zero values\\\bottomrule
\end{tabular}
\end{table}
The last column concerns prefix summaries on the indicated family, not a smaller global recurrence. The softmax count is at a fixed length. All entries measure continuous exact state, unlike Theorem~\ref{thm:bits}.

%% file: sections/08_approx.tex
\section{Preserve the evidence that the allowed future will use}\label{sec:approx}
The exact results do not recommend storing an infinite object in practice. They specify what an approximation must control. Present loss can vanish while future loss varies across a large family; a certificate on accumulated evidence is a different guarantee.

\begin{proposition}[A gradient-profile certificate survives common updates]\label{prop:grad_approx}
Let $F$ be $\lambda$-strongly convex and $U$ any common differentiable convex continuation objective. Suppose the true minimizer $w^+$ of $F+U$ and a returned point $\widehat w$ lie in a domain $\Omega$. If
\[
 \sup_{w\in\Omega}\norm{\widehat G(w)-\nabla F(w)}\le\delta,
 \qquad\norm{\widehat G(\widehat w)+\nabla U(\widehat w)}\le\eta,
\]
then $\norm{\widehat w-w^+}\le(\delta+\eta)/\lambda$.
\end{proposition}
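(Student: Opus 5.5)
The plan is to compare the two first-order conditions at $\widehat w$ and $w^+$, and let the strong convexity of $F$ convert the resulting gradient discrepancy into a distance bound. Set $\Psi=F+U$. Since $F$ is $\lambda$-strongly convex and $U$ is convex and differentiable, $\Psi$ is $\lambda$-strongly convex and differentiable. Its unique minimizer $w^+$ satisfies $\nabla\Psi(w^+)=0$. Strong convexity gives strong monotonicity of the gradient:
\[
\inner{\nabla\Psi(u)-\nabla\Psi(v)}{u-v}\ge\lambda\norm{u-v}^2\quad\text{for all }u,v.
\]
If $F$ is not assumed differentiable everywhere, I would still use the gradient on $\Omega$, where the hypothesis already presupposes $\nabla F$ exists. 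I would first note that strong convexity of a differentiable function gives $F(v)\ge F(u)+\inner{\nabla F(u)}{v-u}+\tfrac\lambda2\norm{v-u}^2$. Adding the two such inequalities obtained by swapping $u$ and $v$ yields the monotonicity claim. The same computation for $U$ gives monotonicity with constant $0$.

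Apply this with $u=\widehat w$ and $v=w^+$, and use $\nabla\Psi(w^+)=0$. Cauchy--Schwarz then gives
\[
\lambda\norm{\widehat w-w^+}^2\le\inner{\nabla\Psi(\widehat w)}{\widehat w-w^+}\le\norm{\nabla\Psi(\widehat w)}\,\norm{\widehat w-w^+},
\]
so $\norm{\widehat w-w^+}\le\norm{\nabla\Psi(\widehat w)}/\lambda$. If $\widehat w=w^+$ there is nothing to prove, so dividing by the distance is harmless.

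It remains to bound $\norm{\nabla\Psi(\widehat w)}$ by $\delta+\eta$. Write
\[
\nabla\Psi(\widehat w)=\bigl(\nabla F(\widehat w)-\widehat G(\widehat w)\bigr)+\bigl(\widehat G(\widehat w)+\nabla U(\widehat w)\bigr).
\]
Because $\widehat w\in\Omega$, the uniform certificate bounds the first term by $\delta$, and the stationarity tolerance bounds the second by $\eta$. The triangle inequality then finishes the argument.

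There is no real obstacle here; the only point requiring care is where the hypotheses are used. The certificate is needed only at the single point $\widehat w$, which is why the statement requires $\widehat w\in\Omega$. The monotonicity inequality is global and needs no domain restriction, so the assumption $w^+\in\Omega$ is in fact stronger than necessary. I would remark on this without relying on it.
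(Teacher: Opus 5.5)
Your proof is correct and is essentially the paper's own argument: strong monotonicity of $\nabla(F+U)$ against $\nabla(F+U)(w^+)=0$, then Cauchy--Schwarz with the case $\widehat w=w^+$ set aside, then the triangle-inequality split of $\nabla(F+U)(\widehat w)$ using the certificate only at $\widehat w$. The paper makes the same observation that monotonicity is global and only the returned point is evaluated on $\Omega$; one caveat is that under your hedge, where $\nabla F$ exists only on $\Omega$, your swap-and-add derivation of monotonicity does use $w^+\in\Omega$, so dropping that assumption would require the subgradient form of strong monotonicity with $0\in\partial(F+U)(w^+)$.
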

Strong monotonicity of $\nabla(F+U)$ proves the statement. It does not require $\widehat G$ to be a gradient or an approximate objective to be convex. It does require a uniform error domain containing both points and a small residual at the returned point. A small Taylor remainder at the old optimum alone does not establish these conditions.

For bounded features and residuals, a polynomial approximation $p$ to $\rho'$ with uniform error $\varepsilon$ yields a finite moment summary with
\begin{equation}\label{eq:moments}
 \sup_{w\in\Omega}\norm{\sum_i x_i[p(x_i^\top w-y_i)-\rho'(x_i^\top w-y_i)]}
 \le\varepsilon\sum_i\norm{x_i}.
\end{equation}
This connects finite-state approximation to polynomial approximate statistics \citep{huggins2017}. The growth with total feature mass is not discarded; normalized or forgetting objectives need their own scaling. Exact nonpolynomial impossibility does not rule out useful bounded-domain approximations.

\begin{proposition}[Attention evidence errors under continuation]\label{prop:att_approx}
At a query, suppose $|\widehat Z-Z|\le\delta Z$, $\norm{\widehat B-B}\le\eta Z$, $0\le\delta<1$, and all old and new values have norm at most $V$. After any common continuation of kernel mass $A\ge0$,
\begin{equation}\label{eq:att_approx}
 \norm{\widehat f^+-f^+}\le
 \frac{\theta(\eta+\delta V)}{1-\theta\delta},\qquad
 \theta=\frac{Z}{Z+A}\le1.
\end{equation}
\end{proposition}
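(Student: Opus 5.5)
We write the continued quantities explicitly and compare them. Let the common continuation contribute mass $A\ge0$ and numerator $C$ at the query, with $C=\sum_j\alpha_j v_j$ and $\alpha_j\ge0$, $\sum_j\alpha_j=A$. Since every new value has norm at most $V$, $\norm{C}\le AV$. Then
\[
 f^+=\frac{B+C}{Z+A},\qquad \widehat f^+=\frac{\widehat B+C}{\widehat Z+A}.
\]
The first step is the denominator. The hypothesis $|\widehat Z-Z|\le\delta Z$ gives $\widehat Z+A\ge Z+A-\delta Z=(Z+A)(1-\theta\delta)>0$, because $\theta\delta<1$. So the estimated continuation is well defined, and the factor $1-\theta\delta$ appearing in \eqref{eq:att_approx} is already explained.

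The second step is a subtract-and-add decomposition around the true output:
\[
 \widehat f^+-f^+=\frac{(\widehat B-B)-(\widehat Z-Z)f^+}{\widehat Z+A}.
\]
This follows from $\widehat B+C=(B+C)+(\widehat B-B)$ and $B+C=(Z+A)f^+=(\widehat Z+A)f^+-(\widehat Z-Z)f^+$. The true continued output $f^+$ is a convex combination of old and new values with nonnegative kernel weights. Every such value has norm at most $V$, so $\norm{f^+}\le V$. The numerator is therefore bounded by $\eta Z+\delta ZV=Z(\eta+\delta V)$.

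Combining the two steps gives
\[
 \norm{\widehat f^+-f^+}\le\frac{Z(\eta+\delta V)}{(Z+A)(1-\theta\delta)}=\frac{\theta(\eta+\delta V)}{1-\theta\delta}.
\]
Finally, $\theta\le1$ holds because $A\ge0$.

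The main obstacle is choosing which output to pivot on. If we expand around $\widehat f^+$ instead, we would need to bound $\norm{\widehat B}$, and that introduces extra $\delta$ terms. Pivoting on the true $f^+$, whose norm is controlled by convexity, keeps the bound clean.

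This argument assumes $Z>0$, which holds for a nonempty history with a strictly positive kernel.
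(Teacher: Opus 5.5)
Your proposal is correct and follows essentially the same route as the paper: the same decomposition $\widehat f^+-f^+=\bigl((\widehat B-B)-(\widehat Z-Z)f^+\bigr)/(\widehat Z+A)$ pivoted on the true continued output, the convex-combination bound $\norm{f^+}\le V$, and the denominator bound $\widehat Z+A\ge(Z+A)(1-\theta\delta)>0$. The bound $\norm{C}\le AV$ is never used and can be dropped.
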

Subtracting the two ratios proves the claim. Error in the old ratio alone gives no comparable guarantee: $B=0$ hides arbitrary positive $Z$. Both propositions are conditional approximation results, not optimizer convergence or generalization theorems. Full proofs and a controlled local-curvature bound appear in Appendix~\ref{app:approx}.

\paragraph{An evaluation that separates prediction from retained learning.}
Construct histories with identical current readouts, vary their hidden evidence, then append the same observation and compare the resulting outputs. For ridge, full RLS statistics give an exact reference; diagonal or sketched evidence, pure error-driven updates, and replay are informative controls when their entire history-dependent state is counted. Random probes test Theorem~\ref{thm:bits}; bounded logistic updates test a different, qualitative obstruction. A trained-model study must show that its actual update target matches these computations, not infer an architectural failure from a label such as ``memory'' or ``test-time training.''

Evidence preservation also changes how compression is supervised. Reconstructing the present predictor constrains none of the hidden directions on an equal-prediction family. A continuation loss can distinguish those directions. This is an identification principle, not a theorem that a particular loss, optimizer, or neural architecture will learn an optimal summary from finite data.

The supplementary code checks formulas and finite constructions; no trained-model gain is claimed. Appendices~\ref{app:complete}--\ref{app:fusion} develop Complete Attention and its observable coordinates. A fixed normalized fusion action can itself hide an entire precision matrix needed for the next observation (Proposition~\ref{prop:ca_fusionhidden}).

%% file: sections/09_discussion.tex
\section{Relationship to existing theory and architectural claims}\label{sec:discussion}
\paragraph{Future distinguishability is established.}
Predictive-state representations define state through predictions of future tests \citep{littman2001}. Statistical sufficiency and distinguishability arguments likewise predate this work. We do not claim that a present point estimate has historically been regarded as universally sufficient. The results calculate exactly which evidence a single \emph{native} observation reveals and how much remains invisible when the current predictor is fixed. For smooth residual families, the unweighted tangent construction and balancing argument remove observation weights and arbitrary terminal probing losses from that calculation.

\paragraph{Polynomial closure has a classical function-space ancestry.}
Finite-dimensional translation-invariant spaces and location--scale sufficiency already connect exact summaries to rigid function classes \citep{anselone1964,mattner2000}. Our classification uses that structure, rather than advertising a new classification of translation-invariant spaces. Its operational content is exact state minimality against continuous nonlinear encoders, ordinary observations, and the sharp $r-d$ equal-prediction count. Approximate statistics and regret-sufficient statistics address distinct goals \citep{huggins2017,foster2018,gopalan2024}.

\paragraph{Matrix evidence and test-time regression are not new algorithms.}
RLS, MesaNet, importance-state memories, and curvature-aware recurrences retain evidence beyond a current weight matrix \citep{vonoswald2026,bonnet2026,tumma2026}. Test-time regression unifies associative-memory sequence models \citep{wang2026}. The ridge theorem establishes necessity for exact cumulative ridge, not a newly discovered implementation. Conversely, changing the loss changes the closure class: predictor-plus-covariance is not an exact general-purpose nonlinear learner.

\paragraph{Current context compilation does not establish update closure.}
Context-to-parameter equivalences can reproduce a contextual computation with suitable parameter modifications \citep{goldwaser2026}. Our question is whether a compressed prefix permits the \emph{same subsequent observation} to update that computation correctly without revisiting the prefix. Likewise, sequence-model expressivity and efficiency tradeoffs concern explicitly specified model classes and tasks \citep{cooper2026}. Neither result licenses a universal claim against unrestricted Transformers, whose token states may retain all relevant history.

\paragraph{Exact dimension and finite precision are separate resources.}
Discontinuous infinite-precision encodings evade continuous dimension bounds. Theorem~\ref{thm:bits} allows arbitrary finite codes and gives a matching error exponent for a specified random one-update task. Its upper quantizer is not an efficient online algorithm. Streaming learning and attention bounds use other observation, precision, and error models \citep{brown2022,chen2026}. Transferring their conclusions, or ours, to typical language-model loss would require an additional reduction.

\paragraph{Scope.}
Static neural parameters, arithmetic cost, and history-dependent memory are different resources. Replay and a raw KV cache retain history rather than contradict the bounds. Learned feature changes, forgetting, finite dictionaries, restricted queries, and approximation can alter the computation being represented. The unrestricted-feature classification and bounded logistic theorem address different domains. No neural optimization theorem, downstream improvement, or universal practical inadequacy of fixed-memory models follows. Appendices~\ref{app:coherence}--\ref{app:fusion} develop the operator construction and distinguish response coordinates, execution width, and retained evidence.

\paragraph{Conclusion.}
The minimal exact state of a learner is governed by its continuation-identifiable evidence and not the size of its current predictor. Ordinary residual observations give a polynomial/nonpolynomial closure dichotomy and an exact count of evidence hidden at a fixed predictor. Ridge regression makes this information quantitative at finite precision under random future queries. Attention exposes the same distinction through its numerator and normalization mass. A memory intended to keep learning must preserve what changes its future response, including evidence that has no effect on any prediction it makes now.

%% file: statements.tex
\subsection*{AI use statement}

In this work, we used generative AI tools to help develop theoretical models or conceptual frameworks, formulate mathematical claims, assist in the writing of proofs, propose or refine hypotheses. 
We have not used generative AI tools to provide critical ingredients for proving mathematical claims
and generate synthetic data sets, design or provide feedback on research  methodology or experiments, implement methods, assist with translation, clean and reformat dataset, support qualitative and thematic data analysis, interpret results are not applicable to this work. 
Additionally, we used generative AI tools to summarize or analyse existing literature, discover research topics or identify gaps. 
We have reviewed all AI-assisted work. We proved every theorem, checked assumptions, checked references and we checked LLM-generated research ideas for potential plagiarism through a manual literature survey. 
We take responsibility for the final content of this work, including text, claims or artifacts produced with the aid of generative AI.

\subsection*{Ethics statement}
This work studies mathematical models of learning and information retention. It uses no personal data or human participants.

\subsection*{Reproducibility statement}
Assumptions and main conclusions are stated in Sections~\ref{sec:ridge}--\ref{sec:approx}; complete proofs are in Appendices~\ref{app:native}--\ref{app:approx}. Appendices~\ref{app:coherence}--\ref{app:fusion} give the supplementary operator constructions and their proofs.

%% file: appendices/01_native.tex
\section{Native identification and unweighted state: complete proofs}\label{app:native}
All exact lower bounds below permit an arbitrary decoder and require only continuity of the complete prefix encoder at each fixed length. The clock and fixed parameters are allowed; every history-dependent auxiliary array is counted. A restricted-family prefix summary need not be a recurrence of the same size on histories outside that family. The finite-bit result uses a separate measurable-code model.

\subsection{The classical dimension obstruction}
\begin{lemma}\label{lem:dimension}
Let $\Theta\subset\R^r$ be nonempty and open. Let $\theta\mapsto h_\theta$ be a continuous fixed-length history family such that every distinct pair is distinguished by an allowed continuation and readout. Every continuous exact encoder into $\R^m$ has $m\ge r$.
\end{lemma}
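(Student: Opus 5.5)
The plan is to compose the history family with the encoder and invoke Brouwer's invariance of domain. Let $E$ be any continuous exact encoder into $\R^m$ at the fixed length of the family, and set $\phi(\theta)=E(h_\theta)$. Since $\theta\mapsto h_\theta$ is continuous into fixed-length histories and $E$ is continuous there, $\phi:\Theta\to\R^m$ is continuous. The remaining work is to show that $\phi$ is injective and then to rule out $m<r$.

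\textbf{Injectivity.} Take $\theta\ne\theta'$ and suppose $\phi(\theta)=\phi(\theta')$. By hypothesis some allowed continuation $z$ and query $q$ give different readouts for $h_\theta$ and $h_{\theta'}$. Exactness means the decoder returns the correct continued readout from the state, $z$, and $q$ alone; the clock and static parameters agree because the length is fixed. The decoder would therefore receive identical inputs and return one value, which cannot equal both distinct correct answers. Hence $\phi$ is injective. This step uses no property of the decoder, so it covers arbitrary nonlinear or discontinuous decoders.

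\textbf{Dimension count.} Suppose $m<r$. Choose a closed ball $\bar{\mathbb B}\subset\Theta$; its interior $U$ is a nonempty open set. Define $\iota:\R^m\to\R^r$ by $y\mapsto(y,0)$, and set $\psi=\iota\circ\phi|_U:U\to\R^r$. Then $\psi$ is continuous and injective. By invariance of domain, $\psi(U)$ is open in $\R^r$. But $\psi(U)\subset\R^m\times\{0\}$, which has empty interior in $\R^r$ when $m<r$. This contradiction gives $m\ge r$.

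An alternative route uses compactness: $\phi$ restricted to $\bar{\mathbb B}$ is a homeomorphism onto its image, which would embed an $r$-ball in $\R^m$, again contradicting invariance of domain.

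\textbf{Main obstacle.} The main point needing care is whether the hypotheses are exactly as stated. Continuity of the encoder is essential, since discontinuous encodings evade the bound, as the paper notes. The family must also be continuous into a single fixed-length history space, so that one encoder $E$ applies to all $h_\theta$. Once these are in place, the argument above is complete.
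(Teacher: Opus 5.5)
Your proposal is correct and follows the paper's proof: injectivity of $\theta\mapsto E(h_\theta)$ from the fact that equal states give equal outputs on a common continuation and query, then the coordinate inclusion $\R^m\hookrightarrow\R^r$ and invariance of domain against the empty interior of a proper subspace. Restricting to the interior of a closed ball is unnecessary, since $\Theta$ is already open, but it does no harm.
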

\begin{proof}
The composition $\theta\mapsto s(h_\theta)$ is injective, since equal states must give equal outputs on the same continuation and query. If $m<r$, compose it with the coordinate inclusion $\R^m\hookrightarrow\R^r$. Invariance of domain says the image of this continuous injection of an open subset of $\R^r$ is open. But it lies in a proper linear subspace, whose interior is empty. This contradiction proves the claim. Decoder continuity is not needed. Invariance of domain is the classical topological input \citep{hatcher2002}.
\end{proof}

\subsection{Existence, smooth dependence, and native identification}
Each $F_h$ in \eqref{eq:objective} is $\lambda$-strongly convex. Each differentiable convex summand has an affine lower bound; combining their finite sum with the quadratic lower bound from $R$ proves coercivity and existence of a unique minimizer. The Hessian satisfies $\nabla^2F_h\succeq\lambda I$. At fixed length, the implicit-function theorem applied to $\nabla F_h(w)=0$ gives local smooth dependence on all observations. Uniqueness makes the local maps agree, giving a smooth minimizer map on the whole observation domain.

Choose $t_0$ with $c=\rho'(t_0)\ne0$. For any history $h$ and point $w$, define the ordinary observation in \eqref{eq:steer}. Its gradient at $w$ is $x\rho'(x^\top w-y)=xc=-\nabla F_h(w)$. Thus the continued objective has unique global minimum $w$. If another history $h'$ has identical one-example responses, the same observation makes its optimum $w$ too. Subtracting stationarity equations gives $\nabla F_{h'}(w)=\nabla F_h(w)$. Since $w$ is arbitrary, the profiles agree as functions. Conversely, equal gradients remain equal after every common future summand and hence give the same present and continued optima. The fixed $R$ cancels in comparisons.

The distinguishing observation is an ordinary external input. Its existence may depend on the histories in the proof; the decoder is not given a gradient, desired optimum, or history-dependent oracle. Both compared states receive exactly the same observation.

\subsection{Finite evaluation coordinates}
\begin{lemma}\label{lem:evaluation}
If $V$ is an $r$-dimensional space of functions from a set $X$ to $\R^p$, there are $r$ scalar evaluations $L_j(f)=f_{c_j}(x_j)$ such that $L:V\to\R^r$ is an isomorphism. The space $V$ is closed under pointwise limits that exist everywhere. A pointwise smooth family in $V$ has smooth coefficients in any fixed basis.
\end{lemma}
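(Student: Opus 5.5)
The plan is to prove the three assertions of Lemma~\ref{lem:evaluation} in order: the evaluation isomorphism, closure under pointwise limits, and smoothness of coefficients. Each later assertion reduces to the first.

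\textbf{Evaluation coordinates.} Consider the family of all scalar evaluations $f\mapsto f_c(x)$, where $x\in X$ and $c$ ranges over the $p$ output coordinates. These are linear functionals on $V$. Their common kernel is zero, because a function vanishing in every coordinate at every point is the zero function. The span of all such functionals is therefore a subspace of the dual $V^*$ whose annihilator is trivial. Since $\dim V=r<\infty$, this span is all of $V^*$.

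Choose $r$ linearly independent evaluations $L_1,\dots,L_r$ from the spanning family. One can do this greedily: at each step add an evaluation not in the current span, which must exist until the span is $V^*$. The map $L=(L_1,\dots,L_r):V\to\R^r$ then has trivial kernel, since independent functionals spanning $V^*$ separate points. It is therefore an isomorphism by dimension count.

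\textbf{Closure under pointwise limits.} Let $f_n\in V$ converge pointwise everywhere to some function $f$. Then $L(f_n)\to(f_{c_j}(x_j))_j$, so $L(f_n)$ converges in $\R^r$. Set $g=L^{-1}$ of this limit, which lies in $V$.

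It remains to show $f=g$ at an arbitrary point $x$ and coordinate $c$. Fix a basis $e_1,\dots,e_r$ of $V$ and write $f_n=\sum_k\alpha_k^{(n)}e_k$. Then $\alpha^{(n)}=A\,L(f_n)$ for the fixed matrix $A$ inverting $L$ in this basis. Hence $\alpha^{(n)}\to\alpha$, where $g=\sum_k\alpha_ke_k$. Evaluating at $x$ gives
\[
(f_n)_c(x)=\sum_k\alpha^{(n)}_k(e_k)_c(x)\to g_c(x).
\]
By uniqueness of limits, $f_c(x)=g_c(x)$, so $f=g\in V$.

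\textbf{Smooth coefficients.} Let $f_\theta\in V$ depend on a parameter $\theta$ so that each evaluation $\theta\mapsto (f_\theta)_c(x)$ is smooth. The coefficients in the fixed basis are $A\,L(f_\theta)$. This is a fixed linear map applied to finitely many smooth scalar functions of $\theta$, so the coefficients are smooth. Any other basis differs by a constant invertible matrix, so smoothness holds in every fixed basis.

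\textbf{Main subtlety.} The only point requiring care is the first step: the spanning family of evaluations may be infinite. Its span is nevertheless finite dimensional because it sits inside $V^*$, so a finite independent subfamily of size exactly $r$ exists. Everything else follows from linear algebra in $\R^r$.
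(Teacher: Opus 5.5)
Your proposal is correct and follows essentially the same route as the paper: you separate $V$ by scalar evaluations to obtain an evaluation basis of $V^*$, then apply the fixed inverse matrix to transfer pointwise convergence and pointwise smoothness to the coefficient vectors. Your uniqueness-of-limits step simply spells out the paper's remark that the limiting linear combination agrees with the pointwise limit everywhere.
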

\begin{proof}
Scalar evaluations separate $V$: a function whose every coordinate vanishes at every point is zero. Their span in $V^*$ has zero annihilator and is therefore all of $V^*$. Select a basis of $r$ evaluations. If $f_1,\ldots,f_r$ is any basis of $V$, the matrix $[L_if_j]$ is invertible. For any pointwise-convergent sequence in $V$, evaluating at these coordinates gives convergence of its coefficient vectors. The resulting linear combination agrees with the pointwise limit everywhere. Smoothness of coefficients follows from the same fixed inverse matrix applied to the evaluation functions.
\end{proof}

\subsection{Observation tangents replace positive sample weights}
\begin{lemma}[Unweighted tangent chart]\label{lem:tangent}
Let $Z\subset\R^k$ be open and path-connected, and let $g:Z\to(\R^p)^X$ be pointwise $C^1$. Suppose $g(z_*)=0$ for some $z_*$. Define
\[
 G=\spanop\{g(z):z\in Z\},\qquad
 T=\spanop\{Dg(z)[v]:z\in Z,v\in\R^k\}.
\]
If $G$ is finite dimensional then $T=G$. If $G$ is infinite dimensional then $T$ is infinite dimensional. For any integer $n\le\dim T$, some fixed-length unweighted history family with $n$ parameters has a locally injective accumulated function and $n$ independent evaluation derivatives.
\end{lemma}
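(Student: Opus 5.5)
The proof has three parts: the two span comparisons, then the chart. Throughout, $Dg(z)[v]$ is the pointwise derivative $\frac{d}{ds}g(z+sv)(\xi)\big|_{s=0}$ at each $\xi\in X$.

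\emph{Finite case, $T\subseteq G$.} If $G$ is finite dimensional, each $Dg(z)[v]$ is a pointwise limit of the difference quotients $(g(z+sv)-g(z))/s$, and these lie in $G$. Lemma~\ref{lem:evaluation} says $G$ is closed under pointwise limits that exist everywhere, so $Dg(z)[v]\in G$.

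\emph{Finite case, $G\subseteq T$.} Given $z\in Z$, path-connectedness of the open set $Z$ gives a piecewise-linear path $\gamma$ from $z_*$ to $z$. Then $g(z)=g(z)-g(z_*)=\int_0^1 Dg(\gamma(s))[\gamma'(s)]\,ds$ pointwise. The integrand lies in the finite-dimensional space $T$; it is finite dimensional because $T\subseteq G$. Fix $r$ evaluations $L$ making $T\cong\R^r$, as in Lemma~\ref{lem:evaluation}. The $T$-valued curve $s\mapsto Dg(\gamma(s))[\gamma'(s)]$ then has coefficients $L^{-1}$ applied to scalar functions, which are continuous on each linear piece. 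Hence the pointwise integral equals a $T$-valued Riemann integral, whose Riemann sums lie in $T$, and closedness under pointwise limits gives $g(z)\in T$. This piecewise-$C^1$ integration is the one place needing care.

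\emph{Infinite case.} Suppose $T$ were finite dimensional. The integration identity holds pointwise in every case, so it places each $g(z)$ in the pointwise closure of $T$. By Lemma~\ref{lem:evaluation} that closure is $T$ itself, which would make $G$ finite dimensional. So $G$ infinite forces $T$ infinite.

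\emph{Chart.} Given $n\le\dim T$, choose tangents $\tau_j=Dg(z_j)[v_j]$, $j=1,\ldots,n$, that are linearly independent. By Lemma~\ref{lem:evaluation}, applied to their span, there are $n$ scalar evaluations $L_1,\ldots,L_n$ with $[L_i\tau_j]$ invertible. Take the length-$n$ unweighted history $h_\theta=(z_1+\theta_1v_1,\ldots,z_n+\theta_nv_n)$ for $\theta$ near $0$. Its accumulated function is $A(\theta)=\sum_j g(z_j+\theta_jv_j)$. The map $\theta\mapsto(L_iA(\theta))_i$ is $C^1$, and its Jacobian at $0$ is $[L_i\tau_j]$, which is invertible. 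The inverse-function theorem then makes this evaluation map injective near $0$, and hence $A$ is locally injective. These are the promised $n$ independent evaluation derivatives.

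The main subtlety is regularity. Pointwise $C^1$ alone may not make $\theta\mapsto L_iA(\theta)$ jointly $C^1$. It is enough here, because each scalar coordinate depends on the single parameter $\theta_j$ through a separate summand, so the Jacobian is a sum of one-variable derivatives. To be safe I would still use continuity of $z\mapsto Dg(z)[v](\xi)$, either assumed or supplied by the smooth setting $g_z(w)=x\rho'(x^\top w-y)$, for the inverse-function step.
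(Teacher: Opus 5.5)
Your proof is correct and follows the paper's argument step for step. You use difference quotients plus pointwise closure for $T\subseteq G$, integration along a path in evaluation coordinates for $G\subseteq T$ (your Riemann-sum limit is equivalent to the paper's integration of the coefficient functions), contraposition for the infinite case, and the inverse-function theorem on the evaluated accumulated sum for the chart. Your closing regularity hedge is unnecessary: pointwise $C^1$ already makes each scalar map $z\mapsto L_ig(z)$ a $C^1$ function on $Z$, so $\theta\mapsto(L_iA(\theta))_i$ is $C^1$ directly.
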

\begin{proof}
If $G$ is finite dimensional, difference quotients lie in $G$, so their pointwise limits do too by Lemma~\ref{lem:evaluation}. Thus $T\subset G$.

If $T$ is finite dimensional, join $z_*$ to any $z$ by a piecewise $C^1$ path $\gamma$ inside $Z$. Such paths exist in connected open Euclidean domains. The derivative of $g(\gamma(t))$ belongs to $T$. Choose finite evaluation coordinates on $T$; their coefficients form an ordinary piecewise continuous finite-dimensional integrand. Integrate those coefficients. The fundamental theorem of calculus at each point of $X$ and each output coordinate identifies the resulting element of $T$ with $g(z)-g(z_*)=g(z)$. Hence $G\subset T$. This proves equality in the finite case, and proves the infinite case by contraposition.

Choose $n$ independent functions $v_j^*=Dg(z_j)[u_j]$. Choose $n$ scalar evaluations $L_i$ making $[L_iv_j^*]$ invertible. For $\theta$ near zero use the ordinary history
\[
 h_\theta=(z_1+\theta_1u_1,\ldots,z_n+\theta_nu_n).
\]
All observations stay in $Z$ on a sufficiently small neighborhood. The Jacobian at zero of $\theta\mapsto L\sum_jg(z_j+\theta_ju_j)$ is the invertible matrix just selected. The inverse-function theorem gives local injectivity of these evaluated sums and therefore of the accumulated functions. Each observation has coefficient one; only its location changes.
\end{proof}
For residual losses, $Z=\R^{d+1}$ and $g(x,y)(w)=x\rho'(x^\top w-y)$ is pointwise smooth with $g(0,y)=0$. If $r=\dim\G_\rho<\infty$, the tangent lemma gives an open $r$-parameter family of length $r$. Native identification and Lemma~\ref{lem:dimension} give $m\ge r$. For an infinite span, the same argument yields every finite lower bound.

For the upper bound, choose a basis $g_1,\ldots,g_r$ and write $g_z=\sum_j a_j(z)g_j$. Finite evaluations show that $a(z)$ is smooth. Store $s_h=\sum_{z\in h}a(z)$ and update by addition. The decoder uses the unique optimum with gradient $\nabla R+\sum_j(s_h)_jg_j$. It only needs to be correct on reachable states; an arbitrary extension elsewhere is harmless. This proves Theorem~\ref{thm:native} completely.

\subsection{A generalization that requires an extra probing hypothesis}
For arbitrary convex losses $\ell_z$, native observations need not steer an optimizer to every point. One must not silently infer identification from convexity alone. A sufficient, stronger readout family is
\[
 a\mapsto\argmin_w\{F_h(w)+(\mu/2)\norm{w-a}^2\},\qquad\mu>0.
\]
For any $w$, choose $a=w+\nabla F_h(w)/\mu$. Equality of these proximal responses identifies the old gradient at $w$. With positive sample weights, independent loss gradients then yield an open coefficient chart. Without weights, the tangent lemma requires its stated domain and zero-observation conditions. The residual theorem needs neither these terminal quadratic probes nor positive history weights: its native observations provide both identification and the unweighted chart.

%% file: appendices/02_closure.tex
\section{Polynomial closure and fixed-predictor memory}\label{app:poly}
\subsection{Finite affine orbits force a polynomial}
Suppose $\G_\rho$ is finite dimensional. For $x=ae_1$, $a>0$, $y=-b$, and $w=te_1$, the first gradient coordinate is $a\rho'(at+b)$. Dividing by $a$ shows that
\[
 V=\spanop\{t\mapsto\rho'(at+b):a>0,b\in\R\}
\]
is finite dimensional. It is invariant under all translations and positive dilations, including their inverses. Pointwise closure from Lemma~\ref{lem:evaluation} applied to difference quotients shows that differentiation $\mathsf D:f\mapsto f'$ is an endomorphism of $V$. The chain rule gives $\mathsf DT_a=aT_a\mathsf D$, hence $T_a^{-1}\mathsf DT_a=a\mathsf D$.

After complexification, the finite spectrum of $\mathsf D$ is invariant under multiplication by every positive real number. It can contain only zero. For $\ell=\dim V$, Cayley--Hamilton gives $\mathsf D^\ell=0$, so every function in $V$ is a polynomial of degree at most $\ell-1$. In particular $\rho'$ and then $\rho$ are polynomial. No analyticity of the original loss is assumed; smoothness suffices. This is an application of classical finite-dimensional function-space reasoning \citep{anselone1964,mattner2000}.

\subsection{Exact dimension for polynomial losses}
If $\rho$ has degree $D\ge1$, its translation span is finite dimensional and differentiation invariant. It contains $\rho,\rho',\ldots,\rho^{(D)}$, whose distinct degrees and nonzero leading coefficients make them a basis for univariate polynomials of degree at most $D$. Thus for each $x$, translates $w\mapsto\rho(x^\top w-y)$ span $(x^\top w)^j$ for $0\le j\le D$.

For each $j$, these powers as $x$ varies span all degree-$j$ homogeneous polynomials in $w$. Indeed,
\[
 (x^\top w)^j=\sum_{|\alpha|=j}\frac{j!}{\alpha!}x^\alpha w^\alpha.
\]
A coefficient functional annihilating all such powers yields a polynomial in $x$ vanishing everywhere, hence has all coefficients zero. The residual-loss span is therefore the whole space of multivariate polynomials of total degree at most $D$, dimension $\binom{d+D}{D}$. Taking gradients removes exactly the constants and gives $r_D(d)=\binom{d+D}{D}-1$.

Theorem~\ref{thm:native} gives exact unweighted minimality. Explicit attainment stores all nonconstant coefficients of the accumulated loss polynomial; each example adds its coefficients. The readout minimizes the resulting strongly convex objective. This is a state-dimension claim, not a dimension-independent solver-complexity claim.

\subsection{Present readout dimension}
The smooth map $h\mapsto w_h$ is a $d$-coordinate read-only state. Every $w\in\R^d$ is continuously reached by the single observation \eqref{eq:steer} applied to the empty objective $R$. Distinct $w$ are identified by basis queries. Lemma~\ref{lem:dimension} gives the matching read-only lower bound $d$.

\subsection{Balancing preserves all but at most \texorpdfstring{$d$}{d} profile dimensions}
\begin{lemma}[Fixed-optimum chart]\label{lem:balance}
Suppose an unweighted smooth history family $h_\theta$, with $\theta$ near zero in $\R^n$, has accumulated gradient profile $G_\theta$ whose parameter derivative at zero is injective as a map into functions. For every $w_0$, a family with one additional ordinary residual observation has optimum $w_0$ throughout and profile derivative of rank at least $n-d$. When $n>d$, it contains a distinguishable local $(n-d)$-parameter subfamily.
\end{lemma}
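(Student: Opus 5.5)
The plan is to append to each $h_\theta$ the balancing observation $z_\theta=(x_\theta,y_\theta)$ from \eqref{eq:steer} with $w=w_0$:
\[
 x_\theta=-\nabla F_{h_\theta}(w_0)/c,\qquad y_\theta=x_\theta^\top w_0-t_0 .
\]
It has unit weight. The balanced history $\tilde h_\theta=h_\theta z_\theta$ then has gradient $\nabla F_{\tilde h_\theta}(w_0)=\nabla F_{h_\theta}(w_0)+x_\theta c=0$. By strong convexity its unique optimum is $w_0$ for every $\theta$. Since $h_\theta$ is smooth in $\theta$ and $\nabla F_{h_\theta}(w_0)=\nabla R(w_0)+G_\theta(w_0)$, the vector $\gamma(\theta):=\nabla F_{h_\theta}(w_0)\in\R^d$ depends smoothly on $\theta$, and so does $z_\theta$. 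The new accumulated profile is
\[
 \tilde G_\theta=G_\theta+g_{z_\theta},\qquad g_{z_\theta}=\Psi(\gamma(\theta)),
\]
where $\Psi(u)(w)=-(u/c)\,\rho'\bigl(-u^\top(w-w_0)/c+t_0\bigr)$ is a fixed pointwise-smooth map $\R^d\to(\R^d)^{\R^d}$.

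\textbf{Rank step.} Differentiate at $\theta=0$:
\[
 D\tilde G_0=DG_0+D\Psi(\gamma(0))\circ D\gamma(0).
\]
The second term factors through $\R^d$, so its rank is at most $d$. Since $DG_0$ is injective of rank $n$, subadditivity of rank gives $\operatorname{rank}D\tilde G_0\ge n-d$. Equivalently, $D\tilde G_0$ is injective on $\ker D\gamma(0)$, which has dimension at least $n-d$: there $D\tilde G_0v=DG_0v\neq0$. This kernel formulation is cleaner, and it is the one I would use.

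\textbf{Subfamily step.} Choose $n-d$ independent vectors $v_1,\dots,v_{n-d}$ with $D\tilde G_0v_j$ independent. Use the kernel of $D\gamma(0)$ if it has exactly dimension $n-d$; otherwise take any vectors whose images are independent. These images lie in the function space spanned by profiles, which is pointwise $C^1$. By the argument of Lemma~\ref{lem:evaluation} applied to the finite-dimensional span of the images, choose $n-d$ scalar evaluations $L_i$, each of the form $f\mapsto f_{c_i}(w_i)$, with $[L_iD\tilde G_0v_j]$ invertible. Restrict to $\theta=\sum_j s_jv_j$ with $s$ near $0\in\R^{n-d}$. The map $s\mapsto(L_i\tilde G_{\theta(s)})_i$ is smooth with invertible Jacobian at $0$. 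The inverse-function theorem makes it locally injective, so the profiles $\tilde G_{\theta(s)}$ are pairwise distinct.

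\textbf{Distinguishability.} Theorem~\ref{thm:native} shows that distinct accumulated gradient profiles are separated by a single ordinary continuation. (The fixed $R$ cancels.) This gives a distinguishable local $(n-d)$-parameter subfamily, all with optimum $w_0$ and the same length.

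The main obstacle I expect is regularity rather than the rank count. I need to justify that $\theta\mapsto G_\theta(w)$ is differentiable pointwise, with derivative commuting with the evaluations used, and that $\gamma$ is smooth. Both follow from smoothness of $\rho'$ and of the observation locations in $\theta$: each $G_\theta(w)$ is a finite sum of smooth functions of $(\theta,w)$. One further point needs care: the infinite-dimensional target space has no ambient topology. So "rank" must be read as the dimension of the image of the linear map $v\mapsto D\tilde G_0v$ into functions, and independence must be certified by finitely many evaluations exactly as in Lemma~\ref{lem:tangent}.
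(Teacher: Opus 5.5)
Your proposal is correct and follows essentially the same route as the paper: you append the same balancing observation built from $\nabla F_{h_\theta}(w_0)$ and bound the rank because the balancing term's derivative factors through $\R^d$. You then choose $n-d$ independent derivative images with nonsingular point evaluations, apply the inverse-function theorem on a linear parameter slice, and separate the resulting profiles by native identification. Your kernel formulation (injectivity of $D\widetilde G_0$ on $\ker D\gamma(0)$) is a slightly more explicit version of the paper's rank-subadditivity step, not a different argument.
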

\begin{proof}
Put $u_\theta=\nabla R(w_0)+G_\theta(w_0)$ and fix $c=\rho'(t_0)\ne0$. Append
\[
 z_\theta=\left(-u_\theta/c,\ (-u_\theta/c)^\top w_0-t_0\right).
\]
The resulting full objective has zero gradient at $w_0$, which is its unique minimum by strong convexity. Its profile is $\widetilde G_\theta=G_\theta+g_{z_\theta}$. The derivative of the balancing profile factors through $D_\theta u_0:\R^n\to\R^d$, and therefore has rank at most $d$. Consequently
\[
 \rank D_\theta\widetilde G_0\ge\rank D_\theta G_0-\rank D_\theta(g_{z_\theta})_0\ge n-d.
\]
The domain of these linear maps is finite dimensional, so the rank inequality needs no topological structure on the function space. Choose $n-d$ independent derivative images and point evaluations making their matrix nonsingular. Restrict parameters to the corresponding linear subspace. The inverse-function theorem gives an open chart of different profiles, which native observations distinguish.
\end{proof}
For finite $r$, use the $n=r$ chart from the tangent lemma and then the dimension obstruction to obtain the lower bound $r-d$. For an infinite span, choose $n$ arbitrarily large, proving an unbounded requirement even at one fixed predictor.

For a polynomial loss, evaluation at $w_0$ is a surjective map $E_{w_0}:\G_\rho\to\R^d$, because constant vector gradients belong to $\G_\rho$. Profiles of histories with optimum $w_0$ lie in the affine space
\[
 \{G\in\G_\rho:E_{w_0}G=-\nabla R(w_0)\},
\]
of dimension $r-d$. Affine coordinates on this space depend continuously on the history and reconstruct the entire profile, so they support any supplied continuation. This proves the restricted-prefix upper bound. A recurrent implementation can expand this state to the full $r$ coefficients when it leaves the family; the result does not give a global recurrence of size $r-d$.

Together these arguments prove Theorem~\ref{thm:poly}. A constant loss changes only an objective constant and needs no state. The steering argument is not applied in that degenerate case.

\subsection{Why the domain assumptions matter}
If scalar features are fixed at $x=1$ and $\rho(t)=e^t$, the gradient profile is $e^w\sum_i e^{-y_i}$ and one coordinate suffices. Arbitrarily varying feature scales is what supplies dilation invariance and excludes this counterexample. Discrete feature/label dictionaries can also have finite spans for nonpolynomial losses. Classical statistical sufficiency concerns representations of a likelihood; native one-example identification is the bridge from the optimizer question here to the full gradient function. This is not a claim that polynomial sufficiency itself is new.

%% file: appendices/03_ridge.tex
\section{Ridge regression and normalized keys}\label{app:ridge}
\subsection{Continuous reachability and exact state minimality}
For any $C\succ\lambda I$ and $M\in\R^{p\times d}$, set $X=(C-\lambda I)^{1/2}$ and $Y=MX^{-\top}$. These depend continuously on $(C,M)$. Their $d$ columns as observations give $\lambda I+XX^\top=C$ and $YX^\top=M$. This is an open $(s_d+pd)$-dimensional parameter domain. Fixing $C$ allows all $W$ via $M=WC$, proving read-only minimum $pd$ by basis queries and Lemma~\ref{lem:dimension}.

Adding $(k,y)$ yields $(C+kk^\top,M+yk^\top)$. Direct multiplication verifies
\[
 (C+kk^\top)^{-1}=P-\frac{Pkk^\top P}{1+k^\top Pk},\qquad P=C^{-1}.
\]
Substitution gives \eqref{eq:gain}. Equal states supporting present readouts have the same $W$. Appending the common label $y=Wk+e_1$ identifies $\Phi_P(k)$ from the first output row at basis queries. Since $1-k^\top\Phi_P(k)=1/(1+k^\top Pk)>0$, gain inversion identifies $Pk$. Basis keys identify $P$ and then $M=WP^{-1}$. The reachable open chart proves the full lower bound. Additive $(C,M)$ storage attains it for every future observation. Equivalently, store $(P,W)$ with the classical RLS recurrence
\[
 P^+=P-\frac{Pkk^\top P}{1+k^\top Pk},\qquad
 W^+=W+(y-Wk)\frac{k^\top P}{1+k^\top Pk}.
\]
It has $s_d+pd$ independent entries and dense update cost $O(d^2+pd)$. On zero labels, $M=W=0$ while $C$ ranges over an open $s_d$-dimensional set. The fixed label $e_1$ suffices to identify $C$; storing it attains the restricted-prefix bound. This proves Theorem~\ref{thm:ridge}.

\subsection{Unit keys and known length}\label{app:unit}
Fix $N\ge d$, $d\ge2$. Choose $d$ independent unit vectors $u_i$ with every pairwise inner product nonzero; normalized columns of $I+\varepsilon\mathbf1\mathbf1^\top$ for small $\varepsilon>0$ are an example. Fix the other $N-d$ keys. The derivative of the Gram map from the first $d$ spheres is
\[
 (\dot u_i)\mapsto\sum_i(u_i\dot u_i^\top+\dot u_i u_i^\top),\qquad u_i^\top\dot u_i=0.
\]
Its image is trace-zero. If a symmetric matrix $H$ annihilates this image, then $Hu_i$ is parallel to $u_i$ for every $i$. Symmetry and nonzero pairwise inner products force all associated eigenvalues equal. The $u_i$ span, so $H$ is scalar. Thus the derivative is onto the trace-zero symmetric space. The submersion theorem gives a local smooth section over an open part of a fixed-trace Gram slice of dimension $s_d-1$.

The first $d$ keys remain independent nearby; their labels realize arbitrary $M$ continuously, setting the remaining labels zero. This gives a distinguishable chart of dimension $s_d-1+pd$. Future unit keys include the coordinate basis, so the identification proof applies. For attainment, omit one diagonal entry of $C$ and recover it from the known trace $\lambda d+N$, while storing $M$. After an update the trace is $\lambda d+N+1$. The exact dimension is $pd+s_d-1$. No formula is asserted for $N<d$, where rank constraints can further change the attainable set.

\subsection{Exact collision and a restricted delta recurrence}
For \eqref{eq:collision}, the eigenvalues of $C_\pm-\lambda I$ are $1/4,5/4$, so both histories are attainable with two zero-label observations. Direct inversion gives
\[
 (C_\pm+e_1e_1^\top)^{-1}e_1=(4/7,\mp2/7)^\top.
\]
At the continued query $e_2$, any common predicted value $z$ has equally weighted squared error
\[
 \tfrac12(z+2/7)^2+\tfrac12(z-2/7)^2=z^2+4/49\ge4/49.
\]
The worst error is at least the average. Randomized common outputs satisfy the same bound in expectation. A raw recurrence $S^+=S+\beta(y-Sk)k^\top$ stays at zero on zero labels from zero initialization, irrespective of the scalar gate. A broader model can avoid the collision by another evidence-carrying path; that path is included in its state budget rather than excluded by a model name.

%% file: appendices/04_logistic.tex
\section{Bounded logistic responses and equal finite jets}\label{app:logistic}
A paired observation contributes
\[
 \log(1+e^{-aw})+\log(1+e^{aw})=2\log(2\cosh(aw/2)).
\]
Its derivative is $a\tanh(aw/2)$, so \eqref{eq:logistic_profile} follows. The derivative is odd, and the objective is strongly convex, making zero the unique optimum for every prefix in the family.

For a continued positive-label observation $(b,+1)$, stationarity is \eqref{eq:logistic_update}. Define
\[
 H_a(w,b)=F_a'(w)-\frac{b}{1+e^{bw}}.
\]
This is real analytic near $(0,0)$, vanishes there, and has $\partial_wH_a(0,0)=\lambda+\frac12\sum_i a_i^2>0$, while $\partial_bH_a(0,0)=-1/2$. The analytic implicit-function theorem gives \eqref{eq:logistic_update} and a strictly positive derivative $w_a'(0)$. Thus on some $(0,\varepsilon)$ its image contains an open interval of positive $w$.

Suppose two histories give the same $w_a(b)$ for all $0<b<b_0$. Subtracting their stationarity equations shows their derivatives $F_a'$ agree on an open interval. Real analyticity on all of $\R$ implies equality everywhere. Letting $w\to\infty$ first identifies $\sum_i a_i$. The remaining tail in \eqref{eq:logistic_tail} is a positive sum, with
\[
 -\lim_{w\to\infty}\frac1w\log\sum_i\frac{2a_i}{1+e^{a_iw}}=a_1.
\]
The two histories have the same smallest scale. Subtract its entire function $a_1\tanh(a_1w/2)$ from the equal profiles. Iterate to identify all ordered scales. Positivity prevents cancellation of the leading exponential. Distinctness and the known number of paired observations ensure exact recovery of the ordered tuple. The open set $\{1<a_1<\cdots<a_N<2\}$ therefore maps injectively into any exact summary. Lemma~\ref{lem:dimension} proves Theorem~\ref{thm:logistic}.

For the finite-jet claim, write
\[
 T_K(a)=\left(\sum_i a_i^2,\sum_i a_i^4,\ldots,\sum_i a_i^{2K}\right).
\]
When $K\ge1$ and $N>K$, its Jacobian has entries $(DT_K)_{ji}=2j a_i^{2j-1}$. After rescaling rows and columns, any $K$ columns form a Vandermonde matrix in distinct positive numbers $a_i^2$. Thus its rank is $K$ throughout the ordered domain. The regular level-set theorem gives a local smooth $(N-K)$-dimensional manifold through each point, with a coordinate chart on an open subset of $\R^{N-K}$.

The Taylor expansion of $2\log(2\cosh(aw/2))$ at zero contains only even powers; its coefficient at degree $2j$ is a scalar multiple of $a^{2j}$. The fixed regularizer contributes only at degree two. Consequently every derivative of orders $0,\ldots,2K+1$ is identical on the level set: the constant is $2N\log2$, the odd derivatives vanish, and the even derivatives are fixed by $T_K$. For $K=0$, the whole ordered domain already has identical value and first derivative at zero, so it is the required $N$-dimensional family. In all cases, the preceding identification argument distinguishes every pair by a bounded native continuation. The dimension obstruction proves Theorem~\ref{thm:jet}.

No quantitative separation is inferred from this proof. Near-colliding power sums can produce extremely small differences in a restricted response interval. The average finite-bit result in the next appendix is proved independently in a uniformly conditioned ridge model.

%% file: appendices/05_bits.tex
\section{Finite-bit memory: complete average-error proof}\label{app:bits}
Let $s=s_d=d(d+1)/2$, and use Frobenius-orthonormal coordinates on $\Sym_d$. All volumes refer to the resulting $s$-dimensional Lebesgue measure. Fix $a>18r>0$ and $b=a+18r<1/\lambda$ as in \eqref{eq:body}.

\subsection{Uniformly conditioned reachable zero-prediction histories}
Every $P=aI+H$ in the specified family satisfies
\[
 (a-18r)I\preceq P\preceq bI\prec\lambda^{-1}I.
\]
Thus $C=P^{-1}\succ\lambda I$, and $X=(P^{-1}-\lambda I)^{1/2}$ with zero labels realizes $C$ using exactly $d$ observations. This is a continuous encoding of the full-dimensional matrix family by native prefixes. The condition number of $C$ is at most $(a+18r)/(a-18r)$, and every column of $X$ has norm at most $\sqrt{(a-18r)^{-1}-\lambda}$, independent of $d$ for fixed $a,r,\lambda$. The current predictor is identically zero. The ordinary next observation $(k,1)$ yields $f_P(k,q)=q^\top Pk/(1+k^\top Pk)$.

\subsection{A uniformly conditioned response metric}
\begin{lemma}\label{lem:gainmetric}
For symmetric positive $P,Q$ with operator norms at most $b$, and independent random unit vectors with $\E kk^\top=\E qq^\top=I/d$, inequality \eqref{eq:gain_distance} holds.
\end{lemma}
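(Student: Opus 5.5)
The plan is to analyze the gain map $\Phi_P(k)=Pk/(1+k^\top Pk)$ directly and reduce both bounds to $\norm{P-Q}_\F^2$ through the isotropy identities $\E_q[(q^\top u)^2]=\norm u^2/d$ and $\E_k[\norm{Ak}^2]=\norm A_\F^2/d$. First average over $q$. Since $q$ is independent of $k$ with $\E qq^\top=I/d$,
\[
\E_{k,q}[f_P-f_Q]^2=\tfrac1d\,\E_k\norm{\Phi_P(k)-\Phi_Q(k)}^2 .
\]
It therefore suffices to show, for unit $k$,
\[
\frac{\norm{(P-Q)k}^2}{(1+b)^4}\ \le\ \norm{\Phi_P(k)-\Phi_Q(k)}^2\ \le\ (1+b)^2\norm{(P-Q)k}^2 ,
\]
and then average over $k$ using $\E_k\norm{(P-Q)k}^2=\norm{P-Q}_\F^2/d$. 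This yields the two factors of $1/d$ in \eqref{eq:gain_distance}.

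For the upper (forward) estimate, I would write $\alpha=k^\top Pk$ and $\beta=k^\top Qk$, both in $[0,b]$, and split
\[
\Phi_P-\Phi_Q=\frac{(P-Q)k}{1+\alpha}+Qk\Bigl(\frac1{1+\alpha}-\frac1{1+\beta}\Bigr).
\]
The first term has norm at most $\norm{(P-Q)k}$. The second has norm at most $b\,|\alpha-\beta|$, and $|\alpha-\beta|=|k^\top(P-Q)k|\le\norm{(P-Q)k}$. Together these give the factor $1+b$, and squaring gives $(1+b)^2$.

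For the lower bound I would use gain inversion from \eqref{eq:gain}: $Pk=G(\Phi_P(k))$ with $G(\phi)=\phi/(1-k^\top\phi)$, where $1-k^\top\Phi_P(k)=1/(1+\alpha)\in[1/(1+b),1]$. I would then bound the Lipschitz constant of $G$ along the segment between $\phi_P=\Phi_P(k)$ and $\phi_Q=\Phi_Q(k)$. The denominator $1-k^\top\phi$ is affine in $\phi$, so on that segment it stays at least $1/(1+b)$. The derivative is $DG(\phi)=\frac{I}{1-k^\top\phi}+\frac{\phi k^\top}{(1-k^\top\phi)^2}$, and $\norm\phi\le b$ holds on the segment because it holds at both endpoints. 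This gives $\norm{DG}\le(1+b)+b(1+b)^2\le(1+b)^3$. That constant is weaker than the $(1+b)^2$ required for the stated $(1+b)^4$ after squaring.

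This constant is the step I expect to be the main obstacle. To sharpen it I would avoid the mean-value argument and compute the difference exactly. Setting $\gamma_P=1+\alpha$ and $\gamma_Q=1+\beta$,
\[
Pk-Qk=\gamma_P\phi_P-\gamma_Q\phi_Q=\gamma_P(\phi_P-\phi_Q)+(\gamma_P-\gamma_Q)\phi_Q .
\]
The gap $\gamma_P-\gamma_Q=k^\top(Pk-Qk)$ is itself driven by $k^\top(\phi_P-\phi_Q)$. Solving the resulting scalar linear relation, with $\gamma_Q\,k^\top\phi_Q=\beta$, should give
\[
\norm{(P-Q)k}\le \gamma_P\gamma_Q(1+b)\,\norm{\phi_P-\phi_Q}\le(1+b)^3\norm{\phi_P-\phi_Q}.
\]
If this cannot be tightened, I would try a symmetric rearrangement, namely averaging the roles of $P$ and $Q$, or exploiting that $\gamma_P\phi_P$ is parallel to $Pk$, to gain one power. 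The target is $\norm{(P-Q)k}\le(1+b)^2\norm{\phi_P-\phi_Q}$. Once the pointwise inequality holds, averaging over $k$ finishes both sides.
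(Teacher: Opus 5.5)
Your reduction to a pointwise inequality in $k$, the two isotropy averages, and your forward estimate are all correct and match the paper's argument. The gap is the lower bound. As written, the proposal only establishes $\norm{(P-Q)k}\le(1+b)^3\norm{\phi_P-\phi_Q}$. After squaring and averaging this gives $\norm{P-Q}_\F^2/(d^2(1+b)^6)$, not the stated $\norm{P-Q}_\F^2/(d^2(1+b)^4)$. Your closing suggestions (symmetrizing, using parallelism) are not carried out, so the left inequality in \eqref{eq:gain_distance} is not proved.

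The missing step is already inside your exact computation. You lost a power of $(1+b)$ by bounding $\gamma_Q\le1+b$ and $\norm{\phi_Q}\le b$ separately. From $\gamma_P-\gamma_Q=k^\top(P-Q)k$ and your decomposition you correctly get $(\gamma_P-\gamma_Q)(1-k^\top\phi_Q)=\gamma_P\,k^\top(\phi_P-\phi_Q)$, that is, $\gamma_P-\gamma_Q=\gamma_P\gamma_Q\,k^\top(\phi_P-\phi_Q)$. Now recombine $\gamma_Q\phi_Q=Qk$:
\[
 (P-Q)k=\gamma_P\bigl[(\phi_P-\phi_Q)+Qk\,k^\top(\phi_P-\phi_Q)\bigr].
\]
Using $\gamma_P=1+k^\top Pk\le1+b$ and $\norm{Qk}\le b$, this gives $\norm{(P-Q)k}\le\gamma_P(1+\norm{Qk})\norm{\phi_P-\phi_Q}\le(1+b)^2\norm{\phi_P-\phi_Q}$. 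This is exactly the paper's gain-inversion identity $(P-Q)k=\bigl[u-v+Qk\,k^\top(u-v)\bigr]/(1-k^\top u)$, since $1/(1-k^\top u)=\gamma_P$. With this line inserted, your proof coincides with the paper's. Drop the mean-value route: bounding $\norm\phi$ and $(1-k^\top\phi)^{-2}$ separately along the segment is what costs the extra factor there too.
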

\begin{proof}
For a fixed unit $k$, set $u=Pk/(1+k^\top Pk)$ and $v=Qk/(1+k^\top Qk)$. Gain inversion gives the exact identity
\[
 (P-Q)k=\frac{u-v+Qk\,k^\top(u-v)}{1-k^\top u}.
\]
Since $(1-k^\top u)^{-1}=1+k^\top Pk\le1+b$ and $\norm{Qk}\le b$, this yields
\[
 \norm{(P-Q)k}\le(1+b)^2\norm{u-v}.
\]
Conversely, subtract the forward gain formulas:
\[
 u-v=\frac{(P-Q)k}{1+k^\top Pk}
 -\frac{Qk\,k^\top(P-Q)k}{(1+k^\top Pk)(1+k^\top Qk)}.
\]
All denominators are at least one, so $\norm{u-v}\le(1+b)\norm{(P-Q)k}$. Independence and isotropy give
\[
 \E_q[q^\top(u-v)]^2=\norm{u-v}^2/d,
 \qquad\E_k\norm{(P-Q)k}^2=\norm{P-Q}_\F^2/d.
\]
Combining proves the lemma. The argument applies both to uniform sphere probes and independent uniform coordinate-basis probes.
\end{proof}

\subsection{At least half of the Frobenius ball is well conditioned}
\begin{lemma}\label{lem:volume}
For every $d\ge2$, with $v_s$ the unit ball volume in $\R^s$,
\[
 \operatorname{vol}(\cK_d)\ge\tfrac12 v_s(r\sqrt d)^s.
\]
\end{lemma}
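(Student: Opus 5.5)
We show that the Frobenius-ball matrices failing the operator-norm constraint occupy at most half the ball's volume. Let $H$ be uniform on the Frobenius ball $\{H\in\Sym_d:\norm H_\F\le r\sqrt d\}$; then $\cK_d$ is the event $\norm H_\op\le18r$. It therefore suffices to show
\[
 \Pr\bigl[\norm H_\op>18r\bigr]\le\tfrac12 .
\]
Everything scales linearly in $r$, so we may set $r=1$. The plan compares the uniform ball law with a Gaussian (GOE-type) law, whose operator norm is well controlled.

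\textbf{Step 1: write the ball law as radius times direction.} Write $H=R\,\Theta$ with $\Theta$ uniform on the unit Frobenius sphere of $\Sym_d$ and $R\le\sqrt d$ independent of $\Theta$. Then $\norm H_\op\le\sqrt d\,\norm\Theta_\op$.

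\textbf{Step 2: represent the direction through a Gaussian.} Let $G$ be standard Gaussian in the Frobenius-orthonormal coordinates of $\Sym_d$. Then $\Theta\overset{d}{=}G/\norm G_\F$, with the direction independent of $\norm G_\F$. We need the Gaussian to be large and well spread:
\begin{itemize}
\item $\norm G_\F^2$ is $\chi^2_s$ with $s=d(d+1)/2$, so $\norm G_\F\ge\sqrt{s/2}\ge d/2$ except with small probability.
\item In these coordinates the off-diagonal entries of $G$ have variance $1/2$ and the diagonal entries variance $1$. A standard bound (an $\varepsilon$-net on the sphere, or Gordon/Davidson--Szarek) gives $\E\norm G_\op\le 2\sqrt{2d}$ up to constants.
\item Gaussian concentration for the $1$-Lipschitz map $G\mapsto\norm G_\op$ gives $\Pr[\norm G_\op>c\sqrt d+t]\le e^{-t^2/2}$.
\end{itemize}

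\textbf{Step 3: transfer the bound to the direction.} Using independence of $\Theta$ and $\norm G_\F$, write $\norm G_\op=\norm\Theta_\op\norm G_\F$ and condition on $\norm G_\F\ge d/2$. This yields
\[
 \Pr\bigl[\sqrt d\,\norm\Theta_\op>18\bigr]\le \Pr\bigl[\norm G_\op>9\sqrt d\bigr]\big/\Pr\bigl[\norm G_\F\ge d/2\bigr].
\]
It is cleaner to avoid this division: by independence, $\Pr[\norm G_\op>9\sqrt d]\ge\Pr[\norm\Theta_\op>18/\sqrt d]\cdot\Pr[\norm G_\F\ge d/2]$, which is the same inequality rearranged.

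\textbf{Step 4: check the constants.} The constant $18$ leaves slack, since $9\sqrt d$ far exceeds $2\sqrt{2d}+O(1)$.
\begin{itemize}
\item For large $d$, the numerator is tiny and the denominator is near $1$, so the ratio is well below $1/2$.
\item For small $d$ (say $d\le 10$) the asymptotics are not usable, so use the crude bound $\norm\Theta_\op\le\norm\Theta_\F=1$. This gives $\norm H_\op\le\sqrt d\le18$ for $d\le324$, so $\cK_d$ equals the whole ball and the volume bound holds trivially.
\end{itemize}
Only $d>324$ therefore needs the probabilistic argument. There $\chi^2_s$ concentration and the net bound make both factors easy to control.

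\textbf{Main obstacle.} The main difficulty is stating an explicit, non-asymptotic operator-norm tail for the symmetric Gaussian with honest constants valid for all $d>324$. I would use a $1/4$-net of the sphere of size $9^d$ together with the bound $\norm G_\op\le2\sup_{\text{net}}|x^\top Gx|$. Each $x^\top Gx$ is $N(0,\sigma^2)$ with $\sigma^2\le2$. A union bound then gives $\Pr[\norm G_\op>9\sqrt d]\le 9^d e^{-81d/16}\ll\tfrac14$. Combined with $\Pr[\chi^2_s\ge s/2]\ge1/2$, this finishes the proof.
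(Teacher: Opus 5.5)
Your proof is correct and uses essentially the paper's argument: the Gaussian direction $G/\norm G_\F$ is uniform on the Frobenius sphere, a $1/4$-net of size $9^d$ gives $\norm G_\op\le2\max|u^\top Gu|$ plus a union bound, a $\chi^2_s$ lower tail controls $\norm G_\F$, and the radial structure of the ball transfers the result from directions to volume. The one real difference is your observation that $\norm H_\op\le\norm H_\F\le r\sqrt d\le18r$ makes $\cK_d$ the whole ball when $d\le324$. That lets you use crude constants for large $d$ only, whereas the paper tunes the thresholds $6\sqrt d$ and $s/4$ so that a single union bound already works at $d=2$ (the paper also uses that $u^\top Gu$ is exactly standard normal, not merely of variance at most $2$).
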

\begin{proof}
Let $G$ be a standard Gaussian in Frobenius-orthonormal coordinates of $\Sym_d$. Diagonal entries have variance one, off-diagonal entries variance $1/2$, and $u^\top Gu$ is standard normal for any unit $u$. A $1/4$-net of the unit sphere with at most $9^d$ points exists: a maximal separated set has disjoint radius-$1/8$ balls inside a radius-$9/8$ ball. For a symmetric matrix, the net satisfies $\norm G_\op\le2\max_{u\text{ in net}}|u^\top Gu|$, by approximating a maximizing unit vector and bounding the two error terms. Gaussian tails and a union bound give
\[
 \Pr\{\norm G_\op>6\sqrt d\}\le 2\exp[-(4.5-\log9)d].
\]
Also $\norm G_\F^2$ has the $\chi_s^2$ law. Exponential Markov with parameter $3/2$ yields
\[
 \Pr\{\norm G_\F^2<s/4\}
 \le\exp[-s(\log4-3/4)/2].
\]
At $d=2,s=3$, the sum of the two bounds is less than $0.406$; both decrease with $d$. With probability greater than one half, therefore,
\[
 \frac{\norm G_\op}{\norm G_\F}\le12\sqrt{d/s}.
\]
The direction $U=G/\norm G_\F$ is uniform on the Frobenius unit sphere. At least half the directions satisfy this ratio bound. Every point $tU$ in one of those directions with $0\le t\le r\sqrt d$ has
\[
 \norm{tU}_\op\le12rd/\sqrt s\le12\sqrt2\,r<18r.
\]
The entire corresponding radial segment belongs to $\cK_d$. Radial integration shows that these directions account for at least half the ball volume.
\end{proof}

\subsection{A second-moment inequality for arbitrary randomized finite codes}
\begin{lemma}\label{lem:quantization}
Let $X$ be uniform on a measurable body $K\subset\R^s$ of finite positive volume $V$ and finite second moment. Let $S$ take at most $M$ values, using any randomized measurable encoder. Then
\begin{equation}\label{eq:quantization}
 \E\norm{X-\E[X\mid S]}^2
 \ge\frac{s}{s+2}\left(\frac{V}{v_s}\right)^{2/s}M^{-2/s}.
\end{equation}
\end{lemma}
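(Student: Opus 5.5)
The plan is to reduce to the deterministic case, split the body into the $M$ cells of the code, and lower-bound each cell by the classical fact that among all sets of a given volume the ball minimizes the second moment about any point. Throughout, write $c_s=s/(s+2)$.

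\textbf{Reduction to deterministic codes.} A randomized encoder is a measurable map $S=\sigma(X,U)$, where $U$ is independent randomness. Conditioning on $U=u$ yields a deterministic encoder, and the conditional mean can only improve when $U$ is added to the conditioning:
\[
\E\norm{X-\E[X\mid S]}^2\ge\E\norm{X-\E[X\mid S,U]}^2 .
\]
The right side is an average over $u$ of errors of deterministic $M$-valued codes, each applied to the same uniform $X$, since $X$ is independent of $U$. It therefore suffices to prove \eqref{eq:quantization} for a deterministic partition $K=A_1\cup\cdots\cup A_M$ into measurable cells with volumes $V_j$ and $\sum_jV_j=V$. This also covers shared randomness.

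\textbf{Per-cell bound.} For a deterministic code, $\E[X\mid S=j]$ is the centroid $c_j$ of $A_j$. The error equals
\[
\frac1V\sum_j\int_{A_j}\norm{x-c_j}^2dx .
\]
For any set $A$ of volume $V_A$ and any point $c$, we have $\int_A\norm{x-c}^2dx\ge\int_{B(c,\rho)}\norm{x-c}^2dx$, where $v_s\rho^s=V_A$. This follows by the bathtub rearrangement: moving mass from $A\setminus B$ to $B\setminus A$ decreases the distance. Computing in polar coordinates,
\[
\int_{B(c,\rho)}\norm{x-c}^2dx=\frac{s}{s+2}\,v_s\rho^{s+2}=c_s\,v_s^{-2/s}V_A^{1+2/s}.
\]

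\textbf{Convexity step.} The error is therefore at least
\[
\frac{c_s v_s^{-2/s}}{V}\sum_j V_j^{1+2/s}.
\]
The map $t\mapsto t^{1+2/s}$ is convex, so Jensen's inequality with the constraint $\sum_jV_j=V$ over at most $M$ cells gives
\[
\sum_jV_j^{1+2/s}\ge M\,(V/M)^{1+2/s}.
\]
Cells of zero volume only help this inequality. Substituting yields exactly
\[
c_s\,(V/v_s)^{2/s}M^{-2/s}.
\]

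The only step needing care is the measurability bookkeeping in the randomized reduction, namely using regular conditional distributions given $U$. The finite second-moment hypothesis guarantees that all the conditional expectations exist. The rearrangement inequality is standard, so I expect no real obstacle beyond stating it cleanly.
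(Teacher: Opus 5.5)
Your argument is correct, but it handles randomization differently from the paper. The paper never derandomizes. It works directly with the soft cells $\pi_j(x)=\Pr(S=j\mid X=x)\in[0,1]$, each of mass $Vp_j$. It then proves, by layer-cake integration, that any measurable $0\le\pi\le1$ of mass $m$ satisfies $\int\|x-c\|^2\pi(x)\,dx\ge\frac{s}{s+2}v_s^{-2/s}m^{1+2/s}$, since a ball of squared radius $t$ can hold at most $v_st^{s/2}$ of that mass. Deterministic codes are just the case of indicator functions, and the same convexity step finishes.

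You instead realize the encoder as $S=\sigma(X,U)$ and enlarge the conditioning to $(S,U)$, which can only lower the mean squared error. You then apply the set version of the bathtub principle cell by cell for each fixed $u$.

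The paper's fractional bathtub inequality handles private randomness in one line. It needs no auxiliary probability space, no representation of the kernel as a function of $(X,U)$, and no Fubini argument identifying $\E[X\mid S,U]$ with the per-$u$ centroids.

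Your reduction buys a formally stronger intermediate statement: the bound holds even when the decoder also sees the encoder's randomness. So shared randomness is covered in the same stroke; the paper handles it separately, with the same conditioning-on-$U$ device, in its decoder argument. Your route also needs only the classical fact that a ball minimizes the moment of inertia among sets of given volume.
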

\begin{proof}
Let $\pi_j(x)=\Pr(S=j\mid X=x)$, extended by zero outside $K$. It lies in $[0,1]$, and has integral $m_j=Vp_j$, with $p_j=\Pr(S=j)$. For any center $c$ and any measurable $0\le\pi\le1$ of mass $m$, layer-cake integration gives
\begin{align*}
 \int\norm{x-c}^2\pi(x)\,dx
 &=\int_0^\infty\int_{\norm{x-c}^2>t}\pi(x)\,dx\,dt\\
 &\ge\int_0^\infty(m-v_st^{s/2})_+\,dt
 =\frac{s}{s+2}v_s^{-2/s}m^{1+2/s}.
\end{align*}
The inequality uses only the unit density cap, since a ball of squared radius $t$ holds mass at most $v_st^{s/2}$. Apply it to each $\pi_j$ with $c_j=\E[X\mid S=j]$, omitting zero-probability codes. Sum and divide by $V$:
\[
 \E\norm{X-\E[X\mid S]}^2
 \ge\frac{s}{s+2}(V/v_s)^{2/s}\sum_j p_j^{1+2/s}.
\]
Convexity, padding with zero masses up to $M$ entries, gives $\sum_jp_j^{1+2/s}\ge M^{-2/s}$. This proves the claim for arbitrary randomized encoders, not just nearest-neighbor or deterministic partitions.
\end{proof}
Combining Lemmas~\ref{lem:volume} and \ref{lem:quantization}, with $X=H$, gives
\begin{equation}\label{eq:matrix_distortion}
 \E\norm{P-\E[P\mid S]}_\F^2
 \ge\frac{s}{s+2}2^{-2/s}r^2d\,2^{-2B/s}
 \ge\frac{r^2d}{4}\,2^{-2B/s},
\end{equation}
since $s\ge3$. The last bound uses $s/(s+2)\ge3/5$ and $2^{-2/s}\ge2^{-2/3}$, whose product exceeds $1/4$.

\subsection{Why the lower bound applies to every decoder}
Condition on the code $S$. The future probes are independent of the prefix and encoding randomness. For squared error the optimal prediction is $\E[f_P(k,q)\mid S,k,q]$, whether or not this function is any ridge predictor. Let $P,P'$ be independent posterior copies given $S$. The conditional variance identity yields
\begin{align*}
 \inf_{\widehat f}\E[\widehat f(S,k,q)-f_P(k,q)]^2
 &=\tfrac12\E[f_P(k,q)-f_{P'}(k,q)]^2\\
 &\ge\frac{\E\norm{P-P'}_\F^2}{2d^2(1+b)^4}
 =\frac{\E\norm{P-\E[P\mid S]}_\F^2}{d^2(1+b)^4}.
\end{align*}
Use \eqref{eq:matrix_distortion} to obtain the lower half of \eqref{eq:bits}. Extra independent decoder randomness cannot improve squared-error risk over its mean. Private encoder randomness is already allowed in Lemma~\ref{lem:quantization}. Independent shared randomness $U$ is handled by conditioning on $U$: before encoding the prefix remains uniform, and the same argument applies for each seed value with at most $2^B$ codes. Averaging gives the same lower bound.

\subsection{A covering quantizer attains the exponent}
Let $R_0=r\sqrt d$. If $B\ge s$, put $\delta=2R_0/(2^{B/s}-1)$. A maximal $\delta$-separated subset of the compact body $\cK_d$ is a $\delta$-cover. Disjoint balls of radius $\delta/2$ around its centers fit inside a ball of radius $R_0+\delta/2$, so its cardinality is at most
\[
 (1+2R_0/\delta)^s=2^B.
\]
A deterministic nearest-center code gives $\norm{P-\widehat P}_\F\le\delta\le4R_0\,2^{-B/s}$. Use the coherent decoder $f_{\widehat P}$ and the upper half of Lemma~\ref{lem:gainmetric} to obtain
\[
 \E(f_P-f_{\widehat P})^2\le\frac{16(1+b)^2r^2}{d}2^{-2B/s}.
\]
For $0\le B<s$, one center $aI$ gives risk at most $(1+b)^2r^2/d$, which is no larger than the displayed upper bound since $16\,2^{-2B/s}>4$. Both cases complete Theorem~\ref{thm:bits}.

This is an offline coding upper bound for a prefix statistic. The decoder is inexpensive given its center, but finding a codebook, encoding a matrix, or maintaining such a code through a stream can be computationally difficult. No online quantization or efficient training guarantee is implicit.

\subsection{Supplementary worst-case finite-bit bound}
Let $P=aI+H$ range over the full operator-norm ball $\norm H_\op\le r$ with $a>r$ and $a+r\le b$. Suppose a deterministic code with at most $2^B$ states returns scalar responses within $\varepsilon$ uniformly over all unit $k,q$ for every such $P$. Two matrices in one code cell have response difference at most $2\varepsilon$. Taking the supremum over $q$ and using pointwise gain inversion shows
\[
 \norm{P-Q}_\op\le2(1+b)^2\varepsilon.
\]
Each nonempty cell is contained in an operator-norm ball of that radius around one representative. Volume comparison in $s$ dimensions gives
\[
 2^B\,[2(1+b)^2\varepsilon]^s\ge r^s,
 \qquad
 B\ge s\log_2\frac{r}{2(1+b)^2\varepsilon}.
\]
When the logarithm is negative the bound is vacuous. This worst-case statement is not the randomized average statement of Theorem~\ref{thm:bits}; the latter needs the posterior and moment arguments above.

%% file: appendices/06_attention.tex
\section{Kernel attention and native unweighted key--value memory}\label{app:attention}
\subsection{A pair of native continuations identifies the evidence}
For a fixed appended key $k_*$ and query $q$, set $\alpha=\kappa(q,k_*)>0$. Appending weight-one value zero gives $u_0=B/(Z+\alpha)$, while appending $e_1$ gives $u_1=(B+\alpha e_1)/(Z+\alpha)$. Hence $\delta=(u_1-u_0)_1=\alpha/(Z+\alpha)>0$, and \eqref{eq:att_ident} recovers $Z$ and $B$. Equality after all single-token continuations therefore implies equality of these functions at every query. Conversely, equal $(B,Z)$ gives equal outputs after adding any common numerator and denominator terms, proving all-future closure. The two continuations are alternatives tested on the same old history; the second is not appended after the first.

\subsection{Finite kernel sections and weighted exact minimality}
Suppose $r=r_\kappa<\infty$ and choose a basis $\phi_1,\ldots,\phi_r$ of kernel sections. Write $\kappa(q,k)=\sum_j\phi_j(q)a_j(k)$. Scalar evaluations as in Lemma~\ref{lem:evaluation} express $a(k)$ as a fixed matrix times finitely many kernel values, so it is continuous under the stated pointwise continuity assumption. Store
\[
 z_h=\sum_i t_i a(k_i)\in\R^r,\qquad
 A_h=\sum_i t_i a(k_i)v_i^\top\in\R^{r\times p}.
\]
Their additive updates reconstruct $(Z,B)$ exactly. This attains $r(p+1)$ coordinates.

For necessity, choose keys $k_1,\ldots,k_r$ with independent sections, positive weights $t_j$ in open intervals, and independently varying vectors $\beta_j=t_jv_j\in\R^p$. The map from $(t,\beta)$ to the corresponding ordinary weighted observations is continuous, since $v_j=\beta_j/t_j$. The denominator identifies each $t_j$ and the numerator each $\beta_j$ by section independence. Proposition~\ref{prop:attention} and Lemma~\ref{lem:dimension} force $r(p+1)$ coordinates. If the section span is infinite, choose arbitrarily many independent keys and repeat. This proves the first part of Theorem~\ref{thm:attention}. The weighted-prefix assumption is retained; the proof is not a lower bound of that size for all unweighted fixed-length kernel models.

\subsection{Independence of real exponential sections}
\begin{lemma}\label{lem:exponentials}
For distinct $k_1,\ldots,k_m\in\R^d$, the functions $q\mapsto e^{q^\top k_i}$ are linearly independent on every nonempty open subset of $\R^d$.
\end{lemma}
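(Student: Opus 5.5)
We reduce to one real variable and then argue by induction on $m$. Suppose $\sum_{i=1}^m c_i e^{q^\top k_i}=0$ for all $q$ in a nonempty open set $U\subset\R^d$. The left side is the restriction of an entire function of $q\in\R^d$; indeed it is real analytic on all of $\R^d$. Since $U$ is open and $\R^d$ is connected, analytic continuation shows that the identity holds on all of $\R^d$. This first step is the only place where openness is used, and it lets us choose convenient queries afterward.

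Next we separate the keys by a single direction. Because the $k_i$ are distinct, the finitely many nonzero vectors $k_i-k_j$ ($i\ne j$) each have an orthogonal hyperplane. A finite union of hyperplanes is not all of $\R^d$, so there is a unit vector $u$ with $u^\top k_i\ne u^\top k_j$ for all $i\ne j$. Restricting to $q=q_0+tu$ gives
\[
 \sum_i c_i e^{q_0^\top k_i}\,e^{t\lambda_i}=0\quad\text{for all } t\in\R,
\]
where $\lambda_i=u^\top k_i$ are distinct reals. As a fallback, we could instead restrict to a segment $q_0+tu$ inside $U$ for small $|t|$ and use one-variable analyticity in $t$. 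That version avoids the multivariate continuation step.

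We now show that distinct real exponentials $e^{\lambda_i t}$ are independent. Order the exponents so that $\lambda_1<\cdots<\lambda_m$ and multiply the identity by $e^{-\lambda_m t}$. Letting $t\to+\infty$ kills every term except the last, so its coefficient $c_m e^{q_0^\top k_m}$ vanishes. Since the exponential factor is positive, $c_m=0$. Induction on $m$ then gives $c_i=0$ for all $i$. Alternatively, the Wronskian at $t=0$ is a Vandermonde determinant in the distinct $\lambda_i$, which also gives the result.

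The only mildly delicate step is choosing the generic direction $u$, that is, checking that finitely many proper hyperplanes cannot cover $\R^d$. This holds for measure or Baire reasons. The rest is routine.
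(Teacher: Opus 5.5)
Your proof is correct, but your main route differs from the paper's. The paper's proof is your fallback combined with your Wronskian remark. It restricts the vanishing combination to a short segment $q_0+tv$ inside the open set, differentiates at $t=0$ through orders $0,\ldots,m-1$, and inverts the resulting Vandermonde system in the distinct numbers $v^\top k_i$.

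Your main argument instead uses the identity theorem for real-analytic functions to extend the relation to all of $\R^d$ (or, along the line, to all $t\in\R$). That gives you an unbounded ray, and you then peel off the dominant exponential as $t\to+\infty$ by induction. The paper's version is purely local: it needs finitely many derivatives at one point and no continuation step. Yours replaces the Vandermonde determinant with an elementary dominance argument, at the cost of that analyticity step.

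Your peeling mechanism is the same one the paper uses in its logistic section, where it recovers the smallest scale $a_1$ from the exponential decay of the tail.
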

\begin{proof}
Choose $q_0$ in the open set and a direction $v$ for which $v^\top(k_i-k_j)\ne0$ whenever $i\ne j$. Such a direction exists because finitely many proper hyperplanes cannot cover $\R^d$. The line $q_0+tv$ stays in the query set for sufficiently small $|t|$. A vanishing linear combination there is
\[
 \sum_i c_i e^{q_0^\top k_i}e^{t v^\top k_i}=0.
\]
Differentiate at $t=0$ through orders $0,\ldots,m-1$. The resulting Vandermonde system in the distinct numbers $v^\top k_i$ is invertible. Hence all $c_i$ vanish.
\end{proof}

\subsection{Native unweighted exponential attention}
Fix $N$ disjoint open key balls. A history with one key in each ball and arbitrary $v_i\in\R^p$ varies over an open $N(d+p)$-dimensional domain. Suppose two such histories have identical one-token responses. Their $(B,Z)$ functions agree by Proposition~\ref{prop:attention}. Apply Lemma~\ref{lem:exponentials} to the union of their key sets in the equality
\[
 \sum_{i=1}^N e^{q^\top k_i}=\sum_{i=1}^N e^{q^\top k_i'}.
\]
The key multisets agree, and the disjoint balls fix matching indices. Apply the same independence to each output coordinate of $B$ to obtain $v_i=v_i'$. The dimension obstruction gives $N(d+p)$, attained by storing the original ordered keys and values at this length. On the zero-valued subfamily, only the $Nd$ key coordinates vary; the same denominator argument gives the lower bound and raw key storage attains it. Both statements are prefix bounds at fixed $N$, and imply no fixed-size continuous exact recurrent state across all $N$.

\subsection{Boundary cases that are not covered by the count}
Strict positivity ensures that the chosen continuation probes reveal the denominator at every query; kernels with zeros require an observability condition. An empty history has no old ratio but is handled by a known initial state. Repeated exponential keys merge by total value sums and multiplicities, so the distinct-key chart is required for the full count. A finite key dictionary permits a finite histogram state. A restricted finite query set can have finite kernel-section dimension. Arbitrary real encoders with no continuity or finite-precision condition are excluded from the exact theorem. None of these statements rules out approximate softmax memory on bounded domains or low-error tasks.

%% file: appendices/07_approx.tex
\section{Continuation-stable approximation and local curvature}\label{app:approx}
\subsection{Gradient evidence}
Let $w^+$ minimize $F+U$. Its gradient is zero. Strong convexity of $F$ and convexity of $U$ give
\[
 \lambda\norm{\widehat w-w^+}^2
 \le\inner{\nabla(F+U)(\widehat w)}{\widehat w-w^+}
 \le\norm{\nabla(F+U)(\widehat w)}\norm{\widehat w-w^+}.
\]
At $\widehat w\in\Omega$, the stated profile error and residual imply
\[
 \norm{\nabla(F+U)(\widehat w)}
 \le\norm{\nabla F(\widehat w)-\widehat G(\widehat w)}
 +\norm{\widehat G(\widehat w)+\nabla U(\widehat w)}\le\delta+\eta.
\]
If the displacement is zero the conclusion is immediate; otherwise divide by its norm to prove Proposition~\ref{prop:grad_approx}. The domain need not be convex for this argument, since strong monotonicity is global and the error is only evaluated at the returned point. Uniformity on $\Omega$ is a usable certificate before that point is known. The existence of a returned approximate root is a hypothesis, not a consequence of uniform approximation alone.

If $|p(t)-\rho'(t)|\le\varepsilon$ on a residual interval containing every $x_i^\top w-y_i$ for $w\in\Omega$, the triangle inequality proves \eqref{eq:moments}. If $p$ has degree $L$, each vector function $x_ip(x_i^\top w-y_i)$ is the gradient of a scalar polynomial of degree at most $L+1$; finitely many accumulated coefficients suffice. The bound scales with $\sum_i\norm{x_i}$, so an error tolerance for fixed prefix length cannot automatically be reused at all lengths. Neither a globally convex polynomial approximation nor existence of a particular numerical solver is asserted.

\subsection{Attention evidence}
Write errors $e_B=\widehat B-B$, $e_Z=\widehat Z-Z$, and let $(B_A,A)$ be the exact common continuation contributions. With $T=Z+A$,
\[
 \widehat f^+-f^+
 =\frac{e_B-f^+e_Z}{T+e_Z},\qquad
 f^+=\frac{B+B_A}{T}.
\]
Positive kernel weights make the exact output a convex combination of values, so $\norm{f^+}\le V$. Also $T+e_Z\ge T-\delta Z=T(1-\theta\delta)>0$, where $\theta=Z/T$. Thus
\[
 \norm{\widehat f^+-f^+}
 \le\frac{\eta Z+V\delta Z}{T(1-\theta\delta)}
 =\frac{\theta(\eta+\delta V)}{1-\theta\delta}.
\]
This proves Proposition~\ref{prop:att_approx}. The approximated denominator need not come from positive individual pseudo-weights, but it must satisfy the stated relative error. This is not a bound based only on the old normalized output.

\subsection{When predictor plus Hessian is a controlled approximation}
The exact logistic obstruction does not negate the local value of curvature. The following elementary estimate makes that scope explicit.
\begin{proposition}\label{prop:curvature}
Let $F$ be $\lambda$-strongly convex with minimizer $w_0$, and let $\ell$ be a differentiable convex appended loss. Put $w^+=\argmin(F+\ell)$, $v=w^+-w_0$, $H=\nabla^2F(w_0)$, and $g=\nabla\ell(w_0)$. Suppose along the segment joining $w_0$ and $w^+$, $\nabla^2F$ is operator-norm $L$-Lipschitz and $\nabla\ell$ is $L_\ell$-Lipschitz. Then
\[
 \norm v\le\norm g/\lambda,\qquad
 \norm{v+H^{-1}g}
 \le\frac{L\norm g^2}{2\lambda^3}+\frac{L_\ell\norm g}{\lambda^2}.
\]
\end{proposition}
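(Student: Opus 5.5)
We prove the two bounds in sequence, writing both as consequences of the stationarity condition $\nabla F(w^+)+\nabla\ell(w^+)=0$ together with $\nabla F(w_0)=0$.

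\emph{First bound.} Strong convexity of $F$ and convexity of $\ell$ make $\nabla(F+\ell)$ $\lambda$-strongly monotone. Pairing the two zero-gradient conditions with the displacement $v$, and using monotonicity of $\nabla\ell$, gives
\[
 \lambda\norm v^2\le\inner{\nabla F(w^+)-\nabla F(w_0)}{v}=-\inner{\nabla\ell(w^+)}{v}\le-\inner{g}{v}\le\norm g\norm v .
\]
This yields $\norm v\le\norm g/\lambda$, exactly as in the proof of Proposition~\ref{prop:grad_approx}.

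\emph{Second bound.} Expand $\nabla F$ along the segment: $\nabla F(w^+)=Hv+\int_0^1(\nabla^2F(w_0+sv)-H)v\,ds$. The Lipschitz hypothesis on $\nabla^2F$ bounds the remainder $E_F$ by $L\norm v^2/2$. Similarly write $\nabla\ell(w^+)=g+E_\ell$ with $\norm{E_\ell}\le L_\ell\norm v$. Stationarity becomes $Hv+g=-(E_F+E_\ell)$. Since $H\succeq\lambda I$ (from strong convexity of a twice-differentiable $F$), we have $\norm{H^{-1}}\le1/\lambda$, so
\[
 \norm{v+H^{-1}g}\le\frac1\lambda\Bigl(\frac{L}{2}\norm v^2+L_\ell\norm v\Bigr).
\]
Substituting the first bound gives $L\norm g^2/(2\lambda^3)+L_\ell\norm g/\lambda^2$.

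The only delicate point is justifying the integral Taylor formula and $H\succeq\lambda I$. This requires $\nabla^2F$ to exist along the segment, which is implicit in the Lipschitz hypothesis. The Lipschitz condition on $\nabla\ell$ is likewise used only between $w_0$ and $w^+$, which is why the hypotheses are localized to the segment. No other obstacle is expected.
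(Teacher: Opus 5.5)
Your proof is correct and follows the paper's argument step for step. The paper also uses strong monotonicity for the displacement bound, integral Taylor remainders for $\nabla F$ and $\nabla\ell$ along the segment, stationarity to obtain $v+H^{-1}g=-H^{-1}(r_F+r_\ell)$, and $\norm{H^{-1}}_\op\le1/\lambda$. Your explicit chain for $\lambda\norm v^2\le-\inner{g}{v}$, which uses monotonicity of $\nabla\ell$, spells out the step the paper states in one line.
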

\begin{proof}
Strong monotonicity of $\nabla(F+\ell)$ between $w_0$ and $w^+$ gives $\lambda\norm v\le\norm g$. Taylor integration along the segment yields $\nabla F(w_0+v)=Hv+r_F$ with $\norm{r_F}\le(L/2)\norm v^2$. Also $\nabla\ell(w_0+v)=g+r_\ell$ with $\norm{r_\ell}\le L_\ell\norm v$. Stationarity gives $v+H^{-1}g=-H^{-1}(r_F+r_\ell)$. Since $\norm{H^{-1}}_\op\le1/\lambda$, substitute the displacement bound to obtain the result.
\end{proof}
For an appended loss scaled by a small $t$, both $g$ and $L_\ell$ scale by $t$, so the error is $O(t^2)$ under fixed local constants. Such scaling is used only for this approximation statement; the exact main lower bounds use ordinary unit-weight observations. A useful first-order response approximation is not an exact finite-dimensional continuation state.

%% file: appendices/08_coherence.tex
\section{Supplement: static observation versus a reusable operator}\label{app:coherence}
The static counterpart of continuation sufficiency is whether a present observation of a shared operator supports an additional probe. We first state this criterion, then develop its positive-operator specialization. Appendices~\ref{app:complete}--\ref{app:fusion} give the associated Complete Attention construction, response geometry, execution results, and a fusion-memory application.

\subsection{A quotient that supports one read need not support the next}
Let $V,Y,Z$ be finite-dimensional real vector spaces and $L:V\to Y$, $M:V\to Z$ linear. The current observation is $Lv$ and a second required observation is $Mv$. There exists a decoder $D$ with $Mv=D(Lv)$ for every $v\in V$ if and only if
\[
 \ker L\subseteq\ker M.
\]
Necessity follows by comparing $v$ and $v+u$ for $u\in\ker L$. For sufficiency, define $D(Lv)=Mv$ on the image of $L$; the kernel condition makes this well defined. The decoder can be linear there.

On a nonempty open subset of $V$, a continuous state supporting both observations requires at least $\rank(L,M)$ coordinates. Choose a linear complement to $\ker(L,M)$ and an affine slice through an interior point. Its intersection with the open set contains an open neighborhood of dimension $\rank(L,M)$ distinguished by the joint readout, so Lemma~\ref{lem:dimension} applies. Storing the joint image attains this dimension. A minimal present quotient need not be reusable for another observation of the same action.

\subsection{Multiple probes for positive operator mixtures}
Let $W_i\in\Sym_d$ satisfy $W_i\succeq0$ and $\sum_iW_i=I$. In the positive-definite interior, a tangent is a tuple of symmetric $D_i$ with $\sum_iD_i=0$. Suppose source $i$ is probed by a matrix $X_i\in\R^{d\times b}$, centered so $\sum_iX_i=0$. The response tangent is
\[
 L_XD=\sum_iD_iX_i\in\R^{d\times b}.
\]
With Frobenius inner products, writing $\sym(A)=(A+A^\top)/2$, its adjoint and Gram operator are
\begin{align*}
 (L_X^*Z)_i&=\sym(ZX_i^\top),\\
 L_XL_X^*Z&=\frac12\left[Z\sum_iX_i^\top X_i+\sum_iX_iZ^\top X_i\right].
\end{align*}
To verify the adjoint, use $\inner{Z}{D_iX_i}=\inner{\sym(ZX_i^\top)}{D_i}$ for symmetric $D_i$. The tuple on the right sums to zero by centering, so it lies in the tangent space without an additional projection. Multiplying by $X_i$ and summing gives the Gram formula.

For one vector probe, $X_i=w_i\in\R^d$, the formula is
\[
 L_wL_w^*=\tfrac12\left[\left(\sum_i\norm{w_i}^2\right)I+\sum_iw_iw_i^\top\right].
\]
It is positive definite unless all centered values vanish. Thus one generic read has only $d$ observable directions, while the complete action tuple has $(n-1)s_d$ tangent coordinates. By using sufficient source-contrast probes, one recovers every $D_i$: for each $i<n$, a probe contrasting source $i$ with source $n$ and carrying the basis columns identifies $D_i-D_n$. These contrasts, together with $\sum_iD_i=0$, recover the tuple. Static pointwise observability can therefore be much smaller than the state needed for all probes.

\subsection{Individually attainable columns can be jointly incoherent}
Take two sources $V_1=I_2$, $V_2=-I_2$, and let $Y=(W_1-W_2)=2W_1-I_2$. Feasible $Y$ are exactly symmetric contractions. For $0<\gamma\le1$, consider the target
\[
 T=\gamma\begin{pmatrix}0&-1\\1&0\end{pmatrix}.
\]
Its first column is attained by the symmetric contraction $\begin{psmallmatrix}0&\gamma\\\gamma&0\end{psmallmatrix}$; its second by $\begin{psmallmatrix}0&-\gamma\\-\gamma&0\end{psmallmatrix}$. But no shared feasible operator attains both, since $T$ is antisymmetric. For any symmetric $Y$, Frobenius orthogonality gives
\[
 \norm{Y-T}_\F^2=\norm Y_\F^2+\norm T_\F^2\ge2\gamma^2,
\]
attained by $Y=0$, equivalently $W_1=W_2=I_2/2$. Solving each response separately does not construct a coherent reusable action. The example parallels the distinction in the main paper, but supplies no recurrent-memory lower bound by itself.

%% file: appendices/10_complete_attention.tex
\section{Complete Attention: normalization and reusable feature actions}\label{app:complete}
This appendix gives the operator construction underlying the observation map in Appendix~\ref{app:coherence}. Complete Attention represents a \emph{shared action} by positive matrices summing to the identity. Its observable coordinates, developed in Appendix~\ref{app:observable}, compress a specified collection of responses. Neither operation makes the resulting response coordinates sufficient for an additional, previously unspecified observation. We state the construction and its proofs independently of any architectural performance claim.

\subsection{The operator simplex and its entropy normalization}
For $n\ge2$, let $\mathbb H_d$ denote complex Hermitian matrices and set
\[
 \mathcal P_{n,d}=\{(W_i)_{i=1}^n:W_i\succeq0,\ \sum_iW_i=I_d\},
 \qquad U_i=I_d/n.
\]
Real symmetric matrices give the real version used for payloads below. The tuple acts by $v\mapsto\sum_iW_iv_i$. It preserves common values. Moreover $0\preceq W_i\preceq I$ implies $\sum_iW_i^2\preceq I$, so the block row $[W_1\ \cdots\ W_n]$ is a contraction:
\begin{equation}\label{eq:ca_contraction}
 \left\|\sum_iW_iv_i\right\|^2\le\sum_i\|v_i\|^2.
\end{equation}
Positive operator measures and constrained matrix-entropy optimization are established objects \citep{watrous2018,pavlov2024}; the statements below specify the normalization and coordinates used here.

\begin{theorem}[Complete Softmax]\label{thm:ca_normalizer}
For Hermitian scores $S_1,\ldots,S_n$ and $\tau>0$, there is a unique Hermitian $L=L_\tau(S)$ satisfying
\begin{equation}\label{eq:ca_normalization}
 W_i=\exp((S_i-L)/\tau),\qquad \sum_iW_i=I.
\end{equation}
The tuple $W=\operatorname{CSoftmax}_\tau(S)$ is the unique maximizer of
\[
 \Lambda_\tau(S)=\max_{W\in\mathcal P_{n,d}}
 \left\{\sum_i\operatorname{Tr}(W_iS_i)-\tau\sum_i\operatorname{Tr}(W_i\log W_i)\right\},
\]
where $0\log0=0$. It is strictly positive, depends analytically on the scores, and satisfies $\Lambda_\tau(S)=\operatorname{Tr}L$ and $\nabla_{S_i}\Lambda_\tau=W_i$.
\end{theorem}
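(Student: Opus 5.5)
The plan is to treat the problem as a strictly concave maximization over the compact convex set $\mathcal P_{n,d}$, get existence and uniqueness of the maximizer, and then read off the exponential form and the normalizer $L$ from the KKT conditions.

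\emph{Existence and uniqueness.} The objective $\Phi(W)=\sum_i\operatorname{Tr}(W_iS_i)-\tau\sum_i\operatorname{Tr}(W_i\log W_i)$ is continuous on $\mathcal P_{n,d}$. This uses the convention $0\log0=0$ and continuity of $t\mapsto t\log t$ on $[0,1]$, lifted through the spectral calculus. The objective is strictly concave, because $X\mapsto\operatorname{Tr}X\log X$ is strictly convex on positive semidefinite matrices. Compactness then gives a maximizer, and strict concavity makes it unique.

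\emph{Strict positivity.} This is the step I expect to be the main obstacle, since the KKT analysis needs an interior point. Suppose some $W_i$ has a kernel vector $u$. Take the interior point $U=(I/n)_i$ and the segment $W+\varepsilon(U-W)$. The linear part of the derivative in $\varepsilon$ is finite. The entropy part has derivative $-\tau\operatorname{Tr}\bigl((U_i-W_i)\log W_i\bigr)$ plus bounded terms. This tends to $+\infty$ as $\varepsilon\downarrow0$, because $\log$ blows up on $\ker W_i$ while $u^*U_iu>0$. I would make this precise through the spectral decomposition of $W_i+\varepsilon(U_i-W_i)$: the eigenvalues starting at zero grow linearly in $\varepsilon$, contributing $-c\,\varepsilon\log\varepsilon$, whose derivative diverges. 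So a boundary point cannot be optimal, and every $W_i\succ0$.

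\emph{Stationarity and the form of $W_i$.} Since the maximizer is interior, Lagrange conditions apply with the single Hermitian constraint $\sum_iW_i=I$. We have $\nabla\operatorname{Tr}(X\log X)=\log X+I$, so stationarity reads $S_i-\tau(\log W_i+I)=\Lambda$ for a Hermitian multiplier $\Lambda$. Setting $L=\Lambda+\tau I$ gives $W_i=\exp((S_i-L)/\tau)$ with $\sum_iW_i=I$. This shows existence of $L$.

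\emph{Uniqueness of $L$.} Any Hermitian $L$ solving \eqref{eq:ca_normalization} yields a positive tuple $W_i=\exp((S_i-L)/\tau)$. This tuple satisfies the stationarity equations, which are sufficient for optimality of a concave function on an affine slice. Hence it is the unique maximizer. Then $L=S_i-\tau\log W_i$ is determined.

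\emph{Analyticity.} Apply the analytic implicit function theorem to $G(S,L)=\sum_i\exp((S_i-L)/\tau)-I$. Its derivative in $L$ is $-\tau^{-1}\sum_i\mathrm D\exp_{(S_i-L)/\tau}$. Each $\mathrm D\exp$ at a Hermitian point is positive definite as an operator on $\mathbb H_d$ with the trace inner product; in the eigenbasis it acts by divided differences of $\exp$, which are positive. So the sum is invertible, $L$ is analytic in $S$, and hence so is $W$.

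\emph{Value and gradient.} Substituting $\log W_i=(S_i-L)/\tau$ into $\Phi$ gives $\sum_i\operatorname{Tr}(W_iS_i)-\sum_i\operatorname{Tr}(W_i(S_i-L))=\operatorname{Tr}\bigl(L\sum_iW_i\bigr)=\operatorname{Tr}L$. The gradient follows by Danskin's envelope theorem for a maximum of functions linear in $S$, since the maximizer is unique. Alternatively, differentiate $\operatorname{Tr}L$ through the constraint: the derivative of $\sum_iW_i=I$ gives $\sum_i\operatorname{Tr}\bigl(W_i(\dot S_i-\dot L)\bigr)=0$, using $\operatorname{Tr}(\mathrm D\exp_X[H])=\operatorname{Tr}(e^XH)$. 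Hence $\operatorname{Tr}\dot L=\sum_i\operatorname{Tr}(W_i\dot S_i)$, which gives $\nabla_{S_i}\Lambda_\tau=W_i$.
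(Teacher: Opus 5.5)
Your proposal is correct and follows the paper's proof essentially step for step. Compactness and strict concavity give a unique maximizer, and mixing toward $U=(I/n)_i$ rules out singular maximizers through the $\varepsilon\log\varepsilon$ entropy term. Lagrange stationarity on $\sum_iW_i=I$ yields $L$, and sufficiency of stationarity gives its uniqueness. Positive definiteness of $\sum_iD\exp_{A_i}$ gives analyticity, and substitution plus an envelope argument gives $\Lambda_\tau(S)=\operatorname{Tr}L$ and $\nabla_{S_i}\Lambda_\tau=W_i$.

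The differences are cosmetic. You cite strict convexity of $X\mapsto\operatorname{Tr}X\log X$ instead of proving it by the divided-difference bound. Your identity $\operatorname{Tr}D\exp_X[H]=\operatorname{Tr}(e^XH)$ reaches the gradient slightly more directly than the paper's cancellation of the $\dot W$ terms. In the positivity step, the logarithm should be taken at $(1-\varepsilon)W_i+\varepsilon I/n$ rather than at $W_i$; that matrix commutes with $W_i$, so your eigenvalue claim is exact.
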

\begin{proof}
The feasible set is compact. Matrix entropy is strictly convex, so the continuous objective has a unique maximizer. To check strict convexity and a useful stronger bound, diagonalize a positive $W\preceq I$. In that basis,
\[
 \langle H,D\log_W[H]\rangle
 =\sum_{j,k}\frac{\log\lambda_j-\log\lambda_k}{\lambda_j-\lambda_k}|H_{jk}|^2
 \ge\|H\|_F^2,
\]
with diagonal divided differences interpreted as $1/\lambda_j$. Integrating along segments proves strong convexity in the interior and, by continuity, strict convexity on the closure.

A maximizing tuple cannot have a zero eigenvalue. Replace it by $(1-t)W+tU$. For each positive eigenvalue its entropy changes by $O(t)$; each zero eigenvalue contributes $(t/n)\log(t/n)$. The score term changes by $O(t)$. Thus the negative-entropy gain is positive and of order $-t\log t$, contradicting optimality for small $t>0$.

At an interior maximizer, stationarity under $\sum_i\dot W_i=0$ gives
$S_i-\tau(\log W_i+I)=K$ for one Hermitian $K$. Put $L=K+\tau I$ to obtain \eqref{eq:ca_normalization}. Conversely, a tuple satisfying that equation is stationary and hence uniquely optimal. Its $L=S_i-\tau\log W_i$ is unique. Substitution and $\sum_iW_i=I$ give $\Lambda_\tau(S)=\operatorname{Tr}L$.

For analyticity, write $A_i=(S_i-L)/\tau$. The derivative with respect to $L$ of $\sum_i e^{A_i}-I$ is $-\tau^{-1}\mathcal H$, where
\[
 \mathcal H[H]=\sum_iD\exp_{A_i}[H],\qquad
 D\exp_A[H]=\int_0^1 e^{(1-t)A}He^{tA}\,dt.
\]
In an eigenbasis of $A$, every coefficient of this derivative is a strictly positive exponential divided difference. Hence $\mathcal H$ is positive definite and invertible on Hermitian matrices. The analytic implicit-function theorem gives local analyticity, and uniqueness joins the local solutions globally. Differentiating the optimized value gives the displayed gradient; the terms involving $\dot W$ cancel by stationarity and conservation.
\end{proof}

\subsection{Relative scores, inverse coordinates, and the feature algebra}
\begin{proposition}[Exact score and algebra identities]\label{prop:ca_algebra}
Complete Softmax is equivariant under unitary conjugation and direct sums, and is invariant under every common Hermitian shift $S_i\mapsto S_i+C$. It induces a bijection from score tuples modulo common shifts to the strictly positive operator simplex, with inverse $[S_i]=[\tau\log W_i]$. In particular,
\begin{equation}\label{eq:ca_logodds}
 \tau(\log W_i-\log W_j)=S_i-S_j,
 \qquad
 C^*(I,W_1,\ldots,W_n)=C^*(I,S_2-S_1,\ldots,S_n-S_1).
\end{equation}
\end{proposition}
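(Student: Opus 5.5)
Each claim follows from the normalization equation \eqref{eq:ca_normalization} and the uniqueness of $L_\tau(S)$ in Theorem~\ref{thm:ca_normalizer}. The pattern is the same every time: propose a candidate normalizer for the transformed scores and check that it satisfies the defining equation. Uniqueness then identifies the transformed tuple.

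\emph{Equivariance and invariance.} For a unitary $U$, the candidate $ULU^*$ gives
\[
 \exp\bigl((US_iU^*-ULU^*)/\tau\bigr)=UW_iU^*,
\]
and these sum to $I$. So $\operatorname{CSoftmax}_\tau(US U^*)=UWU^*$. For direct sums $S_i=S_i'\oplus S_i''$, take $L'\oplus L''$; the exponential respects block structure, so the normalization holds blockwise. For a common shift $S_i+C$, take $L+C$. Then $(S_i+C)-(L+C)=S_i-L$, the tuple is unchanged, and this gives the invariance.

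\emph{Bijection.} The map is well defined on classes modulo common shifts by the invariance just shown. For surjectivity onto strictly positive tuples, start from a tuple with $W_i\succ0$ and $\sum_iW_i=I$. Put $S_i=\tau\log W_i$ and $L=0$. Then $\exp((S_i-L)/\tau)=W_i$ sums to $I$, so uniqueness gives $\operatorname{CSoftmax}_\tau(S)=W$. For injectivity, suppose two score tuples give the same $W$, with normalizers $L$ and $L'$. Then
\[
 S_i=\tau\log W_i+L,\qquad S_i'=\tau\log W_i+L',
\]
so $S_i'-S_i=L'-L$ is a common Hermitian shift. This proves the inverse $[S_i]=[\tau\log W_i]$. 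The log-odds identity in \eqref{eq:ca_logodds} follows by subtracting $S_i=\tau\log W_i+L$ and $S_j=\tau\log W_j+L$.

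\emph{Algebra identity.} I would prove the two inclusions separately, and I expect this step to be the main obstacle. The difficulty is that \eqref{eq:ca_logodds} gives differences of logarithms, not a direct algebraic relation between $W$ and the differences $S_j-S_1$. Both sides are unital $C^*$-subalgebras of the finite-dimensional $M_d(\mathbb C)$, so each is closed under continuous functional calculus of its self-adjoint elements.

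\begin{itemize}
\item[$\supseteq$.] I would exhibit each $W_i$ as a function of $I$ and the differences $S_j-S_1$ inside the right algebra $\mathcal B$. By shift invariance, replace $S_i$ by $S_i-S_1$, so $S_1=0$ and every score lies in $\mathcal B$. The normalizer $L$ is the unique solution of $\sum_i\exp((S_i-L)/\tau)=I$. I would show $L\in\mathcal B$ by uniqueness together with the positive definiteness of $\mathcal H$, restricted to $\mathcal B$. Concretely, solving the equation inside $\mathcal B$, a finite-dimensional $C^*$-algebra, gives a solution there. Existence inside $\mathcal B$ follows by applying Theorem~\ref{thm:ca_normalizer} to $\mathcal B$ viewed as a direct sum of matrix blocks, combined with direct-sum equivariance. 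Global uniqueness then forces this solution to equal $L$. Consequently $W_i=\exp((S_i-L)/\tau)\in\mathcal B$.
\item[$\subseteq$.] Strict positivity makes $\log W_i$ a continuous function of $W_i$, so $\log W_i$ lies in the left algebra. Hence $S_j-S_1=\tau(\log W_j-\log W_1)$ lies in the left algebra as well.
\end{itemize}
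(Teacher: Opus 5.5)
Your proof is correct. Apart from one inclusion it follows the paper's argument; your injectivity step is a more explicit version of the paper's ``subtracting logarithms''. The exception is $W_i\in\mathcal B=C^*(I,S_2-S_1,\ldots,S_n-S_1)$, where you take a different route. (Your item labels $\supseteq$ and $\subseteq$ are swapped relative to the displayed equality; this is harmless.)

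\textbf{The paper's route.} It never decomposes $\mathcal B$. After shifting to $S_1=0$, every unitary $U$ in the commutant $\mathcal B'$ fixes all the scores. Conjugation equivariance and uniqueness then give $UW_iU^*=W_i$. Since $\mathcal B'$ is spanned by its unitaries, the bicommutant theorem yields $W_i\in\mathcal B''=\mathcal B$.

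\textbf{Your route.} You solve the normalization equation inside $\mathcal B$ and then invoke global uniqueness. To make this rigorous you need the structure theorem in the precise form $V\mathcal BV^*=\bigoplus_k M_{n_k}(\mathbb C)\otimes I_{m_k}$ for some unitary $V$. The steps are then:
\begin{enumerate}
\item apply Theorem~\ref{thm:ca_normalizer} to each block score tuple in $M_{n_k}(\mathbb C)$;
\item use direct-sum equivariance with $m_k$ identical copies to handle the multiplicities;
\item use unitary equivariance to undo $V$.
\end{enumerate}
Your sketch passes over the multiplicities and the change of basis, though both are routine. Your remark about positive definiteness of $\mathcal H$ restricted to $\mathcal B$ is not needed.

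\textbf{Comparison.} The paper's symmetry argument is shorter and coordinate-free, and it uses only the equivariance already established. Yours is more constructive. It shows explicitly that the normalizer $L$ lies in $\mathcal B$ and is computed block by block on the irreducible components. That is the same direct-sum mechanism the paper later uses for the small-block formula \eqref{eq:ca_smallblock}.
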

\begin{proof}
Conjugation, direct sums, and simultaneous replacement of $(S,L)$ by $(S+C,L+C)$ preserve \eqref{eq:ca_normalization}; uniqueness proves the transformation laws. Conversely, $S_i=\tau\log W_i$ and $L=0$ normalize any interior tuple. Subtracting its logarithms proves the inverse and log-odds statements.

Let $\mathcal A=C^*(I,S_i-S_1:i\ge2)$ and first shift to $S_1=0$. Every unitary in the commutant $\mathcal A'$ fixes the scores, and therefore fixes every $W_i$ by uniqueness. The finite-dimensional bicommutant theorem gives $W_i\in\mathcal A$. Conversely \eqref{eq:ca_logodds} and finite-dimensional functional calculus put every relative score in the algebra generated by the $W_i$. This proves equality. Real scores give real effects by complex conjugation and uniqueness.
\end{proof}
The algebra identity concerns relative scores: noncommutativity caused only by a common shift has no effect. It does not assert that the integer-valued algebra type is stable under arbitrary perturbations.

A brief choice characterization explains the logarithms. Suppose relative evidence $O_d(A,B)$ on positive matrices obeys a cocycle, unitary/direct-sum naturality, scalar multiplicative composition, and scalar order. The cocycle writes it as $\theta_d(A)-\theta_d(B)$ with $\theta_d(A)=O_d(A,I)$. The scalar function $t\mapsto\theta_1(e^t)$ is additive and increasing, hence equals $ct$ for $c>0$. Direct sums and the spectral theorem then give
$O_d(A,B)=c(\log A-\log B)$.

Suppose further that all menu sizes share a natural pair map $O_d(W_i,W_j)=\Phi_d(S_i-S_j)$ and positive scalar score orientation. Applying the cocycle to the three scores $(A+B,B,0)$ gives additivity of $\Phi_d$ on arbitrary Hermitian matrices. To see that this is real linear without an unstated continuity assumption, write $f=\Phi_1$. Conjugate $\operatorname{diag}(x,y)$ by the two real rotations with first rows $(\sqrt t,\pm\sqrt{1-t})$. Their off-diagonal entries cancel on addition. Additivity, direct sums, and conjugation naturality give
\[
 f(tx+(1-t)y)=tf(x)+(1-t)f(y),\qquad 0\le t\le1.
\]
Here $f(2z)=2f(z)$ follows from additivity. Taking $y=0,x=1$ and extending by additivity yields $f(z)=\alpha z$ on all reals. Spectral decomposition gives $\Phi_d(A)=\alpha A$; orientation gives $\alpha>0$. Thus the rule is exactly \eqref{eq:ca_logodds} with $\tau=c/\alpha$, whose normalized solution is Theorem~\ref{thm:ca_normalizer}. These are explicit assumptions on a choice rule, not consequences of a generic attention architecture.

\subsection{Small-block implementation and a finite-score subspace router}
If $U\in\mathbb C^{d\times m}$ is an isometry and
$S_i=s_iI_d+UH_iU^*$, let $p_i=\operatorname{softmax}_i(s/\tau)$ and
$\widetilde W=\operatorname{CSoftmax}_\tau((s_iI_m+H_i)_i)$. Direct-sum naturality gives
\begin{align}\label{eq:ca_smallblock}
 W_i&=p_iI_d+U(\widetilde W_i-p_iI_m)U^*,\\
 \sum_iW_iv_i&=\sum_ip_iv_i+U\sum_i(\widetilde W_i-p_iI_m)U^*v_i.\nonumber
\end{align}
With the subspace and normalized blocks supplied, application uses $O(ndm+nm^2)$ arithmetic. Basis and score generation, dense $m\times m$ exponential evaluations, and normalization iterations are separate costs. The formula does not provide a free contextual frame.

For two sources the effects commute, since $W_2=I-W_1$. Equation~\eqref{eq:ca_logodds} gives
\[
 W_1=\operatorname{sigmoid}((S_1-S_2)/\tau),\qquad W_2=I-W_1,
\]
where the sigmoid is applied spectrally. Given a projector $P$, use score difference $\gamma(2P-I)$. Then
\[
 W_{1,\gamma}=\sigma(\gamma/\tau)P+\sigma(-\gamma/\tau)(I-P),
 \quad \|W_{1,\gamma}-P\|_\op=\sigma(-\gamma/\tau)\le e^{-\gamma/\tau}.
\]
The limiting binary action on $(x,-x)$ is $(2P-I)x$, with finite-score output error at most $2e^{-\gamma/\tau}\|x\|$. Exact boundary projection requires the limit, not finite logits. Although $C^*(I,P)$ has only one-dimensional irreducible blocks, specifying $P$ across contexts still costs information. Small local block size is not a small reusable memory code.

\subsection{Implicit differentiation}
For differentiable scores, differentiating \eqref{eq:ca_normalization} gives
\begin{equation}\label{eq:ca_implicit}
 \dot L=\mathcal H^{-1}\sum_iD\exp_{A_i}[\dot S_i],\qquad
 \dot W_i=\tau^{-1}D\exp_{A_i}[\dot S_i-\dot L].
\end{equation}
This follows by summing $\dot W_i$ and setting the sum to zero. Positive definiteness of $\mathcal H$ makes the solve well posed for finite scores. Its condition number need not be uniformly bounded near the simplex boundary. A numerical implementation must check the normalization residual rather than assuming that a fixed iteration count gives exact effects.

%% file: appendices/11_observable_coordinates.tex
\section{Observable Complete Attention coordinates and their exact limitation}\label{app:observable}
We now make precise the response compression used in the earlier operator construction. The result applies to the multi-probe map $L_X$ from Appendix~\ref{app:coherence}, including rank-deficient designs. It describes a \emph{fixed observation map}, not a state closed under arbitrary future probes.

\subsection{A complete coordinate system for the observable quotient}
Work over real symmetric matrices and let
\[
 \mathcal T=\{(D_i):D_i\in\Sym_d,\ \sum_iD_i=0\}.
\]
Let $L:\mathcal T\to\mathcal V$ be a surjective linear map onto its image, endowed with a Euclidean inner product; put $r_L=\dim\mathcal V$. Define
\[
 \mathcal C=L(\mathcal P_{n,d}-U),\qquad
 H_\tau(W)=\tau\sum_i\operatorname{Tr}\bigl(W_i\log(nW_i)\bigr),
\]
\[
 A_L(q)=\max_{W\in\mathcal P_{n,d}}
 \{\langle q,L(W-U)\rangle-H_\tau(W)\},\qquad q\in\mathcal V.
\]
The convention $0\log0=0$ is understood. The entropy satisfies $0\le H_\tau(W)\le\tau d\log n$: the lower bound follows by convexity and the uniform tuple, and the upper by $\lambda\log\lambda\le0$ for effect eigenvalues $\lambda\in[0,1]$.

\begin{theorem}[Complete observable coordinates]\label{thm:ca_coordinates}
Let $W(q)=\operatorname{CSoftmax}_\tau(L^*q)$ and $F_L(q)=L(W(q)-U)$. For $r_L>0$, the map
\[
 F_L:\mathcal V\longrightarrow\operatorname{relint}\mathcal C
\]
is an analytic diffeomorphism. It obeys
\begin{equation}\label{eq:ca_hessian}
 \nabla A_L=F_L,\qquad
 0\prec\nabla^2 A_L=L\mathcal H_{W,\mathcal T}^{-1}L^*
 \preceq\tau^{-1}LL^*,
\end{equation}
where $\mathcal H_{W,\mathcal T}$ is the Hessian of $H_\tau$ restricted to $\mathcal T$. Every boundary response is a limit of finite-query responses. No locally Lipschitz decoder on an open subset of $\R^m$, $m<r_L$, can cover this relative interior. When $r_L=0$, the only response is zero.
\end{theorem}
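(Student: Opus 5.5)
The plan is to treat $A_L$ as the convex conjugate of the entropy restricted to the affine image, i.e.\ $A_L(q)=\Lambda_\tau(L^*q)$ shifted by the constant $-\langle q,L U\rangle$ part and the normalization $\tau d\log n$. The first step is to identify $\langle q,L(W-U)\rangle=\langle L^*q,W-U\rangle$ in the trace pairing on $\mathcal T$. Since $\sum_i(W_i-U_i)=0$, pairing with a tuple $L^*q\in\mathcal T$ agrees with $\sum_i\operatorname{Tr}(W_iS_i)$ for $S=L^*q$ up to a $W$-independent constant. Hence Theorem~\ref{thm:ca_normalizer} applies directly. It shows that the maximizer is $W(q)=\operatorname{CSoftmax}_\tau(L^*q)$, unique and strictly positive, and that $A_L$ is analytic. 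The envelope formula $\nabla_{S_i}\Lambda_\tau=W_i$, composed with $L^*$, then gives $\nabla A_L=L(W(q)-U)=F_L$.

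For the Hessian, differentiate the stationarity condition $\nabla H_\tau(W)|_{\mathcal T}=L^*q$ (modulo the normal direction $\{(C,\dots,C)\}$). This yields $\dot W=\mathcal H_{W,\mathcal T}^{-1}L^*\dot q$, where $\mathcal H_{W,\mathcal T}$ is invertible because it is positive definite on $\mathcal T$. Hence $\nabla^2A_L=L\mathcal H^{-1}_{W,\mathcal T}L^*$. Two facts then give the bounds in \eqref{eq:ca_hessian}. First, the divided-difference bound in the proof of Theorem~\ref{thm:ca_normalizer} gives $\mathcal H\succeq\tau I$ on each block, hence on $\mathcal T$. Second, $L^*$ is injective because $L$ is onto $\mathcal V$. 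Together these give $0\prec\nabla^2A_L\preceq\tau^{-1}LL^*$. Strict positivity of the Hessian makes $F_L$ a local diffeomorphism, and strict convexity of $A_L$ makes it injective. Its image lies in $\operatorname{relint}\mathcal C$: the point $W(q)$ is interior in $\mathcal P_{n,d}$, and $L$ is an open linear map from the affine hull of $\mathcal P_{n,d}-U$, namely $\mathcal T$, onto $\mathcal V$, so it sends interior points into the relative interior.

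The main obstacle is surjectivity onto $\operatorname{relint}\mathcal C$. I would use the standard Legendre argument. Fix $c\in\operatorname{relint}\mathcal C$ and minimize $A_L(q)-\langle q,c\rangle$. Since $H_\tau$ is bounded by $\tau d\log n$, we have $A_L(q)$ within a bounded distance of the support function $\sigma_{\mathcal C}(q)$. Because $c$ is relatively interior and $\mathcal C$ spans $\mathcal V$, there is $\varepsilon>0$ with $\sigma_{\mathcal C}(q)-\langle q,c\rangle\ge\varepsilon\norm q$. The objective is therefore coercive and has a minimizer, at which $F_L(q)=c$.

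Boundary responses are handled separately. Since $\mathcal C$ is a compact convex set, its closure equals the closure of its relative interior, so every boundary response is a limit of finite-query responses. The decoder claim follows because a locally Lipschitz map from an open subset of $\R^m$ with $m<r_L$ has image of Hausdorff $r_L$-measure zero (a countable union of Lipschitz images of compacts). It therefore cannot cover the open $r_L$-dimensional set $\operatorname{relint}\mathcal C$. The case $r_L=0$ is trivial, since then $\mathcal V=0$.
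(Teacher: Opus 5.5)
Your proposal is correct, and apart from surjectivity it follows the paper's steps. You are also more explicit than the paper in two places: you derive $\nabla A_L=F_L$ from the envelope formula of Theorem~\ref{thm:ca_normalizer}, and you use openness of $L$ on $\mathcal T$ to check that the image of $F_L$ lies in $\operatorname{relint}\mathcal C$. One cosmetic correction: your ``constant'' is actually zero. Since $\sum_i(L^*q)_i=0$, you get $A_L(q)=\Lambda_\tau(L^*q)-\tau d\log n$, and ``$LU$'' is undefined because $U\notin\mathcal T$.

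The genuine difference is how you prove surjectivity. The paper argues in the primal. It builds a strictly positive lift of $z$ by extrapolating to $(1+\epsilon)z$ and mixing with $U$, and takes the entropy minimizer $W^*(z)$ on the compact fiber. It shows $W^*(z)$ is positive by a $t\log t$ eigenvalue argument, then applies the multiplier theorem to get $\nabla_{\mathcal T}H_\tau(W^*)=L^*q$, hence $W^*(z)=W(q)$. You argue in the dual. The sandwich $\sigma_{\mathcal C}-\tau d\log n\le A_L\le\sigma_{\mathcal C}$ and an $\varepsilon$-ball around $c$ inside $\mathcal C$ make $A_L-\inner{\cdot}{c}$ coercive, and its minimizer solves $F_L(q)=c$.

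Your route is shorter. It uses only the entropy bounds stated just before the theorem and avoids both the positivity-of-minimizer step and the multiplier theorem. The paper's route gives directly that $W(q)$ is the minimum-entropy lift $W^*(F_L(q))$, the link between finite queries and $\Omega_L$ used in the subsequent subsections. You can recover this in one line: for any $W'$ in the fiber over $c$ the pairing term is unchanged, so maximality of $W(q)$ in the definition of $A_L$ forces $H_\tau(W(q))\le H_\tau(W')$.
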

\begin{proof}
A neighborhood of $U$ in $U+\mathcal T$ is strictly positive, so its linear image contains a neighborhood of zero in $\mathcal V$. Thus $\mathcal C$ has nonempty interior relative to $\mathcal V$. The normalization theorem gives a unique positive optimizer, analytic in $q$. Differentiating its stationarity equation on $\mathcal T$ gives the Hessian identity. Entropy strong convexity gives $\mathcal H_{W,\mathcal T}\succeq\tau I$, and surjectivity of $L$ makes $L^*$ injective. This proves both inequalities in \eqref{eq:ca_hessian}. Integrating the positive Hessian along any line proves strict monotonicity and hence injectivity of $F_L$.

For surjectivity, take $z\in\operatorname{relint}\mathcal C$. If $z\ne0$, then $(1+\epsilon)z\in\mathcal C$ for some $\epsilon>0$. A feasible lift of this extrapolated point, mixed with $U$ in proportions $1/(1+\epsilon)$ and $\epsilon/(1+\epsilon)$, is a strictly positive lift of $z$. For $z=0$, use $U$. On the compact fiber producing $z$, entropy has a unique minimizer $W^*(z)$. It is positive: mixing any singular candidate toward the positive lift gives an entropy decrease of order $t\log t$ on zero eigenspaces, while all positive eigenvalues contribute $O(t)$. More explicitly, if the positive lift has smallest eigenvalue at least $c>0$, the eigenvalues emerging from zero lie between $ct$ and $C t$ by eigenvalue bounds; their $t\log t$ contribution dominates all $O(t)$ terms.

The equality-constraint multiplier theorem on this positive fiber gives $\nabla_{\mathcal T}H_\tau(W^*)=L^*q$ for some $q\in\mathcal V$: the derivative annihilates $\ker L$, whose orthogonal complement is $\operatorname{im}L^*$. The normalization theorem identifies $W^*$ with $W(q)$, proving surjectivity. The inverse-function theorem and \eqref{eq:ca_hessian} give an analytic inverse. Convex mixing of any boundary point with zero produces interior points converging to it.

Finally, a locally Lipschitz map from an open subset of $\R^m$ is covered by countably many Lipschitz restrictions to compact cubes. When $m<r_L$, their images have zero $r_L$-dimensional volume, by subdividing into cubes of side $\delta$ and letting $\delta\downarrow0$. Their union cannot contain a nonempty open subset of $\mathcal V$. This is a regular-coordinate lower bound, separate from the continuous encoder lower bound of Lemma~\ref{lem:dimension}.
\end{proof}

For $L=L_X$, use $\mathcal V=\operatorname{im}L_X\subset\R^{d\times b}$ and Frobenius inner products. Centering the source matrices gives
\[
 (L_X^*Q)_i=\operatorname{sym}(QX_i^\top),\qquad
 \operatorname{rank}(L_X^*Q)_i\le2b.
\]
The coordinate count is $\operatorname{rank}L_X$, not automatically $db$. Rank deficiency removes only presently unobservable directions. For one vector probe $w=(w_i)$ with $\sum_iw_i=0$ and $s_w=\sum_i\|w_i\|^2>0$, Appendix~\ref{app:coherence} gives
\[
 K_w=L_wL_w^*=\tfrac12\left(s_wI+\sum_iw_iw_i^\top\right),
 \qquad (s_w/2)I\preceq K_w\preceq s_wI.
\]
Thus one $d$-vector query, with rank-two scores $S_i=\operatorname{sym}(qw_i^\top)$, covers every interior response on those particular values.

\subsection{Observed-output learning and regularized coordinates}
Let $\Omega_L(z)=\min\{H_\tau(W):L(W-U)=z\}$ on $\mathcal C$ and $+\infty$ elsewhere. Compactness gives attainment; strict convexity gives a unique lift. The function is convex and lower semicontinuous, and $A_L=\Omega_L^*$. For an observed feasible response $z$, the loss
\[
 \ell(q;z)=A_L(q)-\langle q,z\rangle
\]
has gradient $F_L(q)-z$ and needs no labels for the individual operators. This is a specialization of Fenchel--Young learning \citep{blondel2020}, not a new convex-duality principle.

\begin{proposition}[Exact calibration and a regularized inverse]\label{prop:ca_calibration}
For $z\in\mathcal C$ and any finite $q$,
\begin{align}\label{eq:ca_calibration}
 \Delta_L(q;z)&:=A_L(q)-\langle q,z\rangle+\Omega_L(z)
 =\tau\sum_i D_q(W_i^*(z)\|W_i(q)),\\
 \Delta_L(q;z)&\ge\frac{\tau}{2\|L\|^2}\|F_L(q)-z\|^2\quad(L\ne0),\nonumber
\end{align}
where $D_q(A\|B)=\operatorname{Tr}(A(\log A-\log B)-A+B)$. For any $z_0\in\mathcal V$ and $\epsilon>0$, the unique minimizer
\[
 q_\epsilon=\argmin_q\{A_L(q)-\langle q,z_0\rangle+\epsilon\|q\|^2/2\}
\]
satisfies
\begin{equation}\label{eq:ca_prox}
 F_L(q_\epsilon)=\argmin_{z\in\mathcal C}
 \{\Omega_L(z)+\|z-z_0\|^2/(2\epsilon)\},\quad
 \|F_L(q_\epsilon)-\operatorname{Proj}_{\mathcal C}z_0\|^2
 \le2\epsilon\tau d\log n.
\end{equation}
\end{proposition}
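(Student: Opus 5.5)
The plan is to prove Proposition~\ref{prop:ca_calibration} by Fenchel--Young duality, using only the normalization theorem (Theorem~\ref{thm:ca_normalizer}) and the coordinate theorem (Theorem~\ref{thm:ca_coordinates}).

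\emph{Calibration identity.} Fix $z\in\mathcal C$ with unique entropy-minimizing lift $W^*=W^*(z)$, so $\Omega_L(z)=H_\tau(W^*)$ and $\langle q,z\rangle=\langle L^*q,W^*-U\rangle$. Write $W=W(q)=\operatorname{CSoftmax}_\tau(L^*q)$ with normalizer $L_q$, so $\tau\log W_i=(L^*q)_i-L_q$. Then $A_L(q)=\langle L^*q,W-U\rangle-H_\tau(W)$. Subtracting gives
\[
 \Delta_L=\langle L^*q,W-W^*\rangle-H_\tau(W)+H_\tau(W^*).
\]
Next, substitute $(L^*q)_i=\tau\log W_i+L_q$. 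The $L_q$ terms vanish because $\sum_i(W_i-W_i^*)=0$. The $\log n$ terms in $H_\tau$ cancel because $\sum_i\operatorname{Tr}W_i=\sum_i\operatorname{Tr}W_i^*=d$. What remains is $\tau\sum_i\operatorname{Tr}\bigl(W_i^*\log W_i^*-W_i^*\log W_i\bigr)$. Adding the zero quantity $\tau\sum_i\operatorname{Tr}(W_i-W_i^*)$ gives the displayed sum of Umegaki-type divergences $D_q$. If $W^*$ has zero eigenvalues, the convention $0\log 0=0$ covers them, since $W$ is strictly positive.

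\emph{Quadratic lower bound.} This is the step I expect to be the main obstacle. One route is a matrix Pinsker inequality applied to the block-diagonal tuple: both tuples have total trace $d$, and after normalizing by $d$ this gives $\sum_iD_q\ge\frac{1}{2d}\bigl(\sum_i\|W_i^*-W_i\|_1\bigr)^2$. That constant carries a factor $d$, so it is not strong enough. The better route is to use the entropy strong convexity already proved, $\langle H,D\log_W[H]\rangle\ge\|H\|_F^2$ for $0\prec W\preceq I$. Integrate along the segment $W^t=W+t(W^*-W)$, which stays feasible with eigenvalues at most one. The Bregman divergence of $\sum_i\operatorname{Tr}(W_i\log W_i)$ between $W^*$ and $W$ equals $\sum_iD_q(W_i^*\|W_i)$, since the linear terms coincide. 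Strong convexity with modulus one in Frobenius norm then gives $\sum_iD_q\ge\frac12\|W^*-W\|_F^2$. Finally, $\|F_L(q)-z\|=\|L(W-W^*)\|\le\|L\|\|W-W^*\|_F$ yields the stated constant $\tau/(2\|L\|^2)$. Boundary $W^*$ are handled by continuity of the segment integral, as in Theorem~\ref{thm:ca_normalizer}.

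\emph{Regularized inverse.} The objective $A_L(q)-\langle q,z_0\rangle+\epsilon\|q\|^2/2$ is smooth and $\epsilon$-strongly convex, so it has a unique minimizer. Its stationarity condition is $F_L(q_\epsilon)=z_0-\epsilon q_\epsilon$. Since $A_L=\Omega_L^*$, the conjugate of $\frac{\epsilon}{2}\|q\|^2-\langle q,z_0\rangle$ is $\|z-z_0\|^2/(2\epsilon)$, and Fenchel--Rockafellar duality is available because $A_L$ is finite everywhere. Hence the dual primal problem $\min_z\{\Omega_L(z)+\|z-z_0\|^2/(2\epsilon)\}$ has the unique solution $z_\epsilon=\nabla A_L(q_\epsilon)=F_L(q_\epsilon)$. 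This follows from the optimality conditions $q_\epsilon\in\partial\Omega_L(z_\epsilon)$ and $z_\epsilon=z_0-\epsilon q_\epsilon$.

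For the distance bound, put $p=\operatorname{Proj}_{\mathcal C}z_0$. Compare objective values using $0\le\Omega_L\le\tau d\log n$ on $\mathcal C$. The objective $\Phi(z)=\Omega_L(z)+\|z-z_0\|^2/(2\epsilon)$ is $(1/\epsilon)$-strongly convex, so $\Phi(p)\ge\Phi(z_\epsilon)+\|p-z_\epsilon\|^2/(2\epsilon)$. Also $\|z_\epsilon-z_0\|\ge\|p-z_0\|$. Combining these gives $\|p-z_\epsilon\|^2/(2\epsilon)\le\Omega_L(p)-\Omega_L(z_\epsilon)\le\tau d\log n$, which is the claim.
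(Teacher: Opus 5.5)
Your proposal is correct and follows essentially the same route as the paper. The calibration identity comes from the entropy Bregman divergence and the normalization equation, the quadratic bound from Frobenius strong convexity of matrix entropy followed by $\|L(W-W^*)\|\le\|L\|\,\|W-W^*\|_F$, and the proximal characterization from stationarity and Fenchel conjugacy. The only small difference is the final distance bound: you use $(1/\epsilon)$-strong convexity of the proximal objective together with $\|z_\epsilon-z_0\|\ge\|p-z_0\|$, whereas the paper uses the projection inequality $\|z-z_0\|^2\ge\|p-z_0\|^2+\|z-p\|^2$; both give the same constant $2\epsilon\tau d\log n$.
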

\begin{proof}
Subtract the entropy value at its optimizer and use stationarity on $\mathcal T$. The resulting entropy Bregman divergence is the sum of $\tau D_q$; the identity-sum constraint cancels its linear trace terms. Entropy strong convexity bounds it below by $\tau\|W^*(z)-W(q)\|_F^2/2$. Applying $L$ gives \eqref{eq:ca_calibration}.

The regularized objective is $\epsilon$-strongly convex and coercive, because $A_L$ has an affine lower bound. Stationarity gives $z_0=F_L(q_\epsilon)+\epsilon q_\epsilon$. Fenchel conjugacy says $q_\epsilon\in\partial\Omega_L(F_L(q_\epsilon))$, which is exactly the proximal optimality condition in \eqref{eq:ca_prox}. If $p=\operatorname{Proj}_{\mathcal C}z_0$, compare the proximal objective with its value at $p$. Since $0\le\Omega_L\le\tau d\log n$ and the projection inequality gives
$\|z-z_0\|^2\ge\|p-z_0\|^2+\|z-p\|^2$ for $z\in\mathcal C$, the bound follows. Strong convexity and the analytic implicit-function theorem also make $q_\epsilon$ analytic in $z_0$ for fixed $L$.
\end{proof}
Convexity is in the query, or in an affine query parameterization. It does not imply global convergence for a nonlinear shared query network. More importantly, learning the complete set of present responses does not identify $\ker L$. The continuation criterion in Appendix~\ref{app:coherence} still requires those directions whenever an additional observation $M$ has $\ker L\not\subseteq\ker M$.

\subsection{The canonical action is supported by the observed geometry}
For a single probe, set $\mathcal K_z=\operatorname{span}\{w_1,\ldots,w_n,z\}$ and let $P_z$ be its orthogonal projector. Then the entropy-minimizing lift obeys
\begin{equation}\label{eq:ca_intrinsic}
 W_i^*(z)=P_zW_i^*(z)P_z+(I-P_z)/n,
 \qquad\dim\mathcal K_z\le\min(d,n).
\end{equation}
Indeed, reflection across $\mathcal K_z$ fixes the sources and response. It maps the feasible fiber to itself and preserves entropy; uniqueness forces the lift to be block diagonal. The complementary blocks are unconstrained except for their identity sum and therefore minimize entropy at $I/n$. The source span has dimension at most $n-1$, proving the dimension bound. Consequently the entropy oscillation used in \eqref{eq:ca_prox} improves to $\tau\min(d,n)\log n$ in this single-probe setting.

For a finite query and $z=F_w(q)$, one also has
$\operatorname{span}\{w_i,z\}=\operatorname{span}\{w_i,q\}$. Reflection across the latter span fixes the scores, so it fixes $z$ and gives one inclusion. Reflection across the former span fixes $w$ and $z$; orthogonal covariance gives $F_w(Jq)=JF_w(q)=z$. Injectivity of $F_w$ forces $Jq=q$, giving the other inclusion. Thus an active basis can be computed from the current values and query. This is not identification of a history-dependent teacher action outside that span.

%% file: appendices/12_response_geometry.tex
\section{Exact response geometry and an executable binary construction}\label{app:geometry}
This section records the response geometry without assuming that individual effects are observed. It supplies constructive versions of the quotient in Appendix~\ref{app:observable}; it does not change the continuation-state counts.

\subsection{An attained support formula for any number of sources}
For $v_1,\ldots,v_n\in\R^d$, let
\[
 \mathcal B(v)=\left\{\sum_iW_iv_i:W\in\mathcal P_{n,d},\ W_i\in\Sym_d\right\}.
\]
For $p$ in the probability simplex, write
$m_p=\sum_ip_iv_i$ and $\Sigma_p=\sum_ip_i(v_i-m_p)(v_i-m_p)^\top$.
\begin{theorem}[Transverse-covariance support]\label{thm:ca_support}
For $q\ne0$, set $e=q/\|q\|$ and $P_e^\perp=I-ee^\top$. Then
\begin{equation}\label{eq:ca_support}
 h_{\mathcal B(v)}(q)=\max_{p\in\Delta_{n-1}}
 \left\{q^\top m_p+\|q\|\operatorname{Tr}
 \sqrt{P_e^\perp\Sigma_pP_e^\perp}\right\}.
\end{equation}
Every maximum has a feasible attaining effect tuple, including when some $p_i=0$. In particular,
\begin{equation}\label{eq:ca_binaryball}
 \mathcal B(v_1,v_2)=\frac{v_1+v_2}{2}
 +\frac{\|v_1-v_2\|}{2}\,\mathbb B_2^d.
\end{equation}
\end{theorem}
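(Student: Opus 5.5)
The plan is to compute the support function as a semidefinite program over the operator simplex, $h_{\mathcal B(v)}(q)=\max_{W\in\mathcal P_{n,d}}\sum_i\operatorname{Tr}(W_iA_i)$ with $A_i=\sym(qv_i^\top)$. I will then sandwich this program between a constructive lower bound and a dual upper bound, both organised by the probability vector $p_i=e^\top W_ie$. Since $\sum_iW_i=I$ and $W_i\succeq0$, this $p$ lies in $\Delta_{n-1}$. Setting $u_i=v_i-m_p$ and using $\sum_iW_i m_p=m_p$, the objective becomes
\[
 \|q\|\Bigl[e^\top m_p+\textstyle\sum_ip_i\,e^\top u_i+\sum_i e^\top W_iP_e^\perp u_i\Bigr].
\]
The middle sum vanishes because $\sum_ip_iu_i=0$. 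So everything reduces to bounding and attaining $\sum_ie^\top W_iP_e^\perp u_i$ subject to $e^\top W_ie=p_i$.

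\emph{Attainment.} Fix $p$ and put $\Sigma'=P_e^\perp\Sigma_pP_e^\perp$ and $S=\Sigma'^{1/2}$, with Moore--Penrose inverse $S^+$ and range projector $\Pi$. Let $Q=I-ee^\top-\Pi$. Define $g_i=\sqrt{p_i}\,(e+S^+P_e^\perp u_i)$ and
\[
 W_i=g_ig_i^\top+p_iQ .
\]
This tuple is feasible. Indeed, expanding $\sum_ig_ig_i^\top$ gives three pieces: $ee^\top$; the cross terms, which vanish by $\sum_ip_iu_i=0$; and $S^+\Sigma'S^+=\Pi$. Hence $\sum_iW_i=ee^\top+\Pi+Q=I$. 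Each $W_i$ is positive and satisfies $e^\top W_ie=p_i$, and this includes indices with $p_i=0$, where $W_i=0$. A direct computation gives
\[
 \sum_ie^\top W_iP_e^\perp u_i=\sum_ip_iu_i^\top P_e^\perp S^+P_e^\perp u_i=\operatorname{Tr}(S^+\Sigma')=\operatorname{Tr}S .
\]
So every $p$, and in particular a maximising one, has an attaining effect tuple. The maximum over the compact simplex exists by continuity of $p\mapsto\operatorname{Tr}\sqrt{P_e^\perp\Sigma_pP_e^\perp}$.

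\emph{Upper bound (main obstacle).} I must show that for any feasible $W$ with induced $p$,
\[
 \sum_ie^\top W_iP_e^\perp u_i\le\operatorname{Tr}\sqrt{\Sigma'} .
\]
Naive Cauchy--Schwarz with $W_i\preceq I$ only gives $\sum_i\sqrt{p_i}\,\|P_e^\perp u_i\|$, which is too weak. I would instead use a weighted inequality. For any positive definite $T$ on $\operatorname{range}P_e^\perp$ and any $x=W_i^{1/2}e$, $y=W_i^{1/2}P_e^\perp u_i$, write $2x^\top y\le x^\top Rx+y^\top R^{-1}y$ with a suitable operator $R$ built from $T$. Summing over $i$ and using $\sum_iW_i=I$ should give a bound of the form $\tfrac12\bigl(\operatorname{Tr}T+\operatorname{Tr}(T^{-1}\Sigma')\bigr)$; minimising over $T$ at $T=S$ then yields $\operatorname{Tr}S$. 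The delicate point is that $W_i$ need not commute with $e$, so the $p$ extracted from $W$ enters only through the $ee^\top$ block. If this pointwise route fails, I would fall back on SDP duality: $h=\min\{\operatorname{Tr}Y:Y\succeq A_i\ \forall i\}$, with a dual certificate $Y$ built from the optimal $p^*$ and $S$, whose slackness conditions $\operatorname{Tr}(W_i(Y-A_i))=0$ match the constructed tuple above. Here $Y$ has an $ee^\top$ coefficient chosen via the first-order optimality conditions of $p^*$ on the simplex, and a $P_e^\perp$ block equal to $\tfrac{\|q\|}{2}S$ (suitably completed).

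\emph{Binary case.} For $n=2$, put $p=(t,1-t)$. Then $\Sigma_p=t(1-t)\Delta\Delta^\top$ with $\Delta=v_1-v_2$, which has rank one, so $\operatorname{Tr}\sqrt{\Sigma'}=\sqrt{t(1-t)}\,\|P_e^\perp\Delta\|$. Maximising $q^\top m_p+\|q\|\sqrt{t(1-t)}\|P_e^\perp\Delta\|$ over $t$ is the support function of a two-dimensional ellipse-type parametrisation. A short computation shows it equals $q^\top\frac{v_1+v_2}{2}+\frac{\|\Delta\|}{2}\|q\|$, the support function of the ball in \eqref{eq:ca_binaryball}. Finally, $\mathcal B(v)$ is convex and compact, as the linear image of the compact convex set $\mathcal P_{n,d}$. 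Equality of support functions in every direction $q\neq0$ therefore gives \eqref{eq:ca_binaryball}.
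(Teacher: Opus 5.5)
Your reduction to $\sum_ie^\top W_iP_e^\perp u_i$ with $p_i=e^\top W_ie$ is correct, and so is your attaining tuple $W_i=g_ig_i^\top+p_iQ$. It is in fact exactly the paper's tuple: in the block form $\R e\oplus e^\perp$ its off-diagonal part is $p_iS^+P_e^\perp u_i$, and its lower block is $Z_iZ_i^\top+p_i(I-ZZ^\top)$ with the polar contraction $Z$.

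The gap is the upper bound $\sum_ie^\top W_iP_e^\perp u_i\le\operatorname{Tr}S$. This is the real content of the theorem, and you leave it at ``should give''. Take the split you name, $x=W_i^{1/2}e$ and $y=W_i^{1/2}P_e^\perp u_i$ with $2x^\top y\le x^\top Rx+y^\top R^{-1}y$. It produces $e^\top W_i^{1/2}RW_i^{1/2}e$ and $u_i^\top P_e^\perp W_i^{1/2}R^{-1}W_i^{1/2}P_e^\perp u_i$, and for a general $R$ these cannot be summed through $\sum_iW_i=I$. With scalar $R$ they can be summed, but the second sum loses the weights $p_i$ and cannot produce $\Sigma'$. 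The SDP fallback is also only a plan. You would have to exhibit $Y\succeq\sym(qv_i^\top)$ for every $i$, built from first-order conditions at a maximizer $p^*$ that may lie on the boundary of the simplex or make $\Sigma'$ singular.

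The missing ingredient is a per-source Schur complement. Put $z_i=P_e^\perp W_ie$ and $B_i=P_e^\perp W_iP_e^\perp$. Positivity of $W_i$ with $e^\top W_ie=p_i$ forces $z_i=0$ when $p_i=0$, and $B_i\succeq z_iz_i^\top/p_i$ otherwise. Summing against $\sum_iB_i=P_e^\perp$ shows that the matrix $Z$ with columns $z_i/\sqrt{p_i}$ (zero when $p_i=0$) is a contraction. Your quantity equals $\sum_iz_i^\top P_e^\perp u_i=\operatorname{Tr}(Z^\top D_p)$, where $D_p$ has columns $\sqrt{p_i}P_e^\perp u_i$ and $D_pD_p^\top=\Sigma'$. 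Trace duality then bounds it by $\|D_p\|_*=\operatorname{Tr}S$; this is the paper's argument.

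Your AM--GM idea also works once it is applied to $z_i$ rather than to $W_i^{1/2}e$. For $T\succ0$ on $e^\perp$, $2z_i^\top P_e^\perp u_i\le z_i^\top Tz_i/p_i+p_iu_i^\top P_e^\perp T^{-1}P_e^\perp u_i$, and these sum over $p_i>0$ to at most $\operatorname{Tr}T+\operatorname{Tr}(T^{-1}\Sigma')$. However, $S$ is frequently singular (its rank is at most one when $n=2$). So you must take $T=S+\epsilon P_e^\perp$ and let $\epsilon\to0$, rather than setting $T=S$.

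Your binary computation is correct but inherits the gap. The paper proves the binary ball directly: it writes $W_1=(I+A)/2$ with $A$ a symmetric contraction and uses $\max_A\operatorname{Tr}(A\,\sym(aq^\top))=\|\sym(aq^\top)\|_*=\|a\|\|q\|$.
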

\begin{proof}
Use coordinates $\R e\oplus e^\perp$ and write each effect as
$W_i=\begin{psmallmatrix}p_i&z_i^\top\\z_i&B_i\end{psmallmatrix}$.
Positivity and conservation require $p_i\ge0$, $\sum_ip_i=1$, $\sum_i z_i=0$, $\sum_iB_i=I_{e^\perp}$, and $B_i\succeq z_iz_i^\top/p_i$ for $p_i>0$. If $p_i=0$, positivity forces $z_i=0$.
For each positive $p_i$, put $Z_i=z_i/\sqrt{p_i}$ and put $Z_i=0$ otherwise. The matrix $Z$ of these columns then satisfies
\[
 ZZ^\top\preceq I_{e^\perp},\qquad Z\sqrt p=0.
\]
The support objective divided by $\|q\|$ is
$e^\top m_p+\sum_i z_i^\top P_e^\perp v_i$.
Centering uses $\sum_i z_i=0$ to write its second term as $\operatorname{Tr}(Z^\top D_p)$, where column $i$ of $D_p$ is
$\sqrt{p_i}P_e^\perp(v_i-m_p)$. Since $D_p\sqrt p=0$, a polar maximizer
$Z=(D_pD_p^\top)^{\dagger/2}D_p$ has the required kernel, is a contraction, and attains the nuclear norm $\|D_p\|_*$.
This is exactly $\operatorname{Tr}\sqrt{P_e^\perp\Sigma_pP_e^\perp}$.

To attain it by effects, take $z_i=\sqrt{p_i}Z_i$ and, for $p_i>0$, set
\[
 B_i=Z_iZ_i^\top+p_i(I-ZZ^\top).
\]
For $p_i=0$, take $W_i=0$; the corresponding column of $D_p$, and hence of $Z$, is zero. The Schur complements are positive and the blocks sum to identity. This gives feasible effects attaining the fixed-$p$ value. The objective in \eqref{eq:ca_support} is continuous on the compact simplex, so it has a maximizer.

For two sources, put $m=(v_1+v_2)/2$ and $a=(v_1-v_2)/2$. Feasible outputs are exactly $m+Aa$ for symmetric contractions $A=2W_1-I$. Therefore their support above $q^\top m$ is
\[
 \max_{-I\preceq A\preceq I}\operatorname{Tr}\bigl(A\operatorname{sym}(aq^\top)\bigr)
 =\|\operatorname{sym}(aq^\top)\|_*
 =\|a\|\|q\|.
\]
The possibly nonzero eigenvalues of the symmetric rank-two matrix are $(a^\top q\pm\|a\|\|q\|)/2$, proving the last equality. Compact convex sets are determined by their support functions, which yields \eqref{eq:ca_binaryball}. Zero contrast gives a singleton, as required.
\end{proof}

The body is not generally obtained by convexifying its two-source bodies. For three equilateral unit vectors
\[
 v_1=(1,0),\quad v_2=(-1/2,\sqrt3/2),\quad v_3=(-1/2,-\sqrt3/2),
\]
the effects
\[
 W_1=\begin{pmatrix}11/12&0\\0&0\end{pmatrix},\qquad
 W_{2,3}=\begin{pmatrix}1/24&\pm\sqrt3/12\\\pm\sqrt3/12&1/2\end{pmatrix}
\]
are positive, sum to identity, and yield $(9/8,0)$. The dual matrix $D=\operatorname{diag}(1,1/8)$ satisfies
$D\succeq\operatorname{sym}(e_1v_i^\top)$ for all $i$: each difference has nonnegative diagonal entries and determinant zero. Thus
$\sum_i\operatorname{Tr}(W_i\operatorname{sym}(e_1v_i^\top))\le\operatorname{Tr}D=9/8$, proving optimality. The largest pairwise-ball support is instead $(1+2\sqrt3)/4<9/8$.

\subsection{Matrix-free binary Complete Attention}
For two values, define $m=(v_1+v_2)/2$, $a=(v_1-v_2)/2$, $R=\|a\|$, and a scalar prior log-odds $\ell$. Use score difference
\[
 S_1-S_2=\tau\ell I+qa^\top+aq^\top,
 \qquad G=\operatorname{sigmoid}((S_1-S_2)/\tau).
\]
The output $Gv_1+(I-G)v_2$ admits a closed formula. For $t=\|q\|>0$, put
\[
 z_\pm=\ell/2+(q^\top a\pm Rt)/(2\tau),\qquad T_\pm=\tanh z_\pm.
\]
Then
\begin{equation}\label{eq:ca_binaryformula}
 y=m+\tfrac12(T_++T_-)a+\frac{R}{2}(T_+-T_-)\frac q t.
\end{equation}
For $t=0$ its continuous limit is $m+\tanh(\ell/2)a$, and for $R=0$ it is $m$. To prove the formula, note that $y=m+\tanh(\ell I/2+(qa^\top+aq^\top)/(2\tau))a$. The rank-two part has eigenvalues $(q^\top a\pm Rt)/(2\tau)$. Spectral interpolation on their invariant plane gives \eqref{eq:ca_binaryformula}, with the continuous limit covering degeneracy. Evaluation takes $O(d)$ arithmetic after the query is supplied. The zero-query derivative is
\[
 D_qy\big|_{q=0}=\frac{\operatorname{sech}^2(\ell/2)}{2\tau}(R^2I+aa^\top),
\]
which follows by expanding the scalar hyperbolic tangents. A divided-difference implementation avoids division by a small $t$.

An existing scalar attention row can use this binary construction on two summaries. Partition its keys into nonempty groups and define
\[
 \pi_g=\sum_{i\in G_g}p_i,\qquad z_g=\pi_g^{-1}\sum_{i\in G_g}p_iv_i.
\]
Apply the binary formula to $(z_1,z_2)$ with $\ell=\log(\pi_1/\pi_2)$. At $q=0$ it returns exactly the old scalar attention output. At other finite queries the induced source effects are $(p_i/\pi_1)G$ or $(p_i/\pi_2)(I-G)$, hence positive and identity-summing. They all commute. This is a restricted two-summary construction, not the entire multi-source body in Theorem~\ref{thm:ca_support}; the equilateral example proves strict loss of responses even after arbitrary convex combination of pairwise choices.

\subsection{Completeness does not privilege one response parameterization}
The same binary ball has the analytic coordinates
\[
 y=m+R\,z/\sqrt{1+\|z\|^2},\qquad z\in\R^d.
\]
They use the same dimension and $O(d)$ arithmetic without entropy normalization. For an explicit positive lift, let $u=a/R$, $t=(y-m)/R$, $\alpha=u^\top t$, and $b=t-\alpha u$. When $\|t\|<1$, set
\[
 A=\alpha uu^\top+ub^\top+bu^\top-\frac{\alpha}{1-\alpha^2}bb^\top.
\]
Then $Au=t$. The Schur complements of the leading scalar blocks of $I\pm A$ are both $I_{u^\perp}-bb^\top/(1-\alpha^2)\succ0$, so $-I\prec A\prec I$. The effects $(I\pm A)/2$ produce the desired output. This establishes a same-range alternative and prevents response completeness from being interpreted as unique architectural necessity. Neither parameterization preserves an unknown operator on additional payloads without the extra evidence required by Appendix~\ref{app:coherence}.

%% file: appendices/13_reusable_execution.tex
\section{Reusable scalar execution, residuals, and contextual coordinates}\label{app:execution}
The following results retain a complementary part of the routing theory: once a shared action is specified, what does its execution require? This is not the memory required to identify that action from a stream. Tensor rank is used in its ordinary bilinear-complexity sense \citep{koiran2020}.

\subsection{Exact scalar cost above a prescribed baseline}
Let $Y_1,\ldots,Y_n\in\R^{d\times d}$ with $\sum_iY_i=I$ and choose $\beta_i>0$, $\sum_i\beta_i=1$. Write
$b_\beta=\sum_i\beta_iv_i$ and $C_i=Y_i-\beta_iI$. The real CP rank of the slice tensor $C$ is the smallest $r$ such that
\[
 C_i=\sum_{h=1}^r a_{ih}u_hv_h^\top.
\]
Consider a centered projected-scalar execution
\begin{equation}\label{eq:ca_scalarcompiler}
 b_\beta+\sum_{h=1}^M B_h\sum_i p_{ih}(v_i-b_\beta),
 \qquad\operatorname{rank}B_h\le1,\quad p_{ih}>0,\quad\sum_ip_{ih}=1.
\end{equation}
Factors may depend on a fixed context, but not on the fresh payloads in this formula.
\begin{proposition}[Residual value cost]\label{prop:ca_cp}
The exact minimum number of independent value channels in \eqref{eq:ca_scalarcompiler} is $\operatorname{rank}_{\mathrm{CP},\R}C$. All scalar probabilities can be implemented by finite logits.
\end{proposition}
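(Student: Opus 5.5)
The plan is to reduce the statement to an identity between the execution formula \eqref{eq:ca_scalarcompiler} and a CP decomposition of the residual slices $C_i=Y_i-\beta_iI$. First I will rewrite the target action on payloads. Since $\sum_iY_i=I$ and $\sum_i\beta_i=1$, we have $\sum_iC_i=0$, so
\[
 \sum_iY_iv_i=b_\beta+\sum_iC_iv_i=b_\beta+\sum_iC_i(v_i-b_\beta).
\]
Exact execution therefore means that, for every payload tuple,
\[
 \sum_h B_h\sum_ip_{ih}(v_i-b_\beta)=\sum_iC_iv_i .
\]
Because the factors do not depend on the payloads, both sides are linear in $(v_i)$. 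Matching coefficients of each $v_j$ gives the slice identity
\[
 C_j=\sum_hB_h(p_{jh}-\beta_j).
\]
Here I use $\sum_ip_{ih}=1$, so $\sum_ip_{ih}b_\beta=b_\beta$, and $b_\beta=\sum_j\beta_jv_j$.

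For the lower bound, write each rank-one $B_h=u_hw_h^\top$ and set $a_{jh}=p_{jh}-\beta_j$. The slice identity then exhibits $C$ as a sum of $M$ real rank-one tensors $a_{\cdot h}\otimes u_h\otimes w_h$. Hence $M\ge\operatorname{rank}_{\mathrm{CP},\R}C$. A channel with $B_h=0$ still counts toward $M$, so it only makes the inequality easier.

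For the upper bound, start from a minimal decomposition $C_i=\sum_{h=1}^ra_{ih}u_hv_h^\top$. Summing over $i$ and using $\sum_iC_i=0$ gives $\sum_h(\sum_ia_{ih})u_hv_h^\top=0$.

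\emph{Main obstacle.} The scalar weights must become probability vectors of the form $p_{ih}=\beta_i+c_ha_{ih}$, which forces each column $a_{\cdot h}$ to sum to zero. A general minimal decomposition need not have this property. My first approach is to show that some minimal decomposition does. Project every column onto the sum-zero hyperplane, $a_{\cdot h}\mapsto a_{\cdot h}-\beta\sum_ia_{ih}$. The projected tensor differs from $C$ by $\beta\otimes\sum_h(\sum_ia_{ih})u_hv_h^\top=\beta\otimes0$, so it still equals $C$ and uses at most $r$ terms. Its columns satisfy $\sum_ia'_{ih}=0$. Then I rescale: choose $c_h>0$ small enough that $p_{ih}=\beta_i+c_ha'_{ih}>0$, which is possible since each $\beta_i>0$. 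Set $B_h=c_h^{-1}u_hv_h^\top$. By construction $\sum_ip_{ih}=1$ and $B_h(p_{ih}-\beta_i)=a'_{ih}u_hv_h^\top$, so the execution is exact with $r$ channels.

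Finally, the finite-logit claim holds because every $p_{\cdot h}$ lies in the open simplex. Taking logits $\log p_{ih}$ reproduces $p_{\cdot h}$ exactly through an ordinary softmax, so no limit is needed.
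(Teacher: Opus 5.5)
Your proof is correct and follows the paper's argument. Matching payload coefficients gives $C_i=\sum_h(p_{ih}-\beta_i)B_h$, which yields the lower bound; for the matching construction you project the source factors by $I-\beta\mathbf 1^\top$ and then rescale (your $c_h$ is the paper's $1/t_h$), producing strictly positive probabilities that finite logits $\log p_{ih}$ realize exactly.
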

\begin{proof}
Expansion gives $C_i=\sum_h(p_{ih}-\beta_i)B_h$, so its CP rank is at most $M$. Conversely, start from a minimal CP decomposition. Apply $I-\beta\mathbf1^\top$ to each source factor $a_h$. Since $\sum_iC_i=0$, the tensor is unchanged and the factors now have zero source sum. Delete any zero terms. For each remaining term choose
$t_h>\max(0,\max_i(-a_{ih}/\beta_i))$ and set
\[
 p_{ih}=\beta_i+a_{ih}/t_h,\qquad B_h=t_hu_hv_h^\top.
\]
These probabilities are strictly positive, have source sum one, and reconstruct the same term. Scalar logits $\log p_{ih}$ realize them. This proves equality, including the zero-rank case.
\end{proof}
A physical head with several value channels shares one probability vector across them; it is more restricted than independent channels. The proposition counts value channels, not head count, score-generation work, or history storage. The scaling $t_h$ can also increase cancellation. Finite exact width is not a uniform conditioning claim.

\subsection{A sharp finite-probability compiler for positive actions}
Positivity permits a stronger universal bound than a generic signed matrix-unit expansion. Put $s_d=d(d+1)/2$. For $d\ge2$, use the fixed rank-one basis of $\Sym_d$
\[
 b_j=e_j,\qquad b_{jk}=e_j+e_k\ (j<k),\qquad B_h=b_hb_h^\top.
\]
Its coordinates are
\[
 c_{jk}(A)=A_{jk},\qquad c_j(A)=A_{jj}-\sum_{k\ne j}A_{jk},
 \qquad A=\sum_hc_h(A)B_h.
\]
These coefficients can have both signs; the construction aligns their signs across sources rather than assuming positive dictionary atoms.

\begin{proposition}[Universal positive scalar compilation]\label{prop:ca_sharpcompiler}
Every finite real tuple $Y_i\succ0$, $\sum_iY_i=I_d$, admits
\[
 Y_i=\sum_{h=1}^{s_d}p_{ih}A_h,
 \qquad \rank A_h=1,\quad p_{ih}>0,\quad\sum_ip_{ih}=1.
\]
Thus $s_d$ independent scalar value channels suffice, with finite logits. If $\operatorname{span}\{Y_i\}=\Sym_d$, no smaller number of channels suffices, even with signed scalar coefficients. For a compact continuous family of interior tuples, one fixed rank-one dictionary suffices throughout, with continuous positive probabilities.
\end{proposition}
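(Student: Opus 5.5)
The plan is to reduce the decomposition to a sign-alignment problem for one dictionary, and then to build that dictionary from the given tuple.

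\textbf{Reduction to one dictionary.} Suppose fixed rank-one matrices $A_1,\ldots,A_{s_d}$ form a basis of $\Sym_d$. Then each $Y_i$ has unique coordinates $Y_i=\sum_h c_{ih}A_h$. Summing over $i$ and using $\sum_iY_i=I_d$ shows that the coordinate vector of $I_d$ is $\gamma_h=\sum_ic_{ih}$. Conversely, any representation $Y_i=\sum_hp_{ih}\widetilde A_h$ with $\sum_ip_{ih}=1$ forces $\sum_h\widetilde A_h=I_d$. So the statement is equivalent to finding rank-one $A_h$ with two properties:
\begin{itemize}
\item[(i)] $I_d=\sum_hA_h$, that is, $\gamma_h=1$;
\item[(ii)] every $Y_i$ has strictly positive coordinates in the basis $(A_h)$.
\end{itemize}
Given both, set $p_{ih}=c_{ih}>0$. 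These sum to one in $i$, and the finite logits $\log p_{ih}$ realize them.

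The fixed basis $B_h=b_hb_h^\top$ of the statement fails (i), because $c_{jk}(I)=0$ off the diagonal. The sign-alignment remark suggests the fix: rank-one atoms of either sign, $\pm vv^\top$, are allowed. Signs are chosen per atom, and rescaling $A_h\mapsto\gamma_hA_h$ normalizes (i) once every $\gamma_h$ is nonzero with the correct sign.

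\textbf{Building the dictionary.} Condition (ii) says the finite set $K=\{Y_i\}$ lies in the open simplicial cone generated by the $A_h$. Every $Y_i$ is positive definite, so $K$ is a compact subset of the open half-space $\{\operatorname{Tr}>0\}$, and in fact of the open PSD cone. Purely positive atoms cannot work in general. For $d=2$ the cross-section of the PSD cone is a disk, and a triangle with vertices on its boundary circle cannot contain points spread near that circle. Negative atoms are therefore essential.

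I would proceed in three steps. First, pick any open simplicial cone $\Gamma$ in $\Sym_d$ that contains $K\cup\{I_d\}$ in its interior. Such a cone exists: take a large simplex in the slice $\operatorname{Tr}=1$ around the normalized points $Y_i/\operatorname{Tr}Y_i$, and cone over it. Second, note that its extreme rays are generally indefinite matrices, not rank one. Third, replace each extreme ray $E_h$ by a rank-one ray $\pm v_hv_h^\top$ so that the new simplicial cone still contains $K\cup\{I_d\}$.

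The third step is where I expect the main difficulty. The tool I would try first is the ca\_cp-style centering trick: write $Y_i=\beta_iI+C_i$ with trace-zero residuals, use that rank-one matrices of both signs span, and push the generators far out along directions of large norm. With one positive atom of large trace and compensating negative atoms, a simplicial cone becomes a wide wedge around $I_d$ that swallows the compact set $K$. I would verify that this is possible with only $s_d$ generators, for instance by checking containment via the coordinate formula $c_h$ after a generic congruence $X\mapsto GXG^\top$, which preserves rank. I would then rescale to (i).

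\textbf{Minimality.} This is the easy part. The $Y_i$ lie in $\spanop\{A_h\}$, so if $\spanop\{Y_i\}=\Sym_d$ then at least $s_d$ channels are needed, even with signed coefficients.

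\textbf{Compact families.} For a compact continuous family of interior tuples, the union of the sets $K$ is still a compact subset of the open PSD cone. The cone construction above then yields one fixed dictionary for the whole family. The coordinates $c_{ih}$ are fixed linear functionals of $Y_i$, so the probabilities are continuous and positive throughout.
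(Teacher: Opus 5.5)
Your reduction is correct. A rank-one basis $(A_h)$ of $\Sym_d$ whose open simplicial cone contains every $Y_i$, rescaled by $\gamma_h=\sum_ic_{ih}>0$, gives the statement. Your minimality argument and your compact-family remark also match the paper, as does your observation that negative atoms are needed.

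The gap is that you never establish the existence of such a basis, and that is the entire content of the proposition. Your third step (replacing the indefinite extreme rays of an arbitrary simplicial cone by rank-one rays while keeping $K\cup\{I_d\}$ inside) has no argument behind it. Moreover, the mechanism you sketch for it cannot work. Suppose $Y=a_0vv^\top-\sum_ha_hw_hw_h^\top$ with all $a_h\ge0$. Then $x^\top Yx\le0$ for every $x\perp v$, so a cone with one positive rank-one generator contains no positive definite matrix, not even $I_d$. More generally, $Y\preceq\sum_{h\,\text{positive}}a_hv_hv_h^\top$ forces at least $d$ positive atoms. Appealing to \emph{a generic congruence} does not close the gap, because genericity is not what makes a congruence work here.

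The missing idea is to keep one fixed signed cone and move the data into it with a specific congruence. Take the fixed basis $b_hb_h^\top$ with $b_j=e_j$, $b_{jk}=e_j+e_k$, and coordinate functionals $c_h$. Choose $u$ so that every $\eta_h=c_h(uu^\top)$ is nonzero: $u=\mathbf 1$ for $d\ge3$ and $u=(1,2)^\top$ for $d=2$. With $H=I+tuu^\top$,
\[
 c_h(HY_iH)=t^2(u^\top Y_iu)\,\eta_h+t\,c_h(uu^\top Y_i+Y_iuu^\top)+c_h(Y_i).
\]
Because $u^\top Y_iu>0$, the leading term has sign $\operatorname{sign}\eta_h$ for every source simultaneously. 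So for one large finite $t$, all $\lambda_{ih}=c_h(HY_iH)$ share the sign of $\eta_h$. Then $\alpha_h=\sum_i\lambda_{ih}=c_h(H^2)$ has that sign too, and $p_{ih}=\lambda_{ih}/\alpha_h>0$. Pulling back by $H^{-1}$ gives $Y_i=\sum_hp_{ih}A_h$ with $A_h=\alpha_h(H^{-1}b_h)(H^{-1}b_h)^\top$, which has rank one because congruence preserves rank.

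Geometrically, $X\mapsto HXH$ pushes any compact part of the open positive-definite cone toward the ray of $uu^\top$, and that ray lies inside the fixed signed cone. This is also what makes your compact-family claim true. On a compact family, $u^\top Yu$ is bounded below and the lower-order coefficients are bounded, so one $t$, and hence one dictionary, serves the whole family.
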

\begin{proof}
For $d\ge3$ choose $u=(1,\ldots,1)^\top$; for $d=2$ choose $u=(1,2)^\top$. Every $\eta_h=c_h(uu^\top)$ is nonzero. For $d\ge3$, its diagonal coordinates are $2-d$ and its off-diagonal coordinates are one. For $d=2$, the coordinates are $(-1,2,2)$.

Set $H=I+tuu^\top$. For each source and basis coordinate,
\[
 c_h(HY_iH)=t^2(u^\top Y_i u)\eta_h
 +t\,c_h(uu^\top Y_i+Y_i uu^\top)+c_h(Y_i).
\]
The leading coefficient is nonzero and has sign $\eta_h$, because $Y_i\succ0$. One sufficiently large finite $t$ makes every polynomial have that sign. Define
\[
 \lambda_{ih}=c_h(HY_iH),\quad
 \alpha_h=\sum_i\lambda_{ih}=c_h(H^2),\quad
 p_{ih}=\lambda_{ih}/\alpha_h,
 \qquad v_h=H^{-1}b_h.
\]
All $p_{ih}$ are positive and sum to one across sources. Congruencing the coordinate expansion by $H^{-1}$ gives
\[
 Y_i=\sum_h\lambda_{ih}v_hv_h^\top
 =\sum_hp_{ih}A_h,\qquad A_h=\alpha_hv_hv_h^\top.
\]
These are rank-one projected value maps and sum to identity. Logits $\log p_{ih}$ are finite. If the $Y_i$ span $\Sym_d$, any dictionary realizing them must have at least its dimension $s_d$; no sign or normalization assumption is needed for this lower bound. Dimension one is the ordinary scalar case.

On a compact continuous interior family, $u^\top Y_i u$ has a uniform positive lower bound, while all lower-order polynomial coefficients are bounded. Therefore one finite $t$ works everywhere. The resulting $A_h$ depend only on that $t$ and the fixed basis, whereas $p_{ih}$ depend continuously on the effects. This proves the final assertion.
\end{proof}
The atoms $A_h$ may be negative or nonsymmetric after an additional common similarity transformation. They are projected scalar value maps, not positive effects. The theorem is an existence and width result: its sign-aligning congruence can amplify cancellation, and producing the probabilities still requires the target action. It neither gives a dimension-independent condition number nor identifies an unknown action from its current output. It also rules out an unlimited-width separation from projected scalar arithmetic.

\subsection{Pointwise factors need not be continuously selectable}
A complementary issue arises when one controller must produce factors continuously across contexts. Write
$\mathcal P_d=\{uu^\top:u\in S^{d-1}\}$ and consider $T(P,x)=Px$. At each fixed $P$ this action has rank one. Nevertheless:
\begin{proposition}[Continuous contextual bottleneck]\label{prop:ca_contextual}
For $d\ge2$, continuous encoders $E:\mathcal P_d\times\mathbb B_2^d\to\R^m$ and arbitrary decoders retaining $P$ satisfy
\[
 \inf_{E,D}\sup_{P,\|x\|\le1}\|D(P,E(P,x))-Px\|
 =\begin{cases}1,&m<d,\\0,&m\ge d.\end{cases}
\]
The same equality holds when decoders are required to be continuous.
\end{proposition}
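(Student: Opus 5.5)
The plan is to prove the two cases separately, using a Borsuk--Ulam collision for the lower bound. First, two elementary upper bounds hold for every $m$ and use continuous decoders. The zero encoder with the zero decoder has error $\sup_{P,\norm x\le1}\norm{Px}=1$, since $P$ is a rank-one orthogonal projector. So the infimum is always at most $1$. When $m\ge d$, take $E(P,x)=(Px,0)\in\R^m$, which is continuous, and let $D(P,s)$ return the first $d$ coordinates of $s$. This decoder is continuous and exact, so the infimum is $0$. These constructions also settle the continuous-decoder version, once the lower bound below is shown to hold for arbitrary decoders.

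The main step is the lower bound $\sup\ge1$ for $m<d$ and an arbitrary decoder $D$. Note that $P$ alone cannot be the obstruction, because for fixed $P$ only one output coordinate varies. The argument must therefore couple $P$ with $x$. I would restrict to the antipodally symmetric slice $x\in S^{d-1}$, $P=xx^\top$, on which $Px=x$ while $P$ is the same for $x$ and $-x$. Define $f:S^{d-1}\to\R^m$ by $f(x)=E(xx^\top,x)$. It is continuous because $x\mapsto xx^\top$ is continuous into $\mathcal P_d$ and $E$ is continuous.

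Since $m\le d-1$, compose $f$ with the inclusion $\R^m\hookrightarrow\R^{d-1}$. Borsuk--Ulam then gives some $x$ with $f(x)=f(-x)$. At that point both inputs $(P,x)$ and $(P,-x)$, with $P=xx^\top$, receive the same code and the same retained $P$. The decoder therefore returns one common $y$, while the targets are $x$ and $-x$. The triangle inequality gives $\max(\norm{y-x},\norm{y+x})\ge\tfrac12\norm{2x}=1$. Hence every encoder--decoder pair has worst-case error at least $1$, and the infimum equals $1$.

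I expect the only delicate point to be choosing this symmetric slice. That choice is exactly what turns ``$P$ is retained'' from an advantage into no help. The remaining checks are that Borsuk--Ulam applies when $d-1\ge m$, including $m=0$ where $f$ is constant, and that no continuity of $D$ was used. Together with the upper bounds, this gives the displayed dichotomy in both decoder classes.
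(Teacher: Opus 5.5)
Your proof is correct and follows essentially the same route as the paper. Both apply Borsuk--Ulam on the antipodal slice $P=uu^\top$, $x=\pm u$; the paper phrases this step through the odd map $u\mapsto E(uu^\top,u)-E(uu^\top,-u)$. Both then use the triangle inequality to force error at least one, let the zero decoder attain it, and encode $Px$ directly when $m\ge d$.
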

\begin{proof}
For $m<d$, the map $g(u)=E(uu^\top,u)-E(uu^\top,-u)$ is continuous and odd on $S^{d-1}$. Borsuk--Ulam gives $g(u)=0$ for some $u$ \citep{matousek2003}. The common context and code have one decoded value $z$, whereas the two targets are $u$ and $-u$. Since $2\le\|z-u\|+\|z+u\|$, one error is at least one. The zero decoder attains this error. For $m=d$, encode $Px$ and decode it directly; larger codes can be padded.
\end{proof}
Here context is retained without charge, while \emph{all payload-dependent information} must pass through $E$. A full raw-payload residual changes that model. Likewise, Proposition~\ref{prop:ca_cp} allows a separate factor choice at each context; it does not guarantee a globally continuous factor generator. Thus present response dimension, continuous contextual value width, and recurrent evidence dimension answer distinct questions even when all three involve matrices of low pointwise rank.

%% file: appendices/14_information_fusion.tex
\section{Information-weighted fusion: a direct bridge to memory}\label{app:fusion}
The operator construction has an exact connection to accumulation of statistical evidence, rather than only to a new response parameterization. The following calculations use the classical linear unbiased fusion model and distinguish its normalized action from the information needed to revise it.

\subsection{Optimal fusion and its unnormalized state}
Suppose $v_i=x+\xi_i$ with centered, mutually uncorrelated errors and covariances $C_i\succ0$. Put $P_i=C_i^{-1}$, $\Pi=\sum_iP_i$, and
\[
 b=\sum_iP_iv_i,\qquad\widehat x=\Pi^{-1}b,\qquad W_i=\Pi^{-1}P_i.
\]
The gains sum to identity. For any other fixed conserving gains $W_i+D_i$, $\sum_iD_i=0$, covariance expansion gives
\begin{equation}\label{eq:ca_blue}
 \operatorname{Cov}\!\left(\sum_i(W_i+D_i)v_i-x\right)
 =\Pi^{-1}+\sum_iD_iC_iD_i^\top.
\end{equation}
Indeed $W_iC_i=\Pi^{-1}$, so both cross terms vanish on summation. This proves optimality in the positive-semidefinite order, without a Gaussian assumption. The statement is about fixed linear unbiased estimators, not all predictors.

In the total-precision metric, the normalized effects are
\[
 E_i=\Pi^{-1/2}P_i\Pi^{-1/2}\succ0,\qquad\sum_iE_i=I,
 \qquad W_i=\Pi^{-1/2}E_i\Pi^{1/2}.
\]
They have finite Complete Softmax scores $S_i=\tau\log E_i$. This is a change of metric: the Euclidean gains $W_i$ need not be symmetric when precisions do not commute. An additional independent sensor with precision $A\succ0$ and observation $y$ updates the unnormalized state by
\begin{equation}\label{eq:ca_fusionupdate}
 (\Pi,b)\longmapsto(\Pi+A,b+Ay),\qquad
 \widehat x^+=(\Pi+A)^{-1}(\Pi\widehat x+Ay).
\end{equation}
The present estimate alone does not specify this update.

\begin{proposition}[A fixed complete fusion action can hide $s_d$ evidence coordinates]\label{prop:ca_fusionhidden}
Fix positive scalar weights $\beta_i$ summing to one. For arbitrary $\Pi\succ0$ take $P_i=\beta_i\Pi$ and zero observed vectors. Every member of this family has the same full fusion action $W_i=\beta_iI$, the same normalized effects $E_i=\beta_iI$, and the same current estimate zero. Any continuous summary that supports every next observation $y$ from one sensor of fixed precision $A=I$ needs at least $s_d=d(d+1)/2$ coordinates. This count is attained on the family by storing $\Pi$.
\end{proposition}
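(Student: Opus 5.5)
The plan mirrors the zero-label ridge argument of Theorem~\ref{thm:ridge}: verify the invariance claims, recover $\Pi$ from native one-sensor responses, and then invoke Lemma~\ref{lem:dimension}.

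\emph{Invariance of the present action.} With $P_i=\beta_i\Pi$ and $\sum_i\beta_i=1$ we have $\sum_iP_i=\Pi$, so $W_i=\Pi^{-1}\beta_i\Pi=\beta_iI$. Likewise $E_i=\Pi^{-1/2}\beta_i\Pi\,\Pi^{-1/2}=\beta_iI$. With zero observed vectors, $b=0$ and hence $\widehat x=0$. So the action, the normalized effects, and the estimate are all independent of $\Pi$.

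\emph{Identification from one continuation.} By \eqref{eq:ca_fusionupdate} with $A=I$ and $b=0$, appending the observation $y$ gives
\[
 \widehat x^+=(\Pi+I)^{-1}y .
\]
Taking $y=e_j$ for $j=1,\ldots,d$ recovers every column of $(\Pi+I)^{-1}$, hence $\Pi$ itself. Any two members with $\Pi\neq\Pi'$ are therefore distinguished by a single allowed continuation, with both histories receiving the same observation. The only point to check is the one flagged in Lemma~\ref{lem:dimension}: a summary that supports the next observation must return equal outputs from equal states.

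\emph{Continuous open chart.} Parametrize the family by $\Pi$ in the open cone of positive definite matrices, an open subset of $\Sym_d\cong\R^{s_d}$. The history data $(P_i,v_i)=(\beta_i\Pi,0)$ depend continuously on $\Pi$. By the identification step, $\Pi\mapsto s(h_\Pi)$ is injective for any exact continuous encoder $s$, so Lemma~\ref{lem:dimension} gives $m\ge s_d$. For attainment, store the $s_d$ independent entries of $\Pi$. The decoder then returns $(\Pi+I)^{-1}(\Pi\cdot 0+y)=(\Pi+I)^{-1}y$, and it returns the constant current estimate $0$, which requires no state.

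I expect no serious obstacle. The only care needed is the bookkeeping point already made in the appendix: the count is a prefix bound on this family, not a global recurrence. The remaining work is checking that the history family really is realized continuously, which here is immediate.
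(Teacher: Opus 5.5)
Your proposal is correct and follows the paper's proof essentially step for step: substitution for the equal-action, equal-effect and zero-estimate claims, then the continued response $(\Pi+I)^{-1}y$ to identify $\Pi$ (your basis probes $y=e_j$ are just an explicit form of the paper's ``equality for all $y$''). The rest also matches: Lemma~\ref{lem:dimension} on the open positive-definite cone for necessity, and \eqref{eq:ca_fusionupdate} with $b=0$ for attainment by storing $\Pi$.
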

\begin{proof}
The equal-action assertions follow by substitution. After the added sensor, the response is $(\Pi+I)^{-1}y$. Equality for all $y$ identifies the invertible matrix $(\Pi+I)^{-1}$ and hence $\Pi$. The positive-definite cone is an open $s_d$-dimensional family, so Lemma~\ref{lem:dimension} proves necessity. Equation~\eqref{eq:ca_fusionupdate} with old $b=0$ proves attainment.
\end{proof}
This is the fusion analogue of the ridge zero-label family. It shows why even retaining the \emph{entire normalized operator action}, rather than only one output vector, can fail to retain the evidence needed for a new observation. It does not claim that sensor precision inputs are the same observation model as ridge keys.

\subsection{Feature algebras and an exact three-source comparison}
A different resource is the feature algebra permitted in a \emph{known} shared action. For centered complex payloads with $\E[v_iv_j^*]=\delta_{ij}I$, the squared prediction risk of $W$ relative to target effects $Y$ is
\[
 \mathcal R_Y(W)=\E\left\|\sum_i(W_i-Y_i)v_i\right\|^2
 =\sum_i\|W_i-Y_i\|_F^2.
\]
For a unital $*$-subalgebra $\mathcal A$, its trace-preserving conditional expectation is an orthogonal projection in the Frobenius inner product and preserves positivity and identity. Therefore it preserves feasibility, and
\begin{equation}\label{eq:ca_algebrarisk}
 \min_{W_i\in\mathcal A,\,W\in\mathcal P_{n,d}}\mathcal R_Y(W)
 =\sum_i\|Y_i-\mathbb E_{\mathcal A}Y_i\|_F^2,
 \qquad W_i^{\rm opt}=\mathbb E_{\mathcal A}Y_i.
\end{equation}
This follows by orthogonal Pythagoras. In finite dimensions the conditional expectation can be obtained by averaging conjugations by unitaries in the commutant; this also verifies positivity and trace preservation.

An exact example distinguishes operator routing, a single common basis, and convex ensembles of common bases. Put
\[
 Z=\begin{pmatrix}1&0\\0&-1\end{pmatrix},\quad
 X=\begin{pmatrix}0&1\\1&0\end{pmatrix},\quad
 H_1=Z,\quad H_{2,3}=(-Z\pm\sqrt3X)/2,
\]
\[
 Y_i(a)=(I+aH_i)/3,
\]
where $0<a<1$. The effects are strictly positive, sum to identity, and have finite scores $\tau\operatorname{arctanh}(a)H_i$. Their exponentials are proportional to $I+aH_i$ because $H_i^2=I$ and $\sum_iH_i=0$.

Let $\mathcal C$ denote the commuting tuples in $\mathcal P_{3,2}$. Then
\begin{align}\label{eq:ca_trine}
 \min_{W\in\mathcal P_{3,2}}\mathcal R_{Y(a)}(W)&=0,\qquad
 \min_{W\in\mathcal C}\mathcal R_{Y(a)}(W)=a^2/3,\\
 \min_{p\in\Delta_2}\mathcal R_{Y(a)}((p_iI)_i)&=2a^2/3,\qquad
 \min_{W\in\operatorname{conv}\mathcal C}\mathcal R_{Y(a)}(W)
 =\tfrac23(a-\sqrt3/2)_+^2.\nonumber
\end{align}
To verify the common-basis value, write $H_i=n_i\cdot\sigma$ in the Pauli representation. The three $n_i$ lie in a plane and satisfy $\sum_i n_in_i^\top=(3/2)P_{\rm plane}$. Pinching into the basis with axis $u$ projects $n_i$ onto $u$. Equation~\eqref{eq:ca_algebrarisk} gives
\[
 \frac{2a^2}{9}\sum_i(1-(n_i\cdot u)^2)
 =\frac{2a^2}{3}-\frac{a^2}{3}\|P_{\rm plane}u\|^2,
\]
minimized at $a^2/3$. Restricting to scalars removes the full traceless part and gives $2a^2/3$.

For the convex-ensemble result, any commuting tuple is
$W_i=p_iP_u+q_i(I-P_u)$ for probability vectors $p,q$. Therefore
\[
 \sum_i\operatorname{Tr}(H_iW_i)
 =\sum_i(p_i-q_i)n_i\cdot u
 \le\max_{i,j}\|n_i-n_j\|=\sqrt3.
\]
The inequality persists under convex averaging. The target has witness value $2a$ and $\sum_i\|H_i\|_F^2=6$, so Cauchy--Schwarz gives the lower bound in \eqref{eq:ca_trine}. To attain it, for each unordered pair $i,j$ define a commuting tuple $T^{ij}$ by
\[
 T_i^{ij}=\tfrac12\left(I+\frac{H_i-H_j}{\sqrt3}\right),\qquad
 T_j^{ij}=I-T_i^{ij},\qquad T_k^{ij}=0\quad(k\notin\{i,j\}).
\]
Since $(H_i-H_j)^2=3I$, these are complementary projectors. Averaging the three tuples gives exactly $Y(\sqrt3/2)$. Mixing that average with $U$ attains every $Y(a)$ below the threshold. Above the threshold, $Y(\sqrt3/2)$ attains the witness lower bound. This proves all equalities.

The projective-simulability threshold is established in the measurement-simulation literature \citep{oszmaniec2017}; it is not claimed as a new threshold here. The exact squared-risk calculation specifies the present routing comparison. Arbitrary signed projected scalar heads are not commuting positive tuples and are not covered by this comparison. The finite scalar compiler in Appendix~\ref{app:execution} explicitly shows their escape. Nor does a risk gap for a known action establish an additional memory lower bound; Proposition~\ref{prop:ca_fusionhidden} is the separate continuation statement.

%% file: appendices/09_scope.tex
\section{Assumptions and reproducible verification}\label{app:scope}
\subsection{Observation domains and counted resources}
The residual-loss classification fixes the regularizer, uses unrestricted real features and labels, and allows arbitrary history length. Every sample in the main residual and logistic theorems has unit weight. Strong convexity supplies existence, uniqueness, and smooth dependence; global smoothness supplies the tangent and differentiation arguments. The bounded logistic construction separately restricts features and labels and proves its own native identification. The generic kernel theorem uses positive weights; the exponential-kernel theorem removes them by a separate key-identification argument.

Exact continuous-state bounds count real coordinates, not bits. They exclude discontinuous infinite-precision encodings but do not restrict decoder complexity. Each lower-bound family occurs at a fixed length, so a known clock does not encode its distinguishing parameters. Bounds on fixed-predictor subfamilies concern prefix summaries there, not necessarily globally valid recurrences of the smaller size. The finite-bit theorem instead allows arbitrary measurable randomized encoders and decoders; its code is formed before independent future probes arrive. Its upper bound is an offline quantizer, not an efficient streaming scheme.

The operator appendices concern a specified positive, identity-summing action. Complete Softmax, observable response dimension, scalar value-channel width, contextual continuity, and recurrent memory have separate definitions. The scalar compiler permits signed value/output maps even though its source probabilities are positive. The trine comparison instead restricts the effective action to commuting positive tuples or their convex hull. The fusion calculation concerns fixed linear unbiased gains and explicitly states its noise moments. None of these results is assigned to an unrestricted Transformer without the required information-path reduction.

\subsection{Symbolic and numerical checks}
The supplementary archive contains two deterministic scripts. \texttt{checks/verify.py} tests the exact ridge collision, the update and gain identities, polynomial tangent and balancing ranks on finite-field witnesses, bounded logistic equal-jet constructions, the random-probe metric, the constants in the conditioning-volume proof, attention evidence recovery, and the shared-operator coherence example. Exact calculations use rational or finite-field arithmetic where indicated; numerical calculations use fixed seeds.

\texttt{checks/verify\_operators.py} checks the Complete Softmax normalization and relative-score identities, common-shift and orthogonal covariance, implicit differentiation, the observable Gram formula, the binary closed form and radial lift, the equilateral support witness, positive scalar compilation, the trine ensemble construction, and the hidden-precision fusion family. The solver raises an error if its declared normalization tolerance is not reached. Its small-matrix tests establish the listed identities in those instances, not convergence or stability at every possible score scale.

Both scripts write machine-readable records and fail on a violated assertion. Dependency versions and commands are supplied in the supplementary README. They generate algebraic test instances, not a benchmark dataset; no neural network is trained. The mathematical arguments in the preceding appendices, rather than a count of passing tests, establish the claims.